\documentclass{article}

\usepackage[english]{babel}
\usepackage[utf8]{inputenc}
\usepackage{enumitem}
\usepackage{microtype}
\usepackage{graphicx}
\usepackage{booktabs} 
\usepackage{amsmath,amsfonts,amssymb}
\usepackage{mathtools}
\usepackage[round]{natbib}
\usepackage{url}           
\usepackage{nicefrac}       
\usepackage[dvipsnames]{xcolor}
\usepackage{multirow}
\usepackage{wrapfig}
\usepackage{setspace}
\usepackage{pifont}

\usepackage{algorithm}
\usepackage{algpseudocode}
\usepackage{titletoc}
\usepackage{capt-of}

\usepackage[backref=page,colorlinks=true,linkcolor=RedOrange, citecolor=NavyBlue]{hyperref}

\usepackage{xurl}
\hypersetup{
pdftitle={Communication-Efficient Private Federated Reinforcement Learning with Post-Quantum Authentication for Vehicular Networks},
pdfauthor={Hojat Allah Salehi, Devrim Aras, Md Jueal Mia, M. Hadi Amini, Imtiaz Karim, Mashrur Chowdhury, Bhavani Thuraisingham}
}

\newtheorem{theorem}{Theorem}
\newtheorem{corollary}{Corollary}
\newtheorem{proposition}{Proposition}

\newtheorem{definition}{Definition}

\newcommand{\qed}{\hfill$\blacksquare$}

\usepackage{array}
\usepackage{textcomp}
\usepackage{verbatim}
\usepackage[caption=false,font=normalsize,labelfont=sf,textfont=sf]{subfig}

\newenvironment{proof}[1][Proof]
{\par\smallskip\noindent\textit{#1.}\enspace}
{\qed\par\smallskip}

\makeatletter
\long\def\@makecaption#1#2{
  \vskip 0.8ex
  \setbox\@tempboxa\hbox{\small {\bf #1:} #2}
  \parindent 1.5em  
  \dimen0=\hsize
  \advance\dimen0 by -3em
  \ifdim \wd\@tempboxa >\dimen0
  \hbox to \hsize{
    \parindent 0em
    \hfil 
    \parbox{\dimen0}{\def\baselinestretch{0.96}\small
      {\bf #1.} #2
    } 
    \hfil}
  \else \hbox to \hsize{\hfil \box\@tempboxa \hfil}
  \fi
}
\makeatother

\title{\bfseries\LARGE
Communication-Efficient Private Federated Reinforcement \\ Learning with Post-Quantum Authentication for\\ 
Vehicular Networks
}

\author{
\normalsize
Hojat Allah Salehi$^{1,2}$, Devrim Aras$^{1,2}$,
Md Jueal Mia$^{1,2}$, M.~Hadi Amini$^{1,2}$\\[0.25em]
\normalsize
Imtiaz Karim$^{3}$, Mashrur Chowdhury$^{4}$,
Bhavani Thuraisingham$^{3}$\\[0.65em]
\parbox{0.95\textwidth}{\centering\small
$^{1}$Knight Foundation School of Computing and Information Sciences,
Florida International University, \\Miami, FL 33199, USA\\[0.15em]
$^{2}$Security, Optimization, and Learning for Interdependent Networks Lab (solid lab),
Miami, FL 33199, USA\\[0.15em]
$^{3}$Department of Computer Science, The University of Texas at Dallas,
Richardson, TX 75080, USA\\[0.15em]
$^{4}$Glenn Department of Civil Engineering, Clemson University, Clemson, SC 29634, USA\\[0.4em]
\texttt{\footnotesize hsalehi@fiu.edu, amini@cs.fiu.edu}
}
}

\date{}
\begin{document}
\maketitle

{\renewcommand{\thefootnote}{}\begin{NoHyper}\footnotetext{This work has been submitted to the IEEE for possible publication. Copyright may be transferred without notice, after which this version may no longer be accessible.}\end{NoHyper}}

\begin{abstract}
Federated reinforcement learning supports cooperative vehicular routing without centralizing trajectory data, but exchanging model updates may reveal private information and incur substantial communication overhead over intermittent vehicle-to-infrastructure links, particularly when post-quantum authentication is required. We study correlated private quantization, which uses complementary shared randomness between paired vehicles to reduce aggregation mean squared error while preserving the marginal parameter-level local differential privacy guarantee and one-bit communication cost of independent quantization. A tunable extension mixes shared and independent randomness to bound privacy leakage under paired-output observations and shared-randomness side information. For this extension, R\'enyi differential privacy accounting establishes vector-level local differential privacy for one model upload even when pairings are revealed. Under matched vector-level privacy and identical local and global parameters, the extension yields 16.5\% lower analytical aggregation mean squared error than independent quantization. We evaluate cooperative routing by coupling microscopic traffic simulation with packet-level simulation of post-quantum-authenticated vehicle-to-infrastructure communication. Compared with full-precision updates, one-bit uploads reduce authenticated uplink size by 92.4\% and increase the mean number of verified updates aggregated per round from 19.6 to 26.5. Under mobility-constrained communication at a parameter-level privacy budget of 1, correlated private quantization reduces mean travel time by 14.3\% relative to independent quantization at the same budget and communication cost. At a parameter-level privacy budget of 0.3, a vehicle-origin inference attack trained on independent simulations reaches 30\% accuracy with correlated private quantization and 22\% with independent quantization after 100 rounds, versus 91\% without privacy and 10\% for random guessing.
\par\smallskip
\noindent\textbf{Code:}
\url{https://github.com/solidlabnetwork/cpq-vehicular-frl}
\end{abstract}

\section{Introduction}
Connected and autonomous vehicles increasingly rely on sensing and vehicle-to-everything (V2X) communication to support cooperative transportation services. Mobility data are distributed across vehicles and roadside infrastructure, making federated learning (FL) a suitable approach for collaborative model training without centralizing raw sensor and trajectory data. However, repeated exchanges of model updates over bandwidth-constrained vehicular links impose substantial communication overhead.

Federated reinforcement learning (FRL) extends FL to sequential decision-making, allowing distributed agents to learn policies through local interactions and periodically aggregate model updates~\citep{fan2021fault, gao2024federated, jin2022federated, yuan2023federated}. In vehicular networks, FRL has been investigated for traffic signal control, emergency vehicle routing, travel-time minimization, and communication resource management~\citep{bao2023scalable, ye2021fedlight, su2023emvlight, mamond2025fldqn, zhao2024federated, wang2024optimizing, zhang2025federated}. These applications motivate examining how privacy, authentication, and communication constraints affect collaborative policy learning in vehicular networks.

Although keeping raw data local limits direct data exposure, federated learning does not by itself prevent information leakage through transmitted model updates. Prior work has demonstrated that shared gradients and model parameters can reveal information about local training data through attacks such as membership inference and model inversion~\citep{shokri2017membership, nasr2019comprehensive, fredrikson2015model}. Differential privacy provides a principled approach for limiting such leakage by randomizing the information released during training~\citep{dwork2014algorithmic, abadi2016deep, sun2020ldp}. In vehicular FRL, however, privacy mechanisms must be considered together with the communication cost of repeatedly exchanging model updates over bandwidth-constrained wireless links. This motivates mechanisms that provide formal privacy guarantees while reducing the size of transmitted model updates.

Model updates must also be authenticated before aggregation so that the server can verify their origin and integrity. Conventional public-key signature schemes based on integer factorization or discrete logarithms are vulnerable to sufficiently capable quantum computers through Shor's algorithm~\citep{shor1999polynomial}, motivating the transition toward post-quantum authentication. NIST has standardized ML-DSA and SLH-DSA~\citep{nist_fips204,nist_fips205} and selected Falcon for standardization as an additional post-quantum signature scheme~\citep{alagic2022status}. These schemes provide authentication and integrity protection for exchanged federated model updates, but differ in signature size and computational cost, which can affect packet fragmentation, retransmissions, and delivery latency over V2X links. Post-quantum authentication is therefore both a security and a communication-design consideration in vehicular FRL.

This work investigates privacy-preserving and communication-efficient vehicular FRL with post-quantum-authenticated model exchange. We investigate correlated private quantization (CPQ) for cooperative routing and compare it with independent stochastic binary quantization~\citep{sun2020ldp} and full-precision Gaussian and Laplace mechanisms. We also introduce CPQ-$\lambda$, which combines shared and independent local randomness to establish privacy guarantees under stronger server observations, including a vector-level guarantee for one model upload. We first evaluate routing performance across privacy budgets under ideal model exchange. We then couple Simulation of Urban Mobility (SUMO) trajectories with packet-level ns-3 simulations of IEEE~802.11p vehicle-to-infrastructure (V2I) communication~\citep{riley2010ns3}. The primary mobility-coupled evaluation uses ML-DSA-44 authentication, with successful model delivery and verification determining participation in federated aggregation. Additional experiments compare ML-DSA-44, Falcon-512, and SLH-DSA-SHA2-128f with full-precision and CPQ updates, supported by cryptographic microbenchmarks. We also evaluate whether an honest-but-curious roadside unit (RSU) can infer vehicles' starting road segments from received model updates across multiple rounds. Figure~\ref{fig:system_architecture} illustrates the integrated framework.

\begin{figure}[t]
\centering
\includegraphics[width=0.98\textwidth]{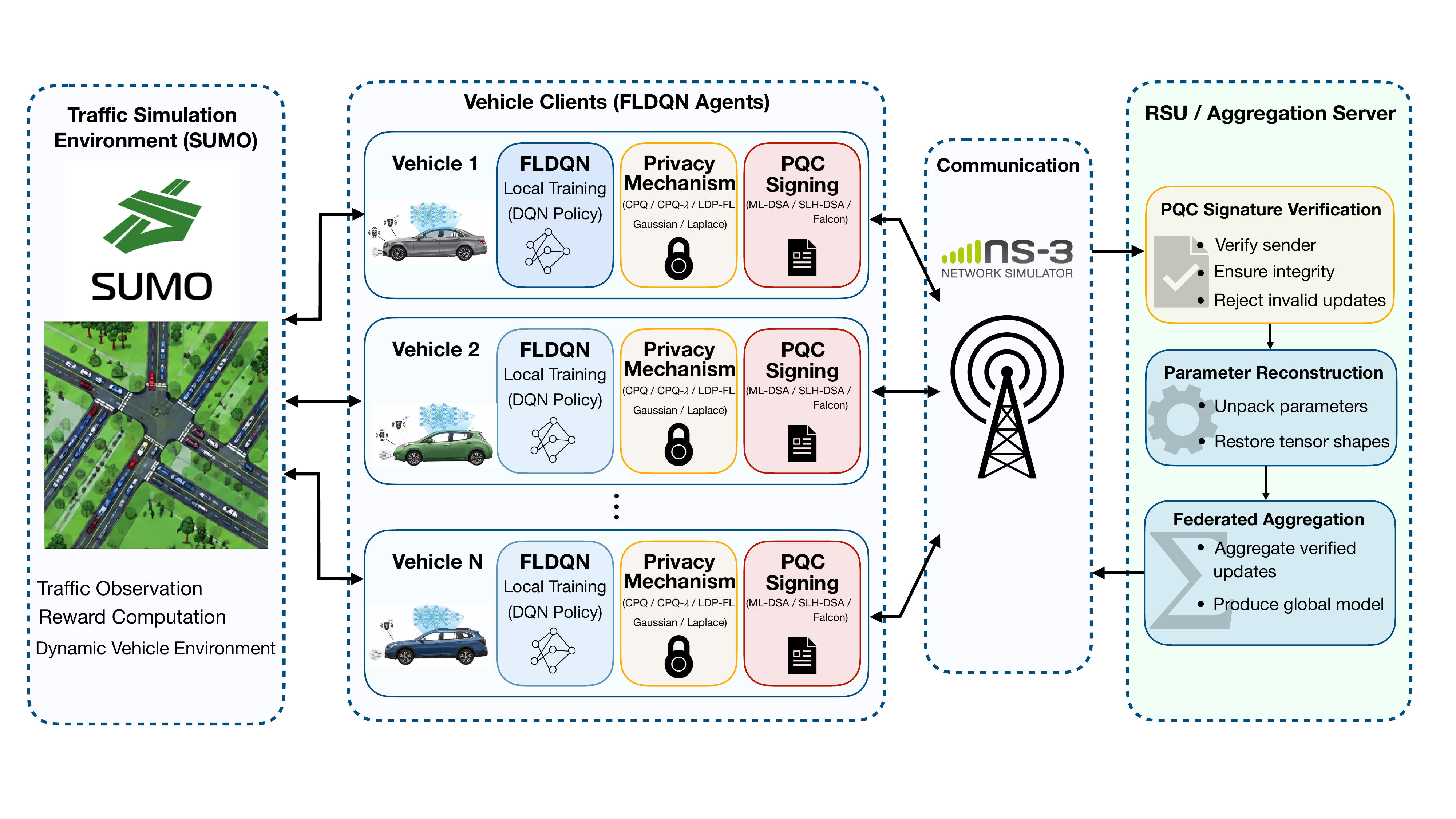}
\caption{Architecture of the proposed privacy-preserving and post-quantum-authenticated vehicular FRL framework. SUMO trajectories drive ns-3 V2I communication, with signed full-precision global-model downlinks and privacy-protected, signed model-update uplinks.}
\label{fig:system_architecture}
\end{figure}

\noindent Our main contributions are:

\begin{itemize}

    \item \textbf{Mobility-Coupled Evaluation of Authenticated Vehicular FRL.} We develop an integrated SUMO and ns-3 evaluation framework that couples cooperative vehicular FRL with post-quantum-authenticated IEEE~802.11p V2I communication. By incorporating model delivery and verification outcomes into federated aggregation, the framework enables joint evaluation of communication reliability, effective participation, and routing performance under intermittent connectivity.

    \item \textbf{Communication-Efficient Private Quantization and Privacy Guarantees.} We characterize CPQ for vehicular FRL, demonstrating reduced aggregation mean squared error (MSE) while preserving one-bit communication and the same $\epsilon_p$-PLDP guarantee as independent stochastic binary quantization. We further introduce CPQ-$\lambda$ and establish privacy guarantees under stronger observation models, including $(\epsilon_v,\delta_v)$ vector-level local differential privacy for one complete model upload. Under matched vector-level privacy, CPQ-$\lambda$ also demonstrates a theoretical aggregation-MSE advantage over independent quantization.

    \item \textbf{Post-Quantum Authentication and V2I Communication Trade-Offs.} We quantify how model-update representation and post-quantum signature selection affect authenticated V2I communication through cryptographic benchmarks and mobility-coupled IEEE~802.11p simulations. The results show that one-bit uploads reduce message size and transfer latency versus full-precision updates, while signature overhead influences delivery reliability when model payloads are highly compressed.
    
    \item \textbf{Vehicle-Origin Inference Attack Evaluation.} We evaluate an inference attack in which an adversary attempts to identify vehicles' starting road segments from observed model updates across federated rounds. Under two attack settings with mobility-constrained V2I communication, CPQ substantially reduces attack accuracy compared with unprotected model exchange.

\end{itemize}

The paper is organized as follows. Section~\ref{sec:relatedwork} reviews related work. Section~\ref{sec:dp} presents the private quantization mechanisms and their privacy and aggregation-error analyses. Section~\ref{sec:pqc} describes the post-quantum signature schemes, while Section~\ref{sec:frl} introduces the vehicular FRL formulation. Section~\ref{sec:system} details system design and authenticated model exchange. Section~\ref{sec:experiments} presents the experimental results, and Section~\ref{sec:conclusion} concludes the paper.

\section{Related Work}
\label{sec:relatedwork}
\noindent\textbf{Communication-efficient and privacy-preserving FL.}
Prior research has investigated communication-efficient and privacy-aware mechanisms for distributed estimation and learning, including locally private estimation and telemetry collection~\citep{duchi2018minimax,ding2017collecting}, binomial randomization with secure aggregation~\citep{chen2022poisson}, randomized quantization with configurable output levels~\citep{youn2023randomized}, and minimum-variance unbiased mechanisms supporting different communication budgets~\citep{chaudhuri2022privacy}. Related sign-based methods address private and Byzantine-resilient gradient compression~\citep{xiang2023distributed} and communication-efficient federated averaging~\citep{tang2024zsignfedavg}. LDP-FL~\citep{sun2020ldp} provides independent stochastic binary quantization with one bit per parameter and an $\epsilon_p$-PLDP guarantee. CorBin-FL~\citep{salehi2024corbin} uses complementary shared randomness across clients to reduce aggregation error while preserving the same marginal PLDP guarantee.

\noindent\textbf{Privacy, security, and FRL in vehicular networks.}
Vehicular FL research has investigated privacy-aware knowledge trading using personalized differential privacy and contract theory~\citep{zhang2024differential}, personalized learning for autonomous driving using correlated differential privacy~\citep{tian2024personalized}, and authenticated secure aggregation~\citep{byun2024secure}.  For sequential decision-making, federated deep reinforcement learning has been studied for task offloading and resource allocation in vehicular edge computing~\citep{zhao2024federated}, while federated graph-based multi-agent reinforcement learning has been applied to age-of-information optimization~\citep{wang2024optimizing}. More directly related to cooperative routing, FLDQN~\citep{mamond2025fldqn} combines federated deep Q-learning with SUMO-based traffic simulation to minimize vehicle travel time and provides the routing and learning foundation adopted in this study.

\noindent\textbf{Vehicular traffic and communication simulation.}
Prior vehicular studies have coupled microscopic traffic and wireless network simulations to capture mobility-dependent communication. Veins integrates SUMO and OMNeT++ for bidirectionally coupled traffic and network simulation~\citep{sommer2011bidirectionally}, while iTETRIS combines SUMO and ns-3 to evaluate cooperative intelligent transportation applications~\citep{rondinone2013itetris}.

\noindent\textbf{Research gap and positioning.}
Prior studies have established important foundations for privacy-preserving quantization, secure vehicular FL, cooperative FRL routing, and mobility-aware traffic--network simulation. Building on these foundations, two complementary research directions motivate this work. The first concerns improving aggregation accuracy through correlated one-bit quantization in vehicular FRL while maintaining formal parameter-level privacy and low communication cost. An additional theoretical question is how such mechanisms can provide privacy guarantees under stronger server observations while retaining an aggregation-accuracy advantage under matched vector-level privacy. We investigate CPQ for the first objective and introduce CPQ-$\lambda$ to extend the privacy analysis, including a vector-level guarantee for one complete model upload. The second direction concerns the interaction between privacy-preserving model exchange and post-quantum-authenticated V2I communication. Under intermittent connectivity, model-update size and authentication overhead influence successful communication and verified participation in federated aggregation, which can ultimately affect routing performance. We examine these interactions through mobility-coupled, packet-level evaluation of vehicular FRL and complement the formal privacy analysis with vehicle-origin inference attack experiments.

\section{Differential Privacy for Federated Learning}
\label{sec:dp}

This section presents the privacy mechanisms used for local model exchange in vehicular FRL and analyzes their privacy and aggregation-error properties. We first establish a common parameter-level local differential privacy (PLDP) formulation and describe the independent one-bit LDP-FL~\citep{sun2020ldp} and Gaussian and Laplace additive-noise baselines~\citep{dwork2014algorithmic}. We then present CPQ and establish its privacy and aggregation-error properties. Finally, we introduce CPQ-$\lambda$ and analyze its guarantees under stronger observation models, including vector-level privacy for one complete model upload and aggregation-MSE comparisons under matched vector-level privacy.

\noindent\textbf{Common setup.}
At a given communication round, client $\mathcal{C}_i$ holds a locally trained parameter vector $\boldsymbol{\theta}_i=(\theta_{i,j})_{j\in[m]}\in\mathbb{R}^m$, where $m$ is the number of model parameters and $j$ indexes their coordinates. For each coordinate $j$, all clients use a common public clipping interval $\mathcal{I}_j=[c_j-r_j,c_j+r_j]$, where $c_j$ and $r_j>0$ denote the clipping center and radius, respectively. Clipping $\theta_{i,j}$ to $\mathcal{I}_j$ yields $w_{i,j}$, and $\mathbf{w}_i=(w_{i,j})_{j\in[m]}$ denotes the clipped parameter vector. In our implementation, $c_j$ corresponds to the current global-model parameter, and all coordinates share a common radius $r_j=r$. For a clipped input $w\in\mathcal{I}_j$, let $\overline{W}_{i,j}(w)$ denote the randomized parameter output, after reconstruction where applicable. We use the following parameter-level local differential privacy definition to characterize these randomized outputs.

\begin{definition}[Parameter-Level Local Differential Privacy (PLDP)]
\label{def:pldp}
For $\epsilon_p>0$ and $0\leq\delta_p<1$, a randomized parameter mechanism satisfies $(\epsilon_p,\delta_p)$-parameter-level local differential privacy (PLDP) if, for every client $i$, coordinate $j$, inputs $w,w'\in\mathcal{I}_j$, and measurable output set $\mathcal{T}\subseteq\mathbb{R}$,
\begin{equation}
\begin{aligned}
\Pr\!\left(\overline{W}_{i,j}(w)\in\mathcal{T}\right)
&\leq e^{\epsilon_p}
\Pr\!\left(\overline{W}_{i,j}(w')\in\mathcal{T}\right) +\delta_p.
\end{aligned}
\label{eq:pldp}
\end{equation}
Pure $\epsilon_p$-PLDP corresponds to $\delta_p=0$.
\end{definition}

\subsection{Independent Stochastic Binary Quantization (LDP-FL)}
\label{subsec:ldpfl}

LDP-FL~\citep{sun2020ldp} provides the independent one-bit quantization baseline for CPQ and satisfies pure $\epsilon_p$-PLDP for each clipped parameter. To simplify the scalar quantization rule, we fix a client $i$ and coordinate $j$ and write $(w,c,r)=(w_{i,j},c_j,r_j)$. For $\epsilon_p>0$, define
\begin{equation}
  \alpha(\epsilon_p)
  = \frac{e^{\epsilon_p}+1}{e^{\epsilon_p}-1},
  \qquad
  q(w)
  = \frac{1}{2}+\frac{w-c}{2r\,\alpha(\epsilon_p)}.
  \label{eq:alpha}
\end{equation}
Using independent local randomness, the client draws $B\sim\mathrm{Bernoulli}(q(w))$ and transmits the resulting one-bit symbol. With $U=2B-1\in\{-1,+1\}$, the server reconstructs
\begin{equation}
  \overline{W}
  =c+Ur\,\alpha(\epsilon_p),
  \qquad
  \overline{W}\in
  \bigl\{c-r\alpha(\epsilon_p),\,c+r\alpha(\epsilon_p)\bigr\}.
  \label{eq:ldpfl_output}
\end{equation}
The reconstruction is unbiased since $\mathbb{E}[\overline{W}]=w$.

For a fixed coordinate $j$, consider $n$ clients with clipped parameters $w_{i,j}\in\mathcal{I}_j$ and independent quantization randomness. Let $\overline{W}_{i,j}$ denote the reconstructed parameter of client $i$, and define the reconstructed aggregate $\overline{W}_{g,j}=\frac{1}{n}\sum_{i=1}^{n}\overline{W}_{i,j}$ and clipped-parameter mean $\bar{w}_j=\frac{1}{n}\sum_{i=1}^{n}w_{i,j}$. The aggregate is unbiased, and its mean squared error is
\begin{equation}
\begin{aligned}
  \mathbb{E}\!\left[
    (\overline{W}_{g,j}-\bar{w}_j)^2
  \right]
  &=\frac{1}{n^2}\sum_{i=1}^{n}
    \left[
      r_j^2\alpha^2(\epsilon_p)
      -(w_{i,j}-c_j)^2
    \right]\leq\frac{r_j^2\alpha^2(\epsilon_p)}{n}.
\end{aligned}
\label{eq:ldpfl_mse}
\end{equation}
The equality follows from the independence of client randomization, which eliminates cross-client error covariance. Moreover, among unbiased binary quantizers with input-independent reconstruction levels satisfying $\epsilon_p$-PLDP on the clipping interval, LDP-FL minimizes scalar MSE for every admissible input~\citep[Proposition~1]{salehi2024corbin}. Thus, its aggregation-error bound provides the reference for evaluating the benefits of correlated quantization.

Certain quantization and sign-based mechanisms admit one-bit scalar configurations with the same response distribution and reconstruction levels as LDP-FL. The corresponding parameter mappings and conditions for equivalent aggregation behavior are detailed in~Appendix~\ref{app:one_bit_equivalence}.

\subsection{Gaussian and Laplace Mechanisms}
\label{subsec:gauss_lap}

Gaussian and Laplace mechanisms provide additive-noise baselines for comparison with one-bit private quantization. Both perturb parameters after they have been clipped to the public intervals defined in the common setup.

\noindent\textbf{Gaussian mechanism.}
Each client adds independent Gaussian noise to its clipped parameters according to
\begin{equation*}
  \overline{W}_{i,j}=w_{i,j}+Z_{i,j},
  \qquad Z_{i,j}\sim\mathcal{N}(0,\sigma_j^2).
\end{equation*}
For coordinate $j$, the sensitivity is $\Delta_j=2r_j$. We calibrate $\sigma_j$ using the analytic Gaussian mechanism~\citep{balle2018improving}, selecting the smallest positive value satisfying
\begin{equation*}
\begin{aligned}
  &\Phi\!\left(
    \frac{\Delta_j}{2\sigma_j}
    -\frac{\epsilon_p\sigma_j}{\Delta_j}
  \right)-e^{\epsilon_p}\Phi\!\left(
    -\frac{\Delta_j}{2\sigma_j}
    -\frac{\epsilon_p\sigma_j}{\Delta_j}
  \right)
  \leq\delta_p,
\end{aligned}
\end{equation*}
where $\Phi$ denotes the standard normal cumulative distribution function. This calibration ensures $(\epsilon_p,\delta_p)$-PLDP for each parameter.

\noindent\textbf{Laplace mechanism.}
For pure $\epsilon_p$-PLDP, each client adds independent Laplace noise with scale $\Delta_j/\epsilon_p$, where $\Delta_j=2r_j$~\citep{dwork2014algorithmic}.
\begin{equation*}
  \overline{W}_{i,j}
  =w_{i,j}+\mathrm{Lap}(0,2r_j/\epsilon_p).
\end{equation*}

\subsection{Correlated Private Quantizer (CPQ)}
\label{subsec:cpq}

CPQ coordinates the one-bit quantization of paired vehicle clients using complementary shared randomness. It preserves the marginal quantization distribution of LDP-FL, maintaining unbiasedness and the same $\epsilon_p$-PLDP guarantee for each client, while reducing pairwise aggregation mean squared error (MSE) when both members of a pair contribute. We first describe the pairing and quantization procedures, then analyze their privacy and aggregation-error properties at the pair and population levels.

To realize this coupling, the vehicles establish a shared group secret and agree on an ordered participant list before training. For each coordinate $j$ and communication round $t$, a secret-keyed permutation of this list determines the client pairs. Each pair derives a common pseudorandom value $V$, with one client using $V$ and the other using $1-V$ for quantization. Pair assignments and random values are refreshed across coordinates and rounds without exchanging pairing information during training. In the implemented protocol, the RSU does not possess the group secret or the pairing assignments. The theoretical analysis models the shared values as independent $\mathrm{Uniform}[0,1]$ variables across pairs, while the group-establishment procedure and finite-precision implementation are detailed in Appendix~\ref{app:hidden-pairing} and Appendix~\ref{app:finite_precision}.

For each clipped parameter $w_{i,j}$, client $\mathcal{C}_i$ uses its assigned complementary random value $V_{i,j,t}$ in communication round $t$. Using the same quantization probability as LDP-FL, $q_{i,j}=\frac{1}{2}+\frac{w_{i,j}-c_j}{2r_j\alpha(\epsilon_p)}$, the client generates the one-bit output
\begin{equation}
  b_{i,j} =\mathbf{1}_{\{V_{i,j,t}<q_{i,j}\}}\in\{0,1\}.
  \label{eq:cpq_bit}
\end{equation}
The server reconstructs each parameter using the same two output levels as LDP-FL, $c_j\pm r_j\alpha(\epsilon_p)$. For $n$ client updates included in aggregation, indexed by $i\in[n]$, the reconstructed mean is
\begin{equation}
  \overline{W}_{g,j} =c_j+\frac{r_j\alpha(\epsilon_p)}{n} \sum_{i=1}^{n}(2b_{i,j}-1).
  \label{eq:cpq_agg}
\end{equation}
This reconstruction requires only the received bits and public parameters $(c_j,r_j,\epsilon_p)$, without revealing the pairing assignments. Algorithm~\ref{alg:cpq} summarizes the procedure. The complementary coupling preserves each client's quantization distribution while changing the dependence between paired outputs, which determines their aggregation error.

\begin{algorithm}[t]
\caption{Correlated Private Quantizer (CPQ)}
\label{alg:cpq}

\textbf{Inputs:} $n$ participating clients, $m$ model parameters, communication round $t$, privacy budget $\epsilon_p$ ($\alpha = \frac{e^{\epsilon_p}+1}{e^{\epsilon_p}-1}$), clipping parameters $(\mathbf{c},\mathbf{r})$, and global model $\mathbf{W}_{\mathrm{global}}^{(t)}$.

\textbf{Setup:} Each vehicle holds
$(\mathcal R,e,K_G)$, established before training
using Algorithm~\ref{alg:hidden-setup}.

\begin{algorithmic}[1]

\For{each client $\mathcal C_i$, in parallel}
    \State Receive $\mathbf{W}_{\mathrm{global}}^{(t)},\mathbf{c},\mathbf{r}$ from the server
    \State Perform local reinforcement learning to obtain $\boldsymbol{\theta}_i$

    \For{each coordinate $j\in[m]$}

        \State $w_{i,j}\gets \min\{c_j+r_j,\max\{c_j-r_j,\theta_{i,j}\}\}$

        \State $q_{i,j}\gets \frac{1}{2}+ \frac{w_{i,j}-c_j}{2r_j\alpha}$

        \State $V_{i,j,t}\gets \textsc{HiddenPairValue} (K_G,\mathcal R,e,i,j,t)$

        \State $b_{i,j}\gets \mathbf{1}_{\{V_{i,j,t}<q_{i,j}\}}$

    \EndFor

    \State Transmit $\mathbf{b}_i=(b_{i,j})_{j\in[m]}$ to the server

\EndFor

\State \textbf{Server aggregation}

\For{each coordinate $j\in[m]$}

    \State $\overline{W}_{g,j}\gets c_j+\dfrac{r_j\alpha}{n} \displaystyle\sum_{i=1}^{n}(2b_{i,j}-1)$

\EndFor

\end{algorithmic}
\end{algorithm}

\begin{theorem}[PLDP, unbiasedness, and pairwise MSE optimality of CPQ]
\label{thm:cpq}
Let $c\in\mathbb{R}$, $r>0$, and $0<\epsilon_p<\infty$. Consider two clipped inputs $w_1,w_2\in[c-r,c+r]$ and their CPQ reconstructions $Q_1,Q_2$, obtained using complementary random values $V$ and $1-V$, where $V\sim\mathrm{Uniform}[0,1]$. Write $\alpha=\alpha(\epsilon_p)$ and define $x_k=(w_k-c)/(r\alpha)$, $A=(w_1+w_2)/2$, and $\widehat{A}=(Q_1+Q_2)/2$.

\begin{enumerate}[label=(\roman*)]
\item \textbf{PLDP and unbiasedness.}
Each $Q_k$ follows the same output distribution as the LDP-FL quantizer and satisfies pure $\epsilon_p$-PLDP according to Definition~\ref{def:pldp}. Moreover, $\mathbb{E}[Q_k]=w_k$, so $\mathbb{E}[\widehat{A}]=A$. Applied coordinatewise, CPQ also provides an unbiased reconstructed aggregate for any fixed set of clients.

\item \textbf{Pairwise MSE optimality.}
Among all couplings of individually unbiased binary quantizers with reconstruction levels $c\pm r\alpha$ satisfying $\epsilon_p$-PLDP, CPQ minimizes the pair-average MSE for every admissible input pair, attaining
\begin{equation}
\begin{aligned}
m^*(w_1,w_2)
=\frac{r^2\alpha^2}{2}
\left[
|x_1+x_2|
-\frac{(x_1+x_2)^2}{2}
\right].
\end{aligned}
\label{eq:cpq_mse_opt}
\end{equation}
This MSE is strictly smaller than that of independent quantization with the same output distributions. 
\end{enumerate}
\end{theorem}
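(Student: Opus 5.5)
Part (i) is a direct marginal argument. Since $V\sim\mathrm{Uniform}[0,1]$, the variable $1-V$ is also uniform, so each bit $b_k=\mathbf{1}_{\{V_k<q_k\}}$ (with $V_1=V$, $V_2=1-V$) is $\mathrm{Bernoulli}(q(w_k))$. This is exactly the LDP-FL response distribution of Section~\ref{subsec:ldpfl}. The reconstruction levels $c\pm r\alpha$ are input-independent, so $Q_k$ has the same law as the LDP-FL output. For pure $\epsilon_p$-PLDP it then suffices to bound the likelihood ratio on the two atoms: $q(w)\in[\tfrac{1}{2}-\tfrac{1}{2\alpha},\tfrac{1}{2}+\tfrac{1}{2\alpha}]=[\tfrac{1}{e^{\epsilon_p}+1},\tfrac{e^{\epsilon_p}}{e^{\epsilon_p}+1}]$, so every ratio $q(w)/q(w')$ and $(1-q(w))/(1-q(w'))$ is at most $e^{\epsilon_p}$. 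Unbiasedness is $\mathbb{E}[Q_k]=c+r\alpha(2q_k-1)=w_k$. Linearity then gives $\mathbb{E}[\widehat A]=A$, and coordinatewise this extends to any fixed set of clients, whatever the dependence across pairs.

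For part (ii), write $Q_k=c+r\alpha U_k$ with $U_k\in\{\pm1\}$ and $\mathbb{E}U_k=x_k$. Then
\begin{equation*}
\mathbb{E}[(\widehat A-A)^2]=\tfrac{r^2\alpha^2}{4}\bigl[(1-x_1^2)+(1-x_2^2)+2\,\mathrm{Cov}(U_1,U_2)\bigr].
\end{equation*}
Any admissible coupling fixes the marginals, because unbiasedness with these levels forces $\Pr(U_k=1)=q_k$. Minimizing the MSE is therefore equivalent to minimizing $\mathbb{E}[U_1U_2]$ over couplings of $\mathrm{Bernoulli}(q_1)$ and $\mathrm{Bernoulli}(q_2)$. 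I would invoke the Fr\'echet--Hoeffding bounds: $\Pr(U_1=U_2=1)\ge\max(0,q_1+q_2-1)$, and the lower bound is attained by the antitone coupling. The CPQ bits, $\mathbf{1}_{\{V<q_1\}}$ and $\mathbf{1}_{\{1-V<q_2\}}$, are exactly this coupling. Their joint success probability is $\Pr(1-q_2<V<q_1)=\max(0,q_1+q_2-1)$, which equals the Fr\'echet lower bound. This gives optimality for every input pair.

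The remaining step is to compute the minimum. Set $s=x_1+x_2$, so that $q_1+q_2-1=s/2$. Using $\mathbb{E}[U_1U_2]=1-2\Pr(U_1\ne U_2)$ and the fact that under the antitone coupling $\Pr(U_1\neq U_2)=1-|q_1+q_2-1|$, one gets $\mathbb{E}[U_1U_2]=|s|-1$. Substituting into the MSE gives
\begin{equation*}
\tfrac{r^2\alpha^2}{4}\bigl[2-x_1^2-x_2^2+2(|s|-1)-2x_1x_2\bigr]=\tfrac{r^2\alpha^2}{4}\bigl[2|s|-s^2\bigr],
\end{equation*}
which matches \eqref{eq:cpq_mse_opt}. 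For the strict comparison, independent quantization has $\mathrm{Cov}=0$. The improvement equals $-\tfrac{r^2\alpha^2}{2}\mathrm{Cov}_{\mathrm{CPQ}}$, so it suffices to show the covariance is strictly negative. The Fr\'echet bound $\max(0,q_1+q_2-1)$ is strictly below $q_1q_2$ whenever $q_1,q_2\in(0,1)$. This holds here because $\alpha>1$ for finite $\epsilon_p$ gives $|x_k|\le1/\alpha<1$.

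The main obstacle is bookkeeping the absolute value in the case split $q_1+q_2\gtrless 1$, and making sure "every admissible coupling" is correctly reduced to fixed marginals. Both are handled by the Fr\'echet argument above.
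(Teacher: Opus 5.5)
Your proposal is correct and follows the paper's proof essentially step for step. It uses the marginal-uniformity argument for (i), the $U_k$-expansion that reduces the pair-average MSE to the cross term $\mathbb{E}[U_1U_2]$, the Fr\'echet--Hoeffding lower bound attained by the complementary coupling, and strictness from $|x_k|\le 1/\alpha<1$. The only differences are cosmetic: you phrase the Fr\'echet bound through $\Pr(U_1=U_2=1)$, implicitly using $\mathbb{E}[U_1U_2]=1-2(q_1+q_2)+4\Pr(U_1=U_2=1)$ for fixed marginals; and your strictness check $\max(0,q_1+q_2-1)<q_1q_2$ is the paper's factorization $1+x_1x_2-|x_1+x_2|\in\{(1-x_1)(1-x_2),(1+x_1)(1+x_2)\}$ rewritten in the $q_k$ variables.
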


\begin{proof}[Proof sketch]
Since both $V$ and $1-V$ are uniformly distributed, each $Q_k$ follows the LDP-FL output distribution and therefore satisfies $\epsilon_p$-PLDP. Furthermore,
$\mathbb{E}[Q_k]=c+r\alpha(2q(w_k)-1)=w_k$,
establishing unbiasedness.

For the pairwise MSE, let $U_k=(Q_k-c)/(r\alpha)\in\{-1,+1\}$, with $\mathbb{E}[U_k]=x_k$. Expanding the pair-average error gives
\begin{equation*}
\begin{aligned}
\mathbb{E}[(\widehat{A}-A)^2]
&=\frac{r^2\alpha^2}{4}
\bigl[2+2\mathbb{E}[U_1U_2] -(x_1+x_2)^2\bigr].
\end{aligned}
\end{equation*}
The individual output distributions fix all terms except $\mathbb{E}[U_1U_2]$. For any coupling of these binary outputs, the Fr\'echet lower bound gives
$\mathbb{E}[U_1U_2]\geq |x_1+x_2|-1$.
The complementary random values used by CPQ attain this bound, yielding~\eqref{eq:cpq_mse_opt}.

Under independent quantization, $\mathbb{E}[U_1U_2]=x_1x_2$, giving
\begin{equation*}
\begin{aligned}
m_{\mathrm{ind}}-m^*
=\frac{r^2\alpha^2}{2}
\left[1+x_1x_2-|x_1+x_2|\right]>0.
\end{aligned}
\end{equation*}
The inequality is strict because $|x_k|\leq1/\alpha<1$. The complete derivation is provided in Appendix~\ref{app:proof_cpq}.
\end{proof}

The pairwise MSE result extends to population-level aggregation when all included clients form complete pairs and distinct pairs use independent shared randomness. Under these conditions, the aggregation error can be bounded using the pairwise result.

\begin{corollary}[Population aggregation MSE of CPQ]
\label{cor:cpq_pop}
For a fixed coordinate $j$, suppose the $n$ client updates included in aggregation form $n/2$ complete pairs, with independent shared randomness across distinct pairs. All clients use the same public clipping parameters $(c_j,r_j)$. Let $\bar{w}_j=\frac{1}{n}\sum_{i=1}^{n}w_{i,j}$ denote the mean clipped parameter. Then
\begin{equation}
\begin{aligned}
  \mathbb{E}\!\left[(\overline{W}_{g,j}-\bar{w}_j)^2\right]
  &\leq \frac{r_j^2}{n}\,\kappa(\epsilon_p),\\
  \kappa(\epsilon_p)
  &=
  \begin{cases}
    \alpha^2(\epsilon_p)/2,
      & \alpha(\epsilon_p)\leq 2,\\
    2\bigl(\alpha(\epsilon_p)-1\bigr),
      & \alpha(\epsilon_p)>2.
  \end{cases}
\end{aligned}
\label{eq:cpq_pop}
\end{equation}
For every finite $\epsilon_p>0$, $\kappa(\epsilon_p)<\alpha^2(\epsilon_p)$, yielding a strictly tighter aggregation-MSE bound than independent LDP-FL in~\eqref{eq:ldpfl_mse}. As $\epsilon_p\to0$, the CPQ bound grows linearly in $\alpha(\epsilon_p)$, compared with quadratic growth for LDP-FL.
\end{corollary}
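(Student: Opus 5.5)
Decompose the aggregate error into pair contributions and use independence across pairs to kill cross terms. Index the $n/2$ pairs by $p$, with members $(p,1),(p,2)$. Let $A_p=(w_{p,1}+w_{p,2})/2$ and $\widehat{A}_p=(Q_{p,1}+Q_{p,2})/2$. Then
\begin{equation*}
\overline{W}_{g,j}-\bar{w}_j=\frac{2}{n}\sum_{p}(\widehat{A}_p-A_p).
\end{equation*}
By Theorem~\ref{thm:cpq}(i), each $\widehat{A}_p-A_p$ has mean zero. Distinct pairs use independent shared randomness, so the cross-covariances vanish and
\begin{equation*}
\mathbb{E}\bigl[(\overline{W}_{g,j}-\bar{w}_j)^2\bigr]=\frac{4}{n^2}\sum_p m^*(w_{p,1},w_{p,2}).
\end{equation*}
There are $n/2$ terms, so it suffices to show
\begin{equation*}
\sup_{w_1,w_2} m^*(w_1,w_2)\le \frac{r^2\kappa(\epsilon_p)}{2}.
\end{equation*}
The total is then at most $\frac{4}{n^2}\cdot\frac{n}{2}\cdot\frac{r^2\kappa}{2}=\frac{r^2\kappa}{n}$.

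\textbf{Maximizing $m^*$.} By \eqref{eq:cpq_mse_opt}, $m^*=\frac{r^2\alpha^2}{2}g(s)$ with $s=|x_1+x_2|$ and $g(s)=s-s^2/2$. Since $|x_k|\le 1/\alpha$, we have $s\in[0,2/\alpha]$. The function $g$ is concave and maximized at $s=1$, with $g(1)=1/2$. This gives two cases.
\begin{itemize}
\item If $\alpha\le 2$, then $2/\alpha\ge 1$, so $s=1$ is feasible and $\sup m^*=r^2\alpha^2/4$. This matches $\kappa=\alpha^2/2$.
\item If $\alpha>2$, then $g$ is increasing on $[0,2/\alpha]$, so the maximum is at $s=2/\alpha$. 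There $g=2/\alpha-2/\alpha^2$, so $m^*=r^2(\alpha-1)$. This matches $\kappa=2(\alpha-1)$.
\end{itemize}
The endpoint is attained by $w_1=w_2=c+r$, so the bound is tight.

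\textbf{Comparison with LDP-FL.}
\begin{itemize}
\item For $\alpha\le2$, clearly $\alpha^2/2<\alpha^2$.
\item For $\alpha>2$, the claim $2(\alpha-1)<\alpha^2$ is equivalent to $(\alpha-1)^2+1>0$, which always holds.
\end{itemize}
As $\epsilon_p\to0$ we have $\alpha\to\infty$, so the second branch applies eventually. There $\kappa\sim 2\alpha$ is linear in $\alpha$, versus $\alpha^2$ for LDP-FL.

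\textbf{Main subtlety.} The main step to check is that the cross terms really vanish. This needs each pair's error to be mean-zero and the randomness to be independent across pairs; the pairing must also be fixed, or at least independent of the $V$'s, conditional on which clients are included. If the pairing is random (keyed permutation), I would condition on the pairing and apply the bound above, then average over pairings. The bound is uniform in the pairing, so it survives the averaging.
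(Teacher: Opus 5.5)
Your proof is correct and follows the paper's argument essentially step for step. You decompose the aggregate error into independent, zero-mean pair errors, then maximize the concave pairwise CPQ MSE over the admissible range, with the vertex-versus-endpoint split at $\alpha=2$ giving the two branches of $\kappa$; the only differences are cosmetic (you use pair averages in the normalized variable $|x_1+x_2|\in[0,2/\alpha]$ where the paper uses pair sums in $t_\ell=|s_\ell-2c|\in[0,2r]$), and your remarks on conditioning on a random pairing and on the $\kappa\sim 2\alpha$ asymptotics make explicit what the paper leaves implicit.
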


\begin{proof}
Fix coordinate $j$ and write $c=c_j$, $r=r_j$, and $\alpha=\alpha(\epsilon_p)$. Let $S_\ell$ and $s_\ell$ denote the reconstructed and clipped-parameter sums of pair $\ell$, respectively. Since the pair-sum errors are unbiased and independent across distinct pairs,
\begin{equation*}
\mathbb{E}\!\left[
  (\overline{W}_{g,j}-\bar{w}_j)^2
\right]
=\frac{1}{n^2}\sum_{\ell=1}^{n/2}
\mathbb{E}\!\left[(S_\ell-s_\ell)^2\right].
\end{equation*}
For $t_\ell=|s_\ell-2c|\in[0,2r]$, the pair-sum MSE is four times the pair-average MSE in~\eqref{eq:cpq_mse_opt}, giving
\begin{equation*}
\mathbb{E}\!\left[(S_\ell-s_\ell)^2\right]
=2r\alpha t_\ell-t_\ell^2.
\end{equation*}
This concave expression is maximized over $[0,2r]$ at $t_\ell=\min\{r\alpha,2r\}$. The resulting upper bound is $r^2\alpha^2$ when $\alpha\leq2$ and $4r^2(\alpha-1)$ otherwise. Summing over the $n/2$ pairs yields~\eqref{eq:cpq_pop}. Finally, $\kappa(\epsilon_p)<\alpha^2(\epsilon_p)$ for $\alpha(\epsilon_p)>1$, establishing the comparison with LDP-FL.
\end{proof}

CPQ also supports partial participation, allowing each received update to contribute to aggregation even when its paired client's update is unavailable. Complete pairs retain the pairwise MSE advantage of CPQ, while unmatched clients contribute the same quantization-error variance as independent LDP-FL. Consequently, for a fixed set of accepted clients selected independently of the quantization randomness, CPQ preserves unbiased aggregation and achieves MSE no greater than independent quantization of the same clients, with a strict improvement whenever at least one complete pair contributes. The population bound in~\eqref{eq:cpq_pop} characterizes the complete-pair case.

\subsection{CPQ-$\lambda$ under Stronger Observations}
\label{subsec:cpqlambda}

CPQ-$\lambda$ extends CPQ by combining complementary shared randomness with independent local randomness in the one-bit quantization process. The mixing probabilities control the dependence between paired outputs, allowing us to characterize aggregation-error behavior and privacy under stronger observation models, including access to both outputs of a pair or shared-randomness side information. The one-bit representation and server reconstruction rule remain unchanged. We first describe the mixing mechanism and then establish its privacy and aggregation-error properties.

For a leader--follower pair and a fixed coordinate, let $V\sim\mathrm{Uniform}[0,1]$ denote the shared random value. Each client $k\in\{1,2\}$ independently draws a mixing indicator $C_k\sim\mathrm{Bernoulli}(\lambda_k)$ and a private random value $R_k\sim\mathrm{Uniform}[0,1]$, where $\lambda_k\in[0,1]$. All these random variables are mutually independent. When $C_k=1$, the client uses its assigned shared randomness, while $C_k=0$ selects independent local randomness. The resulting quantization thresholds are
\begin{equation}
  T_1=C_1V+(1-C_1)R_1,\qquad
  T_2=C_2(1-V)+(1-C_2)R_2.
  \label{eq:cpqlambda_thresholds}
\end{equation}
Using the quantization probability $q(w_k)$ defined in~\eqref{eq:alpha}, each client transmits $b_k=\mathbf{1}_{\{T_k<q(w_k)\}}$. Since both thresholds are uniformly distributed, $\Pr(b_k=1)=q(w_k)$, and the reconstruction rule in~\eqref{eq:cpq_agg} remains unchanged.

Let $\mu=\lambda_1\lambda_2$ denote the probability that both clients select the shared-randomness branch. For fixed clipped inputs, the joint output distribution is
\begin{equation}
  P_{\lambda}
  =\mu P_{\mathrm{CPQ}}+(1-\mu)P_{\mathrm{IND}},
  \label{eq:cpqlambda_mixture}
\end{equation}
where $P_{\mathrm{CPQ}}$ and $P_{\mathrm{IND}}$ denote the joint distributions under complementary CPQ and independent LDP-FL quantization, respectively. When at least one client selects independent local randomness, the two outputs are independent. Thus, $\mu=0$ yields independent quantization, while $\lambda_1=\lambda_2=1$ recovers CPQ. The dependence between paired outputs motivates two privacy criteria beyond the individual-output PLDP guarantee. The first considers an observer with access to both bits of a known pair, while the second allows additional side information about the shared randomness and paired client.

\begin{definition}[Pair-view PLDP]
\label{def:pairview_pldp}
For a fixed pair and coordinate, suppose an observer knows the pairing and observes $(b_1,b_2)\in\{0,1\}^2$. The mechanism satisfies $\epsilon$-pair-view PLDP for client~1 if, for every $w_1,w_1',w_2\in[c-r,c+r]$ and every set $\mathcal{T}\subseteq\{0,1\}^2$,
\begin{equation}
  \Pr\!\left((b_1,b_2)\in\mathcal{T}\mid w_1,w_2\right)
  \leq e^{\epsilon}
  \Pr\!\left((b_1,b_2)\in\mathcal{T}\mid w_1',w_2\right).
  \label{eq:pairview_pldp}
\end{equation}
The definition for client~2 is symmetric, with $w_1$ held fixed.
\end{definition}

\begin{definition}[PLDP under side information]
\label{def:sideinfo_pldp}
For a fixed pair and coordinate, let $Z$ denote side information generated without access to the protected input $w_1$ and independently of client~1's private random variables $(C_1,R_1)$. Non-protected inputs included in $Z$ are held fixed in the privacy comparison. The mechanism satisfies $\epsilon$-PLDP for client~1 conditioned on $Z$ if, for every $w_1,w_1'\in[c-r,c+r]$, every set $\mathcal{T}\subseteq\{0,1\}$, and almost every realization $z$ of $Z$,
\begin{equation}
  \Pr(b_1\in\mathcal{T}\mid w_1,Z=z)
  \leq e^{\epsilon}
  \Pr(b_1\in\mathcal{T}\mid w_1',Z=z).
  \label{eq:sideinfo_pldp}
\end{equation}
The definition for client~2 is symmetric.
\end{definition}

For CPQ-$\lambda$, we consider the side information $Z_1=(V,C_2,R_2,w_2)$ for client~1, with the corresponding observation defined symmetrically for client~2. This observation includes the shared random value and the paired client's input and local randomization, but excludes the protected client's private randomness. We now characterize the privacy guarantees under these observation models and the associated aggregation-MSE properties.

\begin{theorem}[Privacy and aggregation MSE of CPQ-$\lambda$]
\label{thm:cpqlambda}
Consider CPQ-$\lambda$ with $0<\epsilon_p<\infty$ and
$\lambda_1,\lambda_2\in[0,1)$. For a fixed pair and coordinate,
let $w_1,w_2\in[c-r,c+r]$, set $\mu=\lambda_1\lambda_2$,
and write $\alpha=\alpha(\epsilon_p)$. The following properties hold.

\begin{enumerate}[label=(\alph*)]
\item \textbf{PLDP and unbiasedness.}
Each client's one-bit output follows the LDP-FL distribution
and satisfies pure $\epsilon_p$-PLDP. The reconstructed
parameters and their arithmetic mean are unbiased.

\item \textbf{Privacy under stronger observations.}
Under the pair-view observation of Definition~\ref{def:pairview_pldp},
CPQ-$\lambda$ satisfies $\epsilon_S$-pair-view PLDP, where
\begin{equation}
\epsilon_S(\epsilon_p,\mu)
=\ln\!\left(
e^{\epsilon_p}
+\frac{\mu}{1-\mu}
\left(e^{\epsilon_p}-e^{-\epsilon_p}\right)
\right).
\label{eq:eps_server}
\end{equation}
Under the side-information observation
$Z_1=(V,C_2,R_2,w_2)$ of
Definition~\ref{def:sideinfo_pldp}, client~1 satisfies
$\epsilon_C$-PLDP, where
\begin{equation}
\epsilon_C(\epsilon_p,\lambda_1)
=\ln\!\left(
\frac{e^{\epsilon_p}+\lambda_1}{1-\lambda_1}
\right).
\label{eq:eps_collusion}
\end{equation}
Both bounds are tight. The corresponding side-information
bound for client~2 follows by symmetry.

\item \textbf{Pairwise MSE.}
Let $x_k=(w_k-c)/(r\alpha)$, and let
$m_{\mathrm{IND}}(w_1,w_2)
=\frac{r^2\alpha^2}{4}(2-x_1^2-x_2^2)$
denote the pair-average MSE of independent LDP-FL
quantization. With $m^*(w_1,w_2)$ defined
in~\eqref{eq:cpq_mse_opt}, the CPQ-$\lambda$ pair-average MSE is
\begin{equation}
\begin{aligned}
m_{\mu}(w_1,w_2)
&=(1-\mu)m_{\mathrm{IND}}(w_1,w_2)+\mu m^*(w_1,w_2).
\end{aligned}
\label{eq:cpqlambda_mse}
\end{equation}
For every $\mu>0$, this MSE is strictly smaller than that
of independent quantization for all admissible input pairs.

\item \textbf{Population MSE.}
For a fixed coordinate $j$, suppose the $n$ updates included
in aggregation form $n/2$ complete pairs, with independent
randomness across distinct pairs. Let $\mu_\ell$ denote the
mixing probability of pair $\ell$, and define
$\bar{\mu}=\frac{2}{n}\sum_{\ell=1}^{n/2}\mu_\ell$.
Then
\begin{equation}
\begin{aligned}
\mathbb{E}\!\left[
(\overline{W}_{g,j}-\bar{w}_j)^2
\right]
&\leq\frac{r_j^2}{n}
\left[
(1-\bar{\mu})\alpha^2(\epsilon_p)
+\bar{\mu}\,\kappa(\epsilon_p)
\right],
\end{aligned}
\label{eq:cpqlambda_pop}
\end{equation}
where $\kappa$ is defined in~\eqref{eq:cpq_pop}.
\end{enumerate}
\end{theorem}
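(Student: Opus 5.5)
The plan is to reduce almost everything to the mixture representation \eqref{eq:cpqlambda_mixture} together with the earlier results for CPQ and LDP-FL.

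\textbf{Part (a).} Each threshold $T_k$ is a mixture of $V$ (or $1-V$) and $R_k$, all uniform on $[0,1]$. Since $C_k$ is independent of them, $T_k\sim\mathrm{Uniform}[0,1]$. Hence $\Pr(b_k=1)=q(w_k)$, which is exactly the LDP-FL marginal. Pure $\epsilon_p$-PLDP and unbiasedness of $Q_k$ then follow verbatim from Section~\ref{subsec:ldpfl} and Theorem~\ref{thm:cpq}(i), and linearity of expectation gives unbiasedness of the mean.

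\textbf{Part (c).} Expand the pair-average error as in the proof sketch of Theorem~\ref{thm:cpq}. The only coupling-dependent term is $\mathbb{E}[U_1U_2]$. By \eqref{eq:cpqlambda_mixture} this term equals $\mu\,\mathbb{E}_{\mathrm{CPQ}}[U_1U_2]+(1-\mu)x_1x_2$. The MSE is affine in $\mathbb{E}[U_1U_2]$ with fixed marginals, so $m_\mu=(1-\mu)m_{\mathrm{IND}}+\mu m^*$. Strict improvement for $\mu>0$ is immediate from $m^*<m_{\mathrm{IND}}$ in Theorem~\ref{thm:cpq}(ii).

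\textbf{Part (d).} Repeat the proof of Corollary~\ref{cor:cpq_pop}. Independence across pairs reduces the population MSE to $\frac{1}{n^2}\sum_\ell$ of pair-sum MSEs, each equal to four times $m_{\mu_\ell}$. Bound $4m_{\mathrm{IND}}\le 2r^2\alpha^2$ and $4m^*\le 2r^2\kappa$ separately, using the concave maximization in the corollary for the second. Summing over pairs and writing $\sum_\ell\mu_\ell=\tfrac{n}{2}\bar\mu$ gives \eqref{eq:cpqlambda_pop}.

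\textbf{Part (b), side information.} Condition on $Z_1=(V,C_2,R_2,w_2)$. The bit $b_2$ is then determined, but $b_1$ depends only on $(C_1,R_1,V)$, so with $V=v$ fixed,
\[
\Pr(b_1=1\mid w_1,v)=\lambda_1\mathbf{1}_{\{v<q(w_1)\}}+(1-\lambda_1)q(w_1).
\]
The worst ratio is obtained by choosing $v$ between $q(w_1')$ and $q(w_1)$, with $q$ at its extremes $q_\pm=\tfrac12(1\pm1/\alpha)$. Note that $q_+=e^{\epsilon_p}/(e^{\epsilon_p}+1)$ and $q_-=1/(e^{\epsilon_p}+1)$. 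This gives the ratio
\[
\frac{\lambda_1+(1-\lambda_1)q_+}{(1-\lambda_1)q_-}=\frac{e^{\epsilon_p}+\lambda_1}{1-\lambda_1},
\]
and the $b_1=0$ case is symmetric. Tightness holds at those extreme inputs and $v$ in a positive-measure interval.

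\textbf{Part (b), pair view.} This is the main obstacle. Write the joint pmf as $\mu P_{\mathrm{CPQ}}+(1-\mu)q_1^{b_1}(1-q_1)^{1-b_1}q_2^{b_2}(1-q_2)^{1-b_2}$. Under CPQ, $\Pr(b_1=b_2=1)=\max(0,q_1+q_2-1)$, and the other cells follow from the marginals. I would fix $w_2$ and a cell $(b_1,b_2)$. Since $q_1$ appears linearly in each cell apart from the $\max$ kink, the ratio over $w_1,w_1'$ is maximized at $q_1,q_1'\in\{q_-,q_+\}$, with $w_2$ at an extreme making one CPQ cell vanish. The candidate worst case is $w_1=c+r$, $w_1'=c-r$, $w_2=c-r$, on cell $(1,1)$ or its mirror. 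In that configuration $q_1'+q_2-1<0$, so the denominator is $(1-\mu)q_-^2$. The numerator is $\mu(q_++q_--1)^{+}+(1-\mu)q_+q_-$, and $(q_++q_--1)^+=0$ here. Because this does not produce the stated form, I would instead check the cell $(1,0)$: under CPQ with $q_1=q_+$, $q_2=q_-$, its mass is $\min(q_1,1-q_2)=q_+$. I would therefore set up the ratio
\[
\frac{\mu q_+ +(1-\mu)q_+^2}{\mu\cdot0+(1-\mu)q_-\,q_+}
\]
with appropriate $w_2$, simplify it to $e^{\epsilon_p}+\frac{\mu}{1-\mu}(e^{\epsilon_p}-e^{-\epsilon_p})$, and adjust $w_2$ if needed. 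The real work is a clean argument that a vertex configuration is the maximizer. I would handle this by quasi-convexity of a ratio of piecewise-linear functions in $(q_1,q_1',q_2)$ on each linear piece, followed by a finite check of the vertices and kinks. Tightness follows from exhibiting that configuration.
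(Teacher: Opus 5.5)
Your parts (a), (c), (d) and the side-information bound in (b) are correct and follow the paper's argument essentially step for step. The gap is the pair-view bound $\epsilon_S$. You never prove an upper bound over all atoms, and the configuration you settle on is both miscomputed and not the maximizer. On the atom $(1,0)$ with $q_2=q_-$, the shared-branch mass in the denominator is $\min(q_1',1-q_2)=\min(q_-,q_+)=q_-$, not $0$. The ratio is therefore $\frac{q_+[\mu+(1-\mu)q_+]}{q_-[\mu+(1-\mu)q_+]}=e^{\epsilon_p}$. Even with your denominator, the expression equals $e^{\epsilon_p}+\frac{\mu}{1-\mu}(e^{\epsilon_p}+1)$, which is strictly larger than $e^{\epsilon_S}$, so the intended simplification cannot hold. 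In fact the discordant atoms never exceed $e^{\epsilon_p}$. With $p_2=1-q_2\in[q_-,q_+]$, their worst ratio is $\frac{p_2[\mu+(1-\mu)q_+]}{q_-[\mu+(1-\mu)p_2]}$, which increases in $p_2$ and equals $e^{\epsilon_p}$ at $p_2=q_+$. Your first attempt on $(1,1)$ failed because $w_2=c-r$ makes the shared-branch mass vanish in the numerator as well. Finally, your proposed joint vertex argument does not apply as stated. The cell probabilities contain the bilinear term $(1-\mu)q_1q_2$, so they are not piecewise linear in $(q_1,q_1',q_2)$.

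The missing observation is that the extremal case is the concordant atom $(1,1)$ with the partner at the \emph{upper} endpoint $w_2=c+r$. Optimize in stages. For fixed $q_2$, the numerator depends only on $q_1$ and the denominator only on $q_1'$, and both $P_{11}$ and $P_{10}$ are nondecreasing in $q_1$, so take $q_1=q_+$ and $q_1'=q_-$. Since $q_-+q_2-1\le 0$ for every admissible $q_2$, the complementary branch contributes nothing to $P_{11}$ in the denominator but contributes $\mu(q_2-q_-)$ in the numerator. This gives
\[
\frac{P_{11}(q_+,q_2)}{P_{11}(q_-,q_2)}=e^{\epsilon_p}+\frac{\mu}{1-\mu}\left(\frac{1}{q_-}-\frac{1}{q_2}\right),
\]
which is increasing in $q_2$. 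At $q_2=q_+$, using $\frac{1}{q_-}-\frac{1}{q_+}=e^{\epsilon_p}-e^{-\epsilon_p}$, it equals $e^{\epsilon_S}$, and tightness is attained at $(q_1,q_1',q_2)=(q_+,q_-,q_+)$. The reflection $(q_1,q_2)\mapsto(1-q_1,1-q_2)$ preserves $[q_-,q_+]^2$ and maps $P_{00}$ and $P_{01}$ to $P_{11}$ and $P_{10}$, which covers the remaining atoms. Because the output alphabet is finite, these atom-wise bounds give the bound for every $\mathcal{T}\subseteq\{0,1\}^2$.
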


\begin{proof}[Proof sketch]
Write $q_k=q(w_k)$, $q_-=(1+e^{\epsilon_p})^{-1}$, and
$q_+=1-q_-$. Since each threshold $T_k$ is uniformly
distributed, the output $b_k$ follows the LDP-FL distribution,
preserving $\epsilon_p$-PLDP and unbiased reconstruction.

For pair-view privacy, the finite output alphabet allows the
privacy loss to be bounded through the likelihood ratios of
the four possible bit pairs. Using the mixture
in~\eqref{eq:cpqlambda_mixture}, the largest ratio is attained
for the output $(1,1)$ when comparing $q_1=q_+$ with
$q_1'=q_-$ at $q_2=q_+$. This ratio equals
$e^{\epsilon_S}$ in~\eqref{eq:eps_server}, establishing
the bound and its tightness.

For the side-information model, conditioning on
$Z_1=(V,C_2,R_2,w_2)$ leaves the protected client's
private randomness $(C_1,R_1)$ undisclosed. Hence
\begin{equation*}
\Pr(b_1=1\mid w_1,Z_1)
=\lambda_1\mathbf{1}_{\{V<q_1\}}
 +(1-\lambda_1)q_1.
\end{equation*}
Maximizing the conditional likelihood ratios over both
possible bit values yields $e^{\epsilon_C}$
in~\eqref{eq:eps_collusion}. The bound is attained when
$V\in[q_-,q_+)$, proving tightness. The result for client~2
follows symmetrically.

The mixture representation directly gives the pairwise
MSE identity in~\eqref{eq:cpqlambda_mse}. Its strict
improvement for $\mu>0$ follows from the pairwise
optimality result of Theorem~\ref{thm:cpq}. For population
aggregation, the independent, zero-mean errors of complete
pairs allow their MSE contributions to be summed. Applying
the independent-quantization bound and
Corollary~\ref{cor:cpq_pop} to each pair yields
\eqref{eq:cpqlambda_pop}. Complete derivations are
provided in Appendix~\ref{app:proof_cpqlambda}.
\end{proof}

\subsection{Vector-Level Privacy under Server Observation}
\label{subsec:vector_privacy}

We extend the privacy analysis of CPQ-$\lambda$ to a complete model upload, complementing the parameter-level and stronger-observation guarantees established above. For one communication round, let $\mathcal{B}=\prod_{j=1}^{m}\mathcal{I}_j$ denote the set of admissible clipped model vectors. We consider any two vectors $\mathbf{w}_i,\mathbf{w}'_i\in\mathcal{B}$ for a target vehicle $i$, while holding the public round state and all other vehicles' clipped inputs fixed.

\begin{definition}[Vector-Level Local Differential Privacy]
\label{def:vector_ldp}
For $\epsilon_v\geq0$ and $0\leq\delta_v<1$, a randomized observation mechanism $\mathcal{M}_{\mathrm{obs}}$ satisfies $(\epsilon_v,\delta_v)$ vector-level local differential privacy if, for any target client $i$, any two clipped model vectors $\mathbf{w}_i,\mathbf{w}'_i\in\mathcal{B}$, and every measurable output set $\mathcal{S}$,
\begin{equation}
\begin{aligned}
&\Pr\!\left(
\mathcal{M}_{\mathrm{obs}}(\mathbf{w}_i)\in\mathcal{S}
\right)\leq e^{\epsilon_v}
\Pr\!\left(
\mathcal{M}_{\mathrm{obs}}(\mathbf{w}'_i)\in\mathcal{S}
\right)+\delta_v.
\end{aligned}
\label{eq:vector_ldp}
\end{equation}
The public round state and all other clients' clipped inputs are held fixed in this comparison.
\end{definition}

We consider three observation models. An \emph{upload-only observer} receives one client's quantized model vector, while the \emph{RSU observer} receives the quantized updates of all participating clients without knowing the secret pairing assignments or shared randomness. We further consider a stronger \emph{pairing-revealed observer} that additionally knows the client pairings but not the shared random values. Establishing privacy under this stronger observation also provides a guarantee for the RSU when the pairing assignments remain hidden. 

To establish vector-level privacy for one complete CPQ-$\lambda$ model upload, we use R\'enyi differential privacy (RDP)~\citep{mironov2017renyi}. For an order $\nu>1$ and discrete distributions $P$ and $Q$, the R\'enyi divergence is defined as
\begin{equation}
D_\nu(P\|Q)
=\frac{1}{\nu-1}
\ln\sum_x P(x)^\nu Q(x)^{1-\nu}.
\label{eq:renyi_divergence}
\end{equation}
A mechanism satisfies $(\nu,\rho)$-RDP if this divergence is at most $\rho$ for every admissible input comparison. We apply this criterion to the joint CPQ-$\lambda$ output distributions at each coordinate. Under independent randomization across coordinates, the RDP bounds compose additively, yielding an $(\epsilon_v,\delta_v)$ vector-level privacy guarantee for any $0<\delta_v<1$, where
$\epsilon_v=\rho+\ln(1/\delta_v)/(\nu-1)$.

For symmetric CPQ-$\lambda$, define $q_-=(1+e^{\epsilon_p})^{-1}$, $q_+=1-q_-$, and $\mu=\lambda^2$. For a fixed coordinate, let $P_{u,z}$ denote the joint distribution of the paired output bits $(B_1,B_2)$, with $\Pr(B_1=1)=u$ and $\Pr(B_2=1)=z$, where $u,z\in[q_-,q_+]$. By~\eqref{eq:cpqlambda_mixture},
\begin{equation*}
P_{u,z}(1,1) =(1-\mu)uz+\mu(u+z-1)_+,
\end{equation*}
where $(x)_+=\max\{x,0\}$. The remaining joint probabilities are determined by the two Bernoulli marginals.

\begin{theorem}[Vector-level privacy of CPQ-$\lambda$]
\label{thm:cpqlambda_vector}
Consider one model upload using symmetric CPQ-$\lambda$,
with $\epsilon_p>0$ and $0\leq\lambda<1$.
Assume independent ideal randomization across
coordinates and distinct client pairs, with
pair assignments independent of the protected inputs.
The public round state and all other clients'
clipped inputs are held fixed.

Under the pairing-revealed observation model,
the observer sees all quantized uploads and knows
the client pairings but not the shared random values.
For any R\'enyi order $\nu>1$, the exact worst-case
per-coordinate divergence is
\begin{equation}
\begin{aligned}
d_\nu
&=\frac{1}{\nu-1}
\ln\max_{\substack{
(u,v)\in\{(q_-,q_+),(q_+,q_-)\}\\
z\in\{q_-,q_+\}}}\\
&\quad\sum_{x\in\{0,1\}^2}
P_{u,z}(x)^\nu
P_{v,z}(x)^{1-\nu}.
\end{aligned}
\label{eq:cpqlambda_rdp}
\end{equation}

For a model with $m$ parameters, CPQ-$\lambda$
satisfies $(\epsilon_v,\delta_v)$ vector-level
local differential privacy for one complete upload,
where, for any $0<\delta_v<1$, an RDP-based
privacy guarantee is obtained with
\begin{equation}
\epsilon_v
=\inf_{\nu>1}
\left[
m\,d_\nu+
\frac{\ln(1/\delta_v)}{\nu-1}
\right].
\label{eq:cpqlambda_vector_privacy}
\end{equation}
This guarantee holds with pairings revealed and
therefore also applies to the RSU observation
model in which pairings remain hidden.
\end{theorem}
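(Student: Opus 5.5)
The argument proceeds in three stages. First, the global observation is factorized across coordinates and pairs. Second, the worst case of the per-coordinate R\'enyi divergence is located at the vertices of the admissible box. Third, the standard RDP composition and conversion bound are applied.

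\textbf{Factorization.} Fix the target vehicle $i$, the public round state, and all other clipped inputs. Under the pairing-revealed model, condition on the pairing assignment $\pi$. This is legitimate because $\pi$ is independent of the protected inputs, so the observation distribution is a mixture over $\pi$ with input-independent weights. By joint convexity of $e^{(\nu-1)D_\nu}$, or more simply by conditioning on a public $\pi$ that the observer sees, it suffices to bound the divergence for each fixed $\pi$. For fixed $\pi$ and coordinate $j$, the randomness of distinct pairs is independent, so the joint output factorizes into a product over pairs. Every factor not containing client $i$ has the same distribution under $\mathbf{w}_i$ and $\mathbf{w}'_i$ and therefore contributes zero divergence. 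The only factor that changes is the pair $(i,k)$, with output $(B_1,B_2)\sim P_{u,z}$, where $u=q(w_{i,j})$ and $z=q(w_{k,j})$ is fixed. If $i$ is unpaired, its single bit is a marginal of this model, and by data processing its divergence is dominated by the paired case. Independence across coordinates then gives the additive identity $D_\nu = \sum_j D_\nu(P_{u_j,z_j}\|P_{v_j,z_j}) \le m\, d_\nu$, provided each term is at most $d_\nu$.

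\textbf{Vertex reduction (main obstacle).} We must show that $\sup_{u,v,z\in[q_-,q_+]} F(u,v,z)$, where $F=\sum_x P_{u,z}(x)^\nu P_{v,z}(x)^{1-\nu}$, is attained at $z\in\{q_-,q_+\}$ and $\{u,v\}=\{q_-,q_+\}$. The plan is as follows. For fixed $z$, the map $(u,v)\mapsto P_{u,z}$ is affine on each of the regions $u\le 1-z$ and $u\ge 1-z$, because $(u+z-1)_+$ is piecewise linear. The function $(p,q)\mapsto p^\nu q^{1-\nu}$ is jointly convex for $\nu>1$. Hence $F$ is convex in $(u,v)$ jointly on each rectangle cell determined by the kinks $u=1-z$ and $v=1-z$, so its maximum is attained at a cell vertex. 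The candidate vertices are the box corners together with the kink values $1-z$.

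To exclude the kinks, observe that $P_{u,z}$ depends on $u$ through a line segment in the distribution simplex whose direction changes only at $u=1-z$. We then argue directly, by comparing the one-sided derivatives, that moving a kink point outward to $q_\pm$ does not decrease $F$. The degenerate case $u=v$ gives $F=1$, so it is never the maximizer. This leaves $u\ne v$ at the corners, yielding $(u,v)\in\{(q_-,q_+),(q_+,q_-)\}$.

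For $z$, we exploit the same piecewise-affine structure in $z$ for fixed $u$ and $v$. Note, however, that joint convexity in $z$ alone does not immediately follow because both arguments move. I would therefore try to show convexity of $F$ in $z$ on each cell by writing $P_{u,z}$ and $P_{v,z}$ as affine in $z$ simultaneously, which makes joint convexity applicable along that line. The kink $z=1-q_\pm$ coincides with $z=q_\mp$, which is already a vertex, so this yields $z\in\{q_-,q_+\}$. This step is where I expect the real work, and a careful case check of the cells is likely needed. Exactness of $d_\nu$ then follows because the vertex inputs are admissible, attained by $w=c\pm r$.

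\textbf{Conversion.} With $(\nu, m d_\nu)$-RDP for the full upload, the standard RDP-to-DP conversion of Mironov gives $(m d_\nu+\ln(1/\delta_v)/(\nu-1),\delta_v)$-DP for every $\nu>1$. Taking the infimum over $\nu$ yields \eqref{eq:cpqlambda_vector_privacy}. Finally, the RSU observation is a post-processing of the pairing-revealed view: the RSU discards $\pi$, and its output is the marginal of the mixture over $\pi$. By data processing for R\'enyi divergence, or directly for $(\epsilon,\delta)$-DP, the same guarantee holds when pairings are hidden.
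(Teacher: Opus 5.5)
You reach the same endpoint formula by essentially the paper's route in most places, but the step you call the main obstacle, the reduction in $(u,v)$, is asserted rather than proved. Your factorization over coordinates and pairs, the zero contribution of non-target pairs, the RDP-to-DP conversion and the post-processing argument all coincide with the paper's proof.

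The $z$-reduction is easier than you expect. Once $u,v\in\{q_-,q_+\}$, the kinks $z=1-u$ and $z=1-v$ sit at the endpoints of $[q_-,q_+]$. So $P_{u,z}$ and $P_{v,z}$ are simultaneously affine in $z$ on the whole interval, and joint convexity of the power sum puts the maximum at $z\in\{q_-,q_+\}$ with no case check. This is exactly the paper's argument.

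The genuine gap is in the $(u,v)$ step. Cellwise convexity only places the maximum at a cell vertex, and such a vertex may be a kink $u=1-z$ or $v=1-z$. Saying you will compare one-sided derivatives does not explain why the derivative has a definite sign there. The composition argument also gives no convexity across the kink: $u\mapsto P_{u,z}(1,1)$ is convex there while $u\mapsto P_{u,z}(1,0)$ is concave. You therefore need a sign for the derivative itself, and you have not identified where it comes from.

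The missing idea, which is how the paper argues, is that the partner's bit is $\mathrm{Bernoulli}(z)$ under both target inputs. Conditioning on it gives
\[
\sum_{x} P_{u,z}(x)^\nu P_{v,z}(x)^{1-\nu}
= z\,h_\nu\bigl(p_1(u),p_1(v)\bigr)+(1-z)\,h_\nu\bigl(p_0(u),p_0(v)\bigr).
\]
Here $p_1(u)=P_{u,z}(1,1)/z$ and $p_0(u)=P_{u,z}(1,0)/(1-z)$ are nondecreasing in $u$, being piecewise linear with nonnegative slopes. The function $h_\nu(p,q)=p^\nu q^{1-\nu}+(1-p)^\nu(1-q)^{1-\nu}$ is nonincreasing in $p$ and nondecreasing in $q$ when $p\le q$, and symmetrically when $p\ge q$. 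Hence the power sum is nonincreasing in $u$ for $u\le v$ and nondecreasing for $u\ge v$, and likewise in $v$. This holds at every point, not only at kinks, so pushing $u$ and $v$ apart to $\{q_-,q_+\}$ never decreases it.

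Your derivative comparison, if carried out, reduces to exactly this fact. Since $P_{u,z}(0,b)=\Pr(B_2=b)-P_{u,z}(1,b)$, the $u$-derivative of the power sum is $\nu\sum_{b\in\{0,1\}}\partial_u P_{u,z}(1,b)\,\bigl[(p_b(u)/p_b(v))^{\nu-1}-\bigl((1-p_b(u))/(1-p_b(v))\bigr)^{\nu-1}\bigr]$. Each weight $\partial_u P_{u,z}(1,b)$ is nonnegative, and each bracket has the sign of $u-v$. With this monotonicity in hand, the cell decomposition and the joint convexity in $(u,v)$ are unnecessary.
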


\begin{proof}[Proof sketch]
For a fixed partner probability $z$, the partner's
output distribution is unchanged when the target
input varies. Conditioning on the partner's bit
yields Bernoulli distributions whose success
probabilities are nondecreasing in the target
probability $u$. The largest R\'enyi divergence
is therefore attained when the alternative
target probabilities are $q_-$ and $q_+$,
in either order.

For these target endpoints, the joint output
probabilities are affine in $z$. The joint
convexity of the R\'enyi power sum for $\nu>1$
then places the maximum at $z\in\{q_-,q_+\}$.
This establishes the exact per-coordinate
expression in~\eqref{eq:cpqlambda_rdp}.

Under independent coordinate randomization,
the joint observation distribution factorizes
across the $m$ model coordinates. Outputs of
pairs not containing the target client have
identical distributions under the two protected
inputs and contribute zero divergence. Hence
the total RDP divergence is bounded by $m\,d_\nu$.
Converting this bound to approximate differential
privacy and optimizing over $\nu>1$ yields
\eqref{eq:cpqlambda_vector_privacy}. Hiding the
pairing assignments is a post-processing of the
pairing-revealed observation and preserves the
guarantee. The complete proof is provided
in Appendix~\ref{app:cpqlambda_vector_proof}.
\end{proof}

We compare the analytical per-coordinate FedAvg MSE under a common vector-level privacy target of $(\epsilon_v,\delta_v)=(1,10^{-5})$ for one complete model upload. All mechanisms are calibrated using the same RDP accounting framework, with $m=29{,}653$ parameters, $K=50$ vehicles, and clipping radius $r=0.02$. To isolate the error introduced by privacy randomization, we consider identical local and global model parameters, such that $w_{i,j}=c_j$ for all clients and coordinates. For binary quantization, define $A(s)=r(e^s+1)/(e^s-1)$, where $s$ denotes the mechanism-specific parameter-level privacy budget. CPQ-$\lambda$ uses symmetric mixing with $\lambda_1=\lambda_2=\lambda=0.627$ and is evaluated under the pairing-revealed observation model.

\begin{table}[!t]
\centering
\caption{Analytical FedAvg MSE under matched vector-level privacy.}
\label{tab:vector_privacy_comparison}
\scriptsize
\setlength{\tabcolsep}{1.5pt}
\renewcommand{\arraystretch}{1.15}
\begin{tabular*}{\columnwidth}{@{\extracolsep{\fill}}lcccc@{}}
\toprule
Mechanism & Calibration & MSE formula & MSE $\downarrow$ &
\shortstack{Bits/\\weight} \\
\midrule
LDP-FL & $s_0=0.001185$ & $A(s_0)^2/K$ & 22.79 & 1 \\
CPQ-$\lambda$ & $s=0.001010$ & $(1-\lambda^2)A(s)^2/K$ & \textbf{19.03} & 1 \\
Gaussian & $\sigma=33.76$ & $\sigma^2/K$ & 22.79 & 32 \\
Laplace & $\beta=33.75$ & $2\beta^2/K$ & 45.55 & 32 \\
\bottomrule
\end{tabular*}
\end{table}

As shown in Table~\ref{tab:vector_privacy_comparison}, CPQ-$\lambda$ achieves 16.5\% lower aggregation MSE than independent LDP-FL at the same RDP-certified vector-level privacy target, while retaining one-bit communication. The analysis in Appendix~\ref{app:cpqlambda_vector_proof} further establishes an MSE advantage over LDP-FL for all admissible clipped inputs under the considered calibration. These comparisons use sufficient RDP-based privacy certificates and characterize theoretical aggregation error rather than routing performance under the stated vector-level privacy settings.

\section{Post-Quantum Cryptography}
\label{sec:pqc}

In vehicular FRL, digital signatures authenticate global-model distributions and local-update uploads, allowing recipients to verify message origin and integrity~\citep{twardokus2024cryptography}. We consider post-quantum signatures to address long-term quantum security concerns while examining how their communication and computational overhead affects authenticated model exchange over mobility-constrained V2I links.

We evaluate three post-quantum signature schemes with distinct cryptographic constructions and authentication overheads. ML-DSA, derived from CRYSTALS-Dilithium~\citep{ducas2018crystals}, is a module-lattice-based scheme standardized under FIPS~204~\citep{nist_fips204}, while SLH-DSA, derived from SPHINCS+~\citep{bernstein2019sphincs}, is a stateless hash-based scheme standardized under FIPS~205~\citep{nist_fips205}. Falcon is an NTRU-lattice-based signature scheme~\citep{prest2020falcon} selected by NIST as the basis for FN-DSA~\citep{alagic2022status}. The primary mobility-coupled V2I evaluation uses ML-DSA-44 to hold authentication constant, while controlled benchmarks and an additional signature-sensitivity experiment compare ML-DSA-44, Falcon-512, and SLH-DSA-SHA2-128f.

The evaluated parameter sets differ substantially in authentication overhead. ML-DSA-44 produces fixed-size 2,420-byte signatures~\citep{nist_fips204}, while Falcon-512 generates compact, variable-length signatures averaging 655.1 bytes in our controlled microbenchmark~\citep{prest2020falcon}. In contrast, SLH-DSA-SHA2-128f produces 17,088-byte signatures~\citep{nist_fips205}. These differences are particularly relevant for bit-packed one-bit model updates, where authentication data can constitute a substantial fraction of the transmitted payload. Detailed measurements of public-key sizes, signature sizes, and cryptographic operation times are provided in Appendix~\ref{app:signature_microbenchmark}.

\section{Federated Deep Q-Learning for Vehicular Routing}
\label{sec:frl}

This section describes the adopted vehicular routing formulation, local DQN training, and federated model aggregation.

\subsection{Vehicular Routing Formulation}
\label{subsec:routing_formulation}

We adopt FLDQN~\citep{mamond2025fldqn}, a federated routing framework based on deep Q-networks (DQNs)~\citep{mnih2015human}. Each learning vehicle selects feasible outgoing road edges at successive intersections in response to evolving traffic conditions. In the SUMO-based simulation~\citep{lopez2018microscopic}, learning vehicles obtain network-wide traffic observations through the TraCI interface to construct their routing states, while background vehicles contribute to congestion without participating in federated training. For a road network with $M$ directed edges, the state of a learning vehicle at decision step $t$ is defined as
\begin{equation}
\mathbf{S}_t =
\left[
\mathbf{n}_t,\,
\bar{\mathbf{v}}_t,\,
\mathbf{l},\,
\mathbf{p}_t,\,
\mathbf{p}_{\mathrm{dest}}
\right],
\label{eq:state}
\end{equation}

where $\mathbf{n}_t,\bar{\mathbf{v}}_t,\mathbf{l}\in\mathbb{R}^{M}$ represent the edgewise vehicle counts, mean speeds, and lengths, respectively. The vectors $\mathbf{p}_t,\mathbf{p}_{\mathrm{dest}}\in\mathbb{R}^{2}$ specify the coordinates of the starting point of the vehicle's current lane and its destination. The resulting state dimension is $3M+4$.

At each routing decision, the DQN produces Q-values for a fixed set of action indices $\mathcal{A}_{\max}$, whose size corresponds to the maximum number of turning choices at any intersection. Since feasible turns vary by location, unavailable actions are masked during action selection, ensuring that each selected action corresponds to a reachable outgoing edge.

Routing rewards are based on the travel time experienced under the prevailing traffic conditions. Let $u_t$ denote the simulation time at which a vehicle enters the edge associated with routing decision $t$. Upon entering the subsequent edge, the base reward is
\begin{equation}
r_t^{\mathrm{base}}
=-(u_{t+1}-u_t).
\label{eq:reward}
\end{equation}
To discourage cyclic routes, an additional penalty is imposed for repeated edge visits. Let $N_t(e_t)$ denote the number of visits to the selected edge $e_t$ within the current trip, including the current visit. The complete reward is
\begin{equation}
r_t=r_t^{\mathrm{base}}+
\begin{cases}
0, & N_t(e_t)=1,\\
-\kappa_2, & N_t(e_t)=2,\\
-\kappa_{3+}, & N_t(e_t)\geq 3,
\end{cases}
\label{eq:loop_penalty}
\end{equation}
where $\kappa_{3+}>\kappa_2>0$ penalize the second and subsequent repeated visits. The reward thus favors shorter experienced travel times while discouraging repeated edge traversal.

A vehicle's trip ends when it reaches its destination, encounters a dead end, or exceeds the maximum trip duration $T_{\mathrm{trip}}$. Time-limit truncations are stored as nonterminal transitions in the local experience replay buffer, as detailed in Appendix~\ref{app:exp_config}.
\subsection{Local DQN Training and Federated Aggregation}
\label{subsec:fldqn_training}

Each vehicle maintains an online Q-network $Q(s,a;\theta_i)$ and a target network $Q(s,a;\theta_i^-)$, whose parameters are periodically synchronized with the online network. The networks use fully connected layers with nonlinear hidden activations and a linear output layer. Each agent collects transitions in a local experience replay buffer $\mathcal{B}_i$ and trains its online network using sampled mini-batches. Routing decisions follow an $\epsilon$-greedy exploration policy.

For a sampled transition $j$, let $R_j$ denote the observed reward. The Bellman target is
\begin{equation}
y_j =
R_j+\gamma(1-d_j)
\max_{a'\in\mathcal{A}_{\max}}
Q(s_j',a';\theta_i^-),
\end{equation}
where $\gamma$ is the discount factor and $d_j\in\{0,1\}$ indicates whether the transition is terminal. The target maximization uses the full network output action set, whereas infeasible turns are masked during routing action selection.

Defining the temporal-difference error as $e_j=y_j-Q(s_j,a_j;\theta_i)$, the online network minimizes the Huber loss
\begin{equation}
\begin{aligned}
\mathcal{L}_{\mathrm{Huber}}(\theta_i)
&=\frac{1}{B}\sum_{j=1}^{B}
\ell_{\delta_{\mathrm{H}}}(e_j),\\
\ell_{\delta_{\mathrm{H}}}(e)
&=
\begin{cases}
\frac{1}{2}e^2,
& |e|\leq\delta_{\mathrm{H}},\\
\delta_{\mathrm{H}}
\left(|e|-\frac{\delta_{\mathrm{H}}}{2}\right),
& |e|>\delta_{\mathrm{H}},
\end{cases}
\end{aligned}
\label{eq:huber_loss}
\end{equation}
where $B$ is the mini-batch size and $\delta_{\mathrm{H}}$ is the Huber-loss threshold. The network architecture, exploration schedule, and optimization parameters are specified in Appendix~\ref{app:exp_config}.

At federated round $t$, each vehicle performs $E$ local training episodes using its available model. Vehicles that successfully receive the authenticated current-round global model $\mathbf{W}_{\mathrm{global}}^{(t)}$ are eligible to upload their locally trained online-network parameters after privacy processing. Target networks and experience replay buffers remain local.

Let $\mathcal{V}_t$ denote the set of clients whose current-round uploads are successfully delivered and verified. For each $i\in\mathcal{V}_t$, let $\overline{\mathbf{w}}_i^{(t+1)}$ denote the model parameters reconstructed at the RSU using the selected privacy mechanism. The global model is updated through uniform federated averaging
\begin{equation}
\mathbf{W}_{\mathrm{global}}^{(t+1)}
=
\frac{1}{|\mathcal{V}_t|}
\sum_{i\in\mathcal{V}_t}
\overline{\mathbf{w}}_i^{(t+1)},
\label{eq:uniform_agg}
\end{equation}
when $\mathcal{V}_t\neq\emptyset$. Otherwise, the global model remains unchanged. The authenticated, contact-dependent model-exchange procedure is detailed in Section~\ref{sec:system}.

\section{System Design and Implementation}
\label{sec:system}

The implementation couples a shared SUMO traffic environment with ns-3 through synchronized vehicle mobility updates. Learning vehicles train local DQN policies using SUMO traffic observations, while a roadside unit (RSU) coordinates post-quantum-authenticated model exchange over simulated IEEE~802.11p V2I links. Global models are distributed in full precision, whereas eligible vehicles return privacy-protected updates. Mobility-dependent delivery and authentication determine which updates are accepted for federated aggregation. Figure~\ref{fig:system_architecture} illustrates the integrated architecture.

\subsection{Privacy-Protected and Authenticated Model Exchange}
\label{subsec:privacy_integration}

Before privacy processing, each eligible vehicle clips its locally trained model parameters relative to the authenticated current-round global model $\mathbf{W}_{\mathrm{global}}^{(t)}$. For coordinate $j$, the clipping center is the corresponding global parameter $W_{\mathrm{global},j}^{(t)}$, with a common radius $r>0$ across all coordinates. The clipped parameter is
\begin{equation}
w_{i,j}^{(t+1)}
=
W_{\mathrm{global},j}^{(t)}
+
\operatorname{clip}\!\left(
\theta_{i,j}^{(t+1)}
-W_{\mathrm{global},j}^{(t)},
-r,\;r
\right),
\label{eq:clip_cr}
\end{equation}
where $\theta_{i,j}^{(t+1)}$ denotes the locally trained online-network parameter. This instantiates the public clipping parameters of Section~\ref{sec:dp} as $c_j=W_{\mathrm{global},j}^{(t)}$ and $r_j=r$, which are also used for server-side quantized-parameter reconstruction.

For CPQ and CPQ-$\lambda$, vehicles establish a shared group secret and agree on an ordered participant list $\mathcal R$ before federated training. Secret-derived ChaCha20 streams~\citep{chacha20} determine coordinate- and round-dependent pair assignments and complementary pseudorandom values for paired clients. These values are generated locally without exchanging pairing information during training. The RSU relays authenticated setup messages but does not obtain the group secret or pairing assignments, which are not required for model reconstruction. Pairings are formed over the entire group $\mathcal R$, independently of successful wireless delivery. Consequently, a received update remains usable when its partner's update is unavailable, although the corresponding pairwise error cancellation is not realized. The complete setup and randomness-derivation procedures are provided in Appendix~\ref{app:hidden-pairing}.

CPQ, CPQ-$\lambda$, and LDP-FL represent each model parameter with one bit. The binary outputs are packed separately for each parameter tensor using \texttt{numpy.packbits}, with the original tensor shapes stored as metadata. The packed arrays and metadata are serialized into compressed NPZ payloads. At the RSU, padding bits are removed according to the original tensor sizes, and the binary values are restored to their tensor shapes before parameter reconstruction. In contrast, Gaussian and Laplace updates retain float32 representations, as do global-model downlinks. These payloads are subsequently authenticated and transmitted through the V2I communication framework.

Before federated training, the RSU and learning vehicles generate post-quantum signing key pairs and register their public keys through an assumed authenticated bootstrap channel, which may be supported by V2X credential-management infrastructure~\citep{twardokus2024cryptography}. The primary mobility-coupled evaluation uses ML-DSA-44, while the signature-sensitivity experiment considers Falcon-512 and SLH-DSA-SHA2-128f. In each configuration, the selected signature scheme authenticates global-model downlinks, local-update uplinks, and CPQ/CPQ-$\lambda$ group-establishment messages. For CPQ and CPQ-$\lambda$, vehicles use ML-KEM-512 key pairs to establish a shared group secret through authenticated, ML-KEM-protected messages relayed by the RSU~\citep{fips203}. The RSU does not obtain the group secret, and the establishment protocol is provided in Appendix~\ref{app:hidden-pairing}.

Each authenticated model object contains a compressed NPZ payload $P$, a serialized JSON header $H_{\rm bytes}$, and a post-quantum signature $\sigma$, with length fields delimiting the header and signature. The header identifies the sender, vehicle, federated round, transmission direction, and signature scheme, together with payload metadata. For both uplink and downlink exchanges, the sender signs the serialized header and payload jointly
\begin{equation}
\sigma
=\mathrm{Sign}_{\mathcal A}\!\left(
\mathit{sk}_{\rm sender},
H_{\rm bytes}\,\|\,P
\right),
\label{eq:sign_actual}
\end{equation}
where $\mathcal A$ denotes the selected signature scheme. After complete SR-ARQ reassembly, the receiver verifies the signature using the sender's registered public key and validates the signed sender, vehicle, round, and direction against the expected transfer context. These checks bind the model content to its authenticated communication context. Only verified current-round downlinks establish uplink eligibility, while only successfully delivered and verified uplinks contribute to aggregation.

\subsection{Mobility-Coupled V2I Communication}
\label{subsec:ns3}

We use ns-3~\citep{riley2010ns3} to model IEEE~802.11p V2I communication between learning vehicles and a single RSU located at junction J3 of the SUMO road network. The wireless channel combines log-distance path loss, Nakagami-$m$ fading, and constant-speed propagation delay, with fixed-rate OFDM transmission and an 802.11p ad hoc MAC. The evaluated road-network topology is shown in Appendix~\ref{app:simulation_details}, while the radio and communication parameters are detailed in Appendix~\ref{app:exp_config}.

At each SUMO simulation step $\Delta t$, the positions and velocities of active learning vehicles are supplied to ns-3, which advances synchronously with the SUMO simulation clock. Between updates, vehicle positions evolve according to a constant-velocity mobility model. Only the $K$ learning vehicles are represented as wireless nodes, while background traffic affects vehicle mobility without generating wireless interference. Vehicles that have not departed or are no longer active are kept outside communication range. Transfer outcomes and delays are measured in simulation time, independently of training execution time.

The RSU periodically broadcasts discovery beacons over the simulated IEEE~802.11p channel. A vehicle becomes eligible for model exchange after successfully receiving at least $N_{\mathrm{beacon}}$ beacons within a sliding observation window of duration $T_{\mathrm{obs}}$. This reception-based criterion accounts for wireless channel conditions rather than relying solely on geometric proximity. Contact eligibility, however, does not guarantee successful delivery of a complete model object.

Pending transfers from eligible vehicles are managed through a first-come, first-served (FCFS) queue, with only one active uplink or downlink transfer permitted at a time. The RSU periodically updates contact eligibility and schedules the next eligible request after processing completed attempts. Waiting vehicles that lose contact are removed from the queue, while unfinished transfers may be retried during subsequent contact opportunities before the communication phase ends. Requests with additional transfer work rejoin at the end of the queue after each completed attempt. Packet-level reliability is handled separately by the SR-ARQ mechanism described below. The beacon and scheduling parameters are reported in Appendix~\ref{app:exp_config}.

\begin{algorithm}[t]
\caption{One contact-dependent federated round}
\label{alg:system}
\renewcommand{\algorithmicrequire}{\textbf{Input:}}
\renewcommand{\algorithmicensure}{\textbf{Output:}}
\begin{algorithmic}[1]
\Require Global model $\mathbf W_{\rm global}^{(t)}$,
clients $\mathcal C$, local episodes $E$,
privacy mechanism $\mathcal M$,
signature scheme $\mathcal A$,
network model $\mathcal N$
\Ensure Updated global model
$\mathbf W_{\rm global}^{(t+1)}$

\State \textbf{Authenticated downlink}
\State $\mathcal E_t\gets
\operatorname{AuthenticatedDL}_{\mathcal A,\mathcal N}
(\mathbf W_{\rm global}^{(t)},\mathcal C,t)$

\State \textbf{Local training}
\For{each $i\in\mathcal C$}
  \State $\mathbf W_i^{\rm start}\gets
  \operatorname{SelectModel}(i,\mathcal E_t,
  \mathbf W_{\rm global}^{(t)})$
\EndFor
\State $\{\boldsymbol\theta_i^{(t+1)}\}
\gets\operatorname{TrainSharedSUMO}
(\{\mathbf W_i^{\rm start}\},E)$

\State \textbf{Protected uplink preparation}
\For{each $i\in\mathcal E_t$}
  \State $\mathbf w_i\gets
  \operatorname{Clip}
  (\boldsymbol\theta_i^{(t+1)};
  \mathbf W_{\rm global}^{(t)},r)$
  \State $P_i\gets
  \operatorname{Encode}_{\mathcal M}
  (\mathcal M(\mathbf w_i))$
  \State $U_i\gets
  \operatorname{SignObject}_{\mathcal A}
  (i,t,\mathrm{UL},P_i)$
\EndFor

\State \textbf{Contact-dependent uplink}
\State $\mathcal P_t^{\rm UL}\gets
\operatorname{UplinkDrive}_{\mathcal N}
(\{U_i\}_{i\in\mathcal E_t})$

\State \textbf{Verified aggregation}
\State $\mathcal V_t\gets\emptyset$
\For{each complete $U_i\in\mathcal P_t^{\rm UL}$}
  \If{$i\in\mathcal E_t$ and
  $\operatorname{VerifyContext}_{\mathcal A}
  (U_i,i,t,\mathrm{UL})$}
    \State $\overline{\mathbf w}_i^{(t+1)}
    \gets\operatorname{Reconstruct}_{\mathcal M}
    (P_i;\mathbf W_{\rm global}^{(t)},r)$
    \State $\mathcal V_t\gets\mathcal V_t\cup\{i\}$
  \EndIf
\EndFor
\If{$\mathcal V_t\neq\emptyset$}
  \State Update $\mathbf W_{\rm global}^{(t+1)}$
  using~\eqref{eq:uniform_agg}
\Else
  \State $\mathbf W_{\rm global}^{(t+1)}
  \gets\mathbf W_{\rm global}^{(t)}$
\EndIf
\end{algorithmic}
\end{algorithm}

Authenticated model objects may exceed the payload capacity of a single IEEE~802.11p frame. We therefore use a custom Selective Repeat ARQ (SR-ARQ) layer over UDP to fragment each object into indexed chunks and reassemble it at the receiver. Each chunk carries message identification, sequencing, and CRC-32 integrity metadata. The receiver acknowledges valid chunks, while the sender retransmits unacknowledged chunks using a sliding window and a bounded retry count. An attempt fails when a chunk exhausts its retransmission limit, after which the RSU may schedule another attempt during a subsequent contact opportunity before the communication drive ends. Complete objects undergo signature and context verification after reassembly. The transport configuration and chunk-header format are detailed in Appendix~\ref{app:exp_config} (Table~\ref{tab:sr_params}).

A persistent TCP control connection coordinates the Python co-simulation driver and the ns-3 process. It supports mobility updates, contact queries, asynchronous transfer requests, and synchronized simulation advancement, while reporting transfer outcomes and completion times. Model data is transmitted through the simulated UDP/IEEE~802.11p wireless channel rather than the TCP control connection. The end of each communication drive defines the deadline for pending model exchanges.

The communication mechanisms described above are integrated into the contact-dependent federated round summarized in Algorithm~\ref{alg:system}. Let $\mathcal C$ denote the learning vehicles, $\mathcal E_t\subseteq\mathcal C$ those that successfully receive and authenticate the round-$t$ global model, and $\mathcal V_t\subseteq\mathcal E_t$ those whose current-round updates are delivered and verified. Downlink and uplink exchanges occur in separate SUMO communication drives, during which vehicles follow their routing policies without learning. Between these drives, all vehicles train in a shared SUMO environment using either the authenticated current-round model or their previously held model. Only vehicles in $\mathcal E_t$ are eligible to upload for aggregation.

\section{Experimental Results}
\label{sec:experiments}

\subsection{Experimental Setup and Evaluation Protocol}
\label{subsec:exp_setup}

\begin{figure}[t]
\centering
\includegraphics[width=0.98\textwidth]
{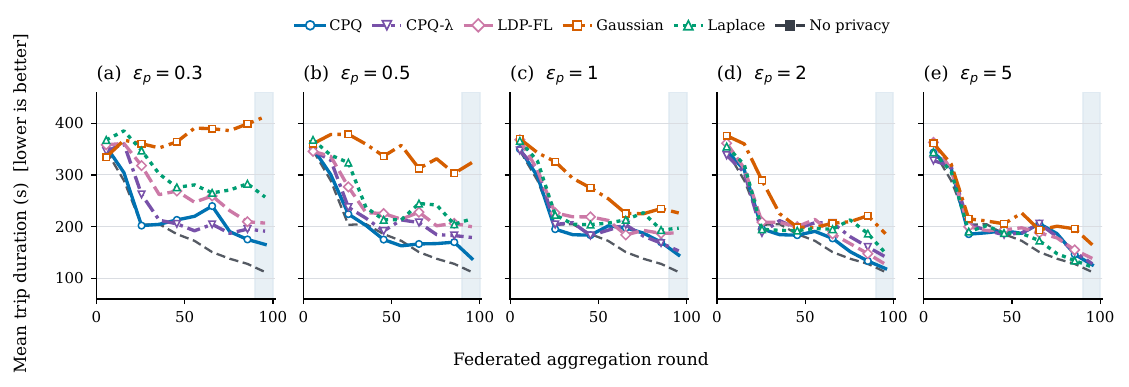}
\caption{Evolution of mean evaluation travel time across
privacy budgets during federated training.}
\label{fig:learning_dynamics}
\end{figure}

The primary experiments use SUMO~1.26 on a road network with 30 directed edges, 50 learning vehicles distributed equally across five starting edges, and a common destination. Background traffic follows randomly generated trips with an average departure interval of 1.75~s. After a 200~s simulation warm-up, learning vehicles depart over a 100~s window, with individual trips limited to 600~s. The road topology and detailed traffic configuration are provided in Appendix~\ref{app:simulation_details} and Appendix~\ref{app:exp_config}.

The primary ideal-communication and mobility-coupled evaluations run for $T=100$ federated rounds with $E=10$ local training episodes per round. Under ideal communication, all learning vehicles receive the current global model and contribute to aggregation in every round. In the mobility-coupled setting, participation depends on successful authenticated model delivery over the simulated V2I links, following Algorithm~\ref{alg:system}.

We compare CPQ, CPQ-$\lambda$ ($\lambda=0.75$), LDP-FL, Gaussian, and Laplace mechanisms, using full-precision FLDQN without privacy perturbation as a reference. The ideal-communication experiments consider $\epsilon_p\in\{0.3,0.5,1,2,5\}$, while the primary mobility-coupled evaluation uses $\epsilon_p=1$. All private mechanisms use a clipping radius of $r=0.02$. The privacy budgets apply to individual model parameters within one communication round. CPQ, CPQ-$\lambda$, LDP-FL, and Laplace satisfy pure $\epsilon_p$-PLDP, whereas Gaussian is calibrated to $(\epsilon_p,\delta_p)$-PLDP with $\delta_p=10^{-5}$.

After each federated round, the global model is evaluated using greedy routing without exploration, further training, or model exchange. The primary evaluations use three fixed SUMO traffic scenarios with 50 learning vehicles each. Routing performance is measured by mean travel time over all evaluated trips, including elapsed time until termination for unsuccessful trips, and arrival rate, defined as the percentage of vehicles reaching their destinations within the trip limit. Late-round results are averaged over rounds 91--100 for each of five independent seeds. We report the mean and sample standard deviation across these run-level averages, using identical evaluation scenarios across the compared mechanisms and runs.

\subsection{Privacy--Utility Under Ideal Communication}
\label{subsec:exp_privacy_utility}

Figure~\ref{fig:learning_dynamics} shows the evolution of routing performance across the evaluated privacy budgets. Mean travel times are computed over nonoverlapping 10-round intervals and averaged across five independent runs, with full-precision FLDQN shown as a no-privacy reference. Under tighter privacy constraints, CPQ achieves lower late-round travel times than the other private mechanisms, while Gaussian perturbation exhibits particularly poor performance at $\epsilon_p=0.3$. Performance differences generally narrow as $\epsilon_p$ increases, although the learning trajectories remain variable and do not follow a consistent ordering across rounds. To avoid selecting favorable training checkpoints, Table~\ref{tab:privacy_utility} reports results over the fixed rounds 91--100.

CPQ achieves lower mean travel time than independent LDP-FL at all five evaluated privacy budgets and ranks first among the private mechanisms at four of them. It also attains the highest mean arrival rate at three budgets, while CPQ-$\lambda$ ($\lambda=0.75$) leads at $\epsilon_p=1$; at $\epsilon_p=5$, Laplace has the lowest travel time and LDP-FL the highest arrival rate. These results indicate favorable average routing utility for CPQ, particularly under tighter privacy constraints. However, the observed differences vary across seeds. The seed-paired analysis in Appendix~\ref{app:detailed_privacy_results} finds that the nominal 95\% confidence interval for the CPQ--LDP-FL travel-time difference excludes zero at $\epsilon_p=1$, but not at the other evaluated budgets.

\begin{table}[t]
\centering
\caption{Privacy--utility performance of federated routing across privacy budgets.}
\label{tab:privacy_utility}
\renewcommand{\arraystretch}{1.16}
\begin{tabular*}{0.99\textwidth}{@{\extracolsep{\fill}}lccccc@{}}
\toprule
\textbf{Mechanism} & $\epsilon_p = 0.3$ & $\epsilon_p = 0.5$ & $\epsilon_p = 1$ & $\epsilon_p = 2$ & $\epsilon_p = 5$ \\
\midrule
\multicolumn{6}{c}{\textbf{A. Mean evaluation travel time (s) $\downarrow$}} \\
\cmidrule(lr){1-6}
CPQ      & \textbf{165.3 $\pm$ 21.4} & \textbf{137.9 $\pm$ 59.7} & \textbf{144.7 $\pm$ 26.5} & \textbf{118.6 $\pm$ 23.1} & \underline{125.7 $\pm$ 24.5} \\
CPQ-$\lambda$ ($\lambda=0.75$) & \underline{191.0 $\pm$ 36.4} & \underline{178.8 $\pm$ 42.7} & \underline{152.8 $\pm$ 17.5} & 141.1 $\pm$ 41.7 & 127.8 $\pm$ 11.9 \\
LDP-FL   & 206.6 $\pm$ 33.2 & 199.5 $\pm$ 21.8 & 187.7 $\pm$ 25.3 & \underline{126.7 $\pm$ 45.5} & 138.3 $\pm$ 38.3 \\
Gaussian & 412.8 $\pm$ 30.5 & 323.8 $\pm$ 56.4 & 226.2 $\pm$ 41.8 & 186.1 $\pm$ 30.9 & 164.6 $\pm$ 43.1 \\
Laplace  & 257.1 $\pm$ 19.7 & 215.1 $\pm$ 19.6 & 196.9 $\pm$ 37.8 & 148.4 $\pm$ 15.9 & \textbf{121.5 $\pm$ 20.0} \\
\cmidrule(lr){1-6}
No privacy & \multicolumn{5}{c}{111.8 $\pm$ 36.7} \\
\midrule
\multicolumn{6}{c}{\textbf{B. Arrival rate (\%) $\uparrow$}} \\
\cmidrule(lr){1-6}
CPQ      & \textbf{78.7 $\pm$ 8.9} & \textbf{84.9 $\pm$ 16.1} & \underline{87.1 $\pm$ 8.5} & \textbf{92.1 $\pm$ 6.7} & 87.8 $\pm$ 10.6 \\
CPQ-$\lambda$ ($\lambda=0.75$) & \underline{75.1 $\pm$ 7.3} & \underline{82.5 $\pm$ 7.8} & \textbf{89.0 $\pm$ 8.1} & 90.3 $\pm$ 5.0 & \underline{90.1 $\pm$ 7.5} \\
LDP-FL   & 71.1 $\pm$ 4.7 & 71.3 $\pm$ 8.0 & 78.3 $\pm$ 6.6 & \underline{91.9 $\pm$ 8.7} & \textbf{90.8 $\pm$ 8.2} \\
Gaussian & 22.8 $\pm$ 7.1 & 45.7 $\pm$ 13.6 & 67.3 $\pm$ 10.8 & 78.5 $\pm$ 5.1 & 82.4 $\pm$ 9.5 \\
Laplace  & 60.7 $\pm$ 6.3 & 68.0 $\pm$ 4.5 & 74.8 $\pm$ 10.5 & 83.6 $\pm$ 5.4 & 89.2 $\pm$ 6.5 \\
\cmidrule(lr){1-6}
No privacy & \multicolumn{5}{c}{89.8 $\pm$ 9.4} \\
\bottomrule
\end{tabular*}
\end{table}

We evaluate symmetric CPQ-$\lambda$ under ideal communication at $\epsilon_p=1$, with $\lambda\in\{0.25,0.5,0.75,0.9,1\}$ and independent LDP-FL. The fully correlated CPQ mechanism corresponds to $\lambda=1$. Empirical aggregation MSE is computed each round as the mean squared difference, across model parameters, between the reconstructed FedAvg model and the average of the clients' clipped models, excluding clipping error. Table~\ref{tab:cpqlambda_exp} reports MSE averaged over 100 rounds, together with routing metrics averaged over rounds 91--100 and ten seeds. Reported uncertainties are 95\% Student-$t$ confidence half-widths. The stronger-observation privacy bounds follow~\eqref{eq:eps_server} and~\eqref{eq:eps_collusion}.

\begin{table}[!t]
\centering
\caption{CPQ-$\lambda$ ablation at $\epsilon_p=1$.}
\label{tab:cpqlambda_exp}
\renewcommand{\arraystretch}{1.15}
\begin{tabular*}{\columnwidth}{@{\extracolsep{\fill}}lccccc@{}}
\toprule
$\lambda$ & $\epsilon_S$ & $\epsilon_C$ &
\shortstack{MSE\\($10^{-5}$)} &
\shortstack{Travel\\time (s)} &
\shortstack{Arrival\\(\%)} \\
\midrule
LDP-FL & 1.00 & 1.00 & $3.717\pm0.003$ & $184\pm27$ & $79\pm6$ \\
$0.25$ & 1.06 & 1.38 & $3.497\pm0.002$ & $149\pm28$ & $87\pm6$ \\
$0.50$ & 1.25 & 1.86 & $2.825\pm0.005$ & $145\pm28$ & $87\pm5$ \\
$0.75$ & 1.75 & 2.63 & $1.708\pm0.007$ & $142\pm16$ & $90\pm5$ \\
$0.90$ & 2.54 & 3.59 & $0.828\pm0.011$ & $153\pm19$ & $84\pm5$ \\
$1$ & $\infty$ & $\infty$ & $0.145\pm0.015$ & $138\pm21$ & $89\pm6$ \\
\bottomrule
\end{tabular*}
\end{table}

Empirical aggregation MSE decreases monotonically with $\lambda$, consistent with the correlation-mixture relationship in~\eqref{eq:cpqlambda_mse}. At $\lambda=0.75$, CPQ-$\lambda$ reduces aggregation MSE by 54.0\% relative to independent LDP-FL while retaining finite pair-view and shared-randomness side-information privacy bounds of $\epsilon_S=1.75$ and $\epsilon_C=2.63$. All intermediate mixing settings achieve lower mean travel times than LDP-FL, although routing performance does not vary monotonically with $\lambda$. The $\lambda=0.75$ configuration provides routing performance close to fully correlated CPQ, with variability across seeds limiting finer comparisons.

We further examine sensitivity to the learning-vehicle population by comparing CPQ and LDP-FL at $\epsilon_p=1$ for $K\in\{10,25,50\}$ under ideal communication. CPQ achieves positive mean travel-time and arrival-rate gains at all three population sizes, with lower travel times in all five matched seeds at $K=25$ and $K=50$. The travel-time confidence intervals exclude zero for these two populations, while greater variability is observed at $K=10$. Because the learning vehicles share the SUMO environment, varying $K$ also changes traffic demand. Detailed results are provided in Appendix~\ref{app:detailed_privacy_results}.

\subsection{Authenticated V2I Communication}
\label{subsec:exp_pqc}

We evaluate the impact of model-update representation on authenticated IEEE~802.11p V2I communication under mobility-dependent connectivity. All mechanisms receive full-precision global-model downlinks and use ML-DSA-44 authentication. CPQ, CPQ-$\lambda$, and LDP-FL transmit bit-packed one-bit updates, whereas Gaussian, Laplace, and the no-privacy baseline use full-precision uplinks. Figure~\ref{fig:communication} compares transmitted traffic, uplink transfer success, and delivery latency.

\begin{figure}[!t]
\centering
\includegraphics[width=0.98\textwidth]{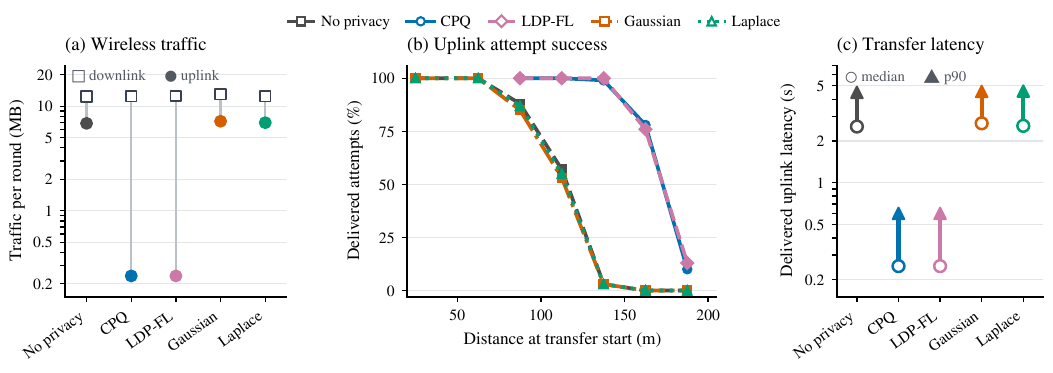}
\caption{Authenticated V2I communication under mobility-dependent connectivity using ML-DSA-44. Panels show wireless traffic, distance-dependent uplink attempt success, and successful-uplink latency.}
\label{fig:communication}
\end{figure}

As shown in Fig.~\ref{fig:communication}, one-bit quantization reduces authenticated uplink size from $113.76$ to $8.65$~KB, a $92.4\%$ reduction. Including retransmissions and unsuccessful attempts, mean uplink traffic decreases by $96.5\%$, while the use of full-precision global-model downlinks limits the overall traffic reduction to approximately $34\%$. Compact updates also improve delivery under intermittent contact. CPQ and LDP-FL achieve $96.3\%$ uplink transfer-attempt success, compared with $32.5\%$ for the full-precision baseline, while median successful-uplink latency decreases from $2.54$ to $0.25$~s. The distance-dependent results further demonstrate the reliability advantage of shorter model transfers as vehicles move away from the RSU.

To examine the effect of post-quantum signature selection on authenticated V2I communication, we compare ML-DSA-44, Falcon-512, and SLH-DSA-SHA2-128f with full-precision no-privacy and one-bit CPQ updates at $\epsilon_p=1$. The experiment uses the same mobility-coupled IEEE~802.11p framework over 30 federated rounds, with one simulation seed per configuration. Table~\ref{tab:e3_signature_sensitivity} reports authenticated uplink size, transfer-attempt success, and overall delivery of initiated uplinks across all rounds.

\begin{table}[!t]
\centering
\caption{Signature-scheme sensitivity of V2I uplinks. FP denotes full-precision model updates.}
\label{tab:e3_signature_sensitivity}
\renewcommand{\arraystretch}{1.15}
\begin{tabular*}{\columnwidth}{@{\extracolsep{\fill}}llccc@{}}
\toprule
Scheme & Update &
\shortstack{UL size\\(KB)} &
\shortstack{Attempt\\success (\%)} &
\shortstack{UL delivery\\(\%)} \\
\midrule
ML-DSA-44 & FP  & 113.60 & 32.2 & 74.7 \\
          & CPQ & 8.65   & 97.0 & 99.8 \\
\midrule
Falcon-512 & FP  & 111.90 & 33.3 & 78.1 \\
           & CPQ & 6.89   & 99.4 & 99.8 \\
\midrule
SLH-DSA-SHA2-128f & FP  & 128.40 & 29.9 & 74.8 \\
                   & CPQ & 23.34  & 48.2 & 93.1 \\
\bottomrule
\end{tabular*}
\end{table}

The effect of signature selection is most pronounced for compact CPQ updates. Falcon-512 produces the smallest authenticated uplink at $6.89$~KB, achieving $99.4\%$ transfer-attempt success and $99.8\%$ overall delivery. In contrast, SLH-DSA-SHA2-128f increases the CPQ uplink to $23.34$~KB, with corresponding rates of $48.2\%$ and $93.1\%$. For full-precision uploads, attempt success remains near $30\%$ across all three schemes, indicating that the larger model payload dominates transmission requirements. These results illustrate the growing importance of authentication overhead when model updates are highly compressed. Additional distance-dependent results are provided in Appendix~\ref{app:e3_signature_network}.

\subsection{Mobility-Constrained Federated Learning}
\label{subsec:exp_network}

We next examine how authenticated, mobility-dependent V2I communication affects federated participation and routing performance at $\epsilon_p=1$. Unlike ideal communication, only vehicles receiving the authenticated current-round global model can contribute updates, and aggregation includes only successfully delivered and verified uploads. Table~\ref{tab:network_results} compares the six mechanisms across matched simulation seeds. Routing metrics are averaged over rounds 91--100, while communication and participation metrics cover all 100 rounds. Values are reported as mean $\pm$ sample standard deviation across five seeds.

\begin{figure}[!t]
\centering
\begin{minipage}[t]{0.66\textwidth}
    \centering
    \includegraphics[width=\linewidth]
    {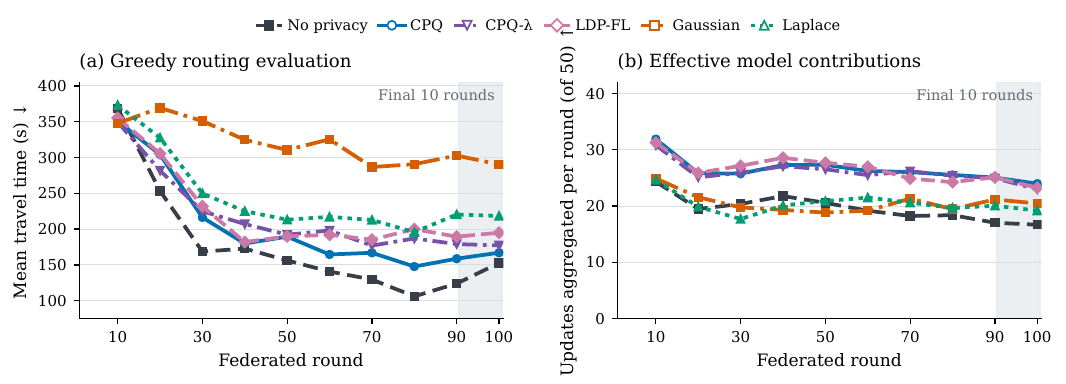}
    \caption{Routing performance and verified aggregation under mobility-constrained V2I communication.}
    \label{fig:mobility_learning}
\end{minipage}
\hfill
\begin{minipage}[t]{0.30\textwidth}
    \centering
    \includegraphics[width=\linewidth]
    {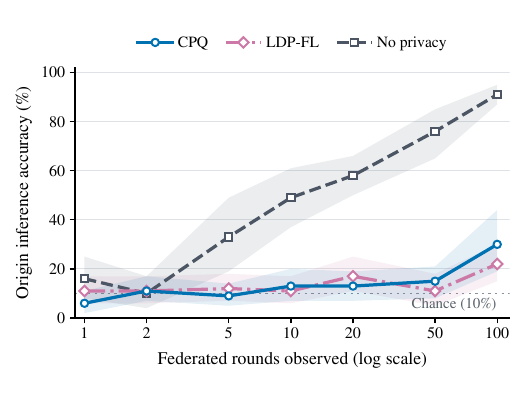}
    \vspace{-10pt}
    \caption{Shadow-run origin inference at $\epsilon_p\!=\!0.3$.}
    \label{fig:origin_inference_shadow}
\end{minipage}
\end{figure}

\begin{table}[!t]
\centering
\caption{Routing performance and participation under mobility-constrained V2I communication.}
\label{tab:network_results}
\renewcommand{\arraystretch}{1.15}
\begin{tabular*}{\textwidth}{@{\extracolsep{\fill}}lcccccc@{}}
\toprule
Method &
\shortstack{Travel\\(s) $\downarrow$} &
\shortstack{Arrival\\(\%) $\uparrow$} &
\shortstack{DL reach\\(\%) $\uparrow$} &
\shortstack{UL delivery\\(\%) $\uparrow$} &
\shortstack{Updates\\per round $\uparrow$} &
\shortstack{$\Delta T$\\(s)} \\
\midrule
No privacy & $152.6\pm20.2$ & $80.4\pm8.1$ & $51.8\pm2.0$ & $75.7\pm1.3$ & $19.6\pm0.9$ & $+40.8\pm56.4$ \\
CPQ & $167.0\pm33.5$ & $79.8\pm8.8$ & $53.3\pm1.1$ & $99.4\pm0.2$ & $26.5\pm0.6$ & $+22.4\pm35.8$ \\
CPQ-$\lambda$ ($\lambda=0.75$) & $177.2\pm41.5$ & $75.8\pm9.2$ & $52.4\pm0.3$ & $99.5\pm0.3$ & $26.1\pm0.2$ & $+24.4\pm54.9$ \\
LDP-FL & $194.9\pm53.0$ & $72.8\pm11.0$ & $53.4\pm1.2$ & $99.4\pm0.3$ & $26.5\pm0.5$ & $+7.2\pm64.7$ \\
Gaussian & $290.3\pm28.3$ & $51.8\pm6.9$ & $55.6\pm1.8$ & $74.0\pm1.1$ & $20.6\pm0.9$ & $+64.1\pm48.7$ \\
Laplace & $218.4\pm33.3$ & $66.4\pm9.0$ & $53.8\pm2.2$ & $75.9\pm2.3$ & $20.4\pm1.4$ & $+21.5\pm50.1$ \\
\bottomrule
\end{tabular*}

\vspace{2pt}
\begin{minipage}{\textwidth}
\footnotesize
\emph{Notes:} UL delivery is conditioned on initiated uploads. $\Delta T$ is the seed-paired V2I minus ideal-communication travel time, with positive values indicating an increase.
\end{minipage}
\end{table}

Mobility-dependent connectivity substantially restricts federated participation, with authenticated downlink reach remaining between $51.8\%$ and $55.6\%$ across the evaluated methods. Among initiated uplinks, CPQ, CPQ-$\lambda$ ($\lambda=0.75$), and LDP-FL achieve $99.4$--$99.5\%$ delivery, compared with $75.7\%$ for the full-precision no-privacy baseline. Consequently, the one-bit mechanisms contribute $26.1$--$26.5$ verified updates per round, approximately $33$--$35\%$ more than the baseline's $19.6$. Figure~\ref{fig:mobility_learning}(b) shows that this participation advantage persists throughout training. These results demonstrate the practical importance of compact uplinks, although full-precision downlink delivery continues to limit aggregation participation.

The resulting routing performance is shown in Fig.~\ref{fig:mobility_learning}(a). Despite nearly identical communication characteristics and aggregation participation, CPQ achieves a $14.3\%$ lower mean travel time than independent LDP-FL ($167.0$ versus $194.9$~s) and a higher arrival rate ($79.8\%$ versus $72.8\%$). CPQ-$\lambda$ ($\lambda=0.75$) achieves intermediate mean travel time ($177.2$~s) and arrival rate ($75.8\%$) while retaining finite stronger-observation privacy bounds. This favorable mean routing trend is consistent with the aggregation-accuracy benefits of correlated quantization, although variability across seeds limits conclusions about the magnitude of the utility improvement. The seed-paired comparisons with ideal communication further show variable increases in travel time across mechanisms, highlighting the combined influence of wireless delivery constraints and privacy perturbation on federated routing.

\subsection{Empirical Vehicle-Origin Inference}
\label{subsec:exp_origin_attack}

We evaluate vehicle-origin inference by an honest-but-curious RSU under mobility-constrained communication. The experiment uses 50 learning vehicles distributed across ten possible starting edges, giving a random-guessing accuracy of $10\%$. At $\epsilon_p=0.3$, the attacker observes successfully received model updates, which can be associated with the same vehicle across rounds, but does not know private trajectories or the CPQ group secret. We consider a known-origin attacker with one labeled vehicle from each class and a shadow-run attacker trained on independent simulations without labeled vehicles in the target deployment. Both attacks accumulate evidence across federated rounds and are evaluated using three independent seeds. Implementation details are provided in Appendix~\ref{app:origin_attack}.

As shown in Fig.~\ref{fig:origin_inference_shadow}, shadow-run inference accuracy reaches $91\%$ after 100 rounds without privacy, compared with $30\%$ for CPQ and $22\%$ for LDP-FL. In the known-origin setting, the corresponding accuracies are $69\%$, $13\%$, and $12\%$. Both private mechanisms substantially reduce the effectiveness of the evaluated attacks, although the shadow-run results remain above chance after prolonged observation. These findings provide application-level evidence of reduced vehicle-origin leakage and complement the single-upload privacy guarantees established in Section~\ref{sec:dp}.

\section{Conclusion}
\label{sec:conclusion}

This paper presented a privacy-preserving and post-quantum-authenticated framework for vehicular federated reinforcement learning under mobility-constrained V2I communication. CPQ reduces one-bit aggregation error through correlated randomness while preserving parameter-level local differential privacy, and CPQ-$\lambda$ extends the framework with stronger-observation and vector-level privacy guarantees. Theoretical analysis and experiments demonstrate the aggregation-accuracy benefits of controlled correlation, while mobility-coupled evaluations show that compact authenticated updates improve reliable delivery and federated participation while supporting favorable routing performance. Vehicle-origin inference experiments further demonstrate reduced empirical privacy leakage. These findings highlight the importance of jointly considering privacy, authentication, and communication efficiency in vehicular learning. Future work will explore more efficient global-model distribution and coordination across multiple RSUs.

\section*{Code and Data Availability}
The source code is publicly available at
\url{https://github.com/solidlabnetwork/cpq-vehicular-frl}.
The experimental data are archived on Zenodo at
\url{https://doi.org/10.5281/zenodo.23693204}.

\section*{Acknowledgments}
This work is based upon the work partly supported by the National Center for Transportation Cybersecurity and Resiliency (TraCR) (a U.S. Department of Transportation National University Transportation Center) headquartered at Clemson University, Clemson, South Carolina, USA. Any opinions, findings, conclusions, and recommendations expressed in this material are those of the author(s) and do not necessarily reflect the views of TraCR, and the U.S. Government assumes no liability for the contents or use thereof.

\bibliography{references}

\clearpage
\appendix
\section*{Appendices}

\counterwithin{equation}{section}
\counterwithin{figure}{section}
\counterwithin{table}{section}
\counterwithin{algorithm}{section}

\section{One-Bit Specializations of Related Quantization Mechanisms}
\label{app:one_bit_equivalence}

This appendix examines the relationship between LDP-FL
and several established one-bit randomization mechanisms.
We identify the conditions under which these mechanisms
induce the same scalar output distribution and reconstruction
levels as LDP-FL, and distinguish these cases from
mechanisms that share only the transmitted bit distribution
or employ different randomization rules.

Consider a scalar parameter $w\in[c-r,c+r]$,
with public clipping center $c$ and radius $r>0$.
Let $x=w-c\in[-r,r]$ denote the centered input,
and define $\alpha_p=\alpha(\epsilon_p)$ as
in~\eqref{eq:alpha}. For mechanisms originally
defined on a zero-centered interval, we apply
the input transformation $x=w-c$ and shift the
reconstructed output by $c$.

The reference LDP-FL quantizer generates
$B\in\{0,1\}$ according to
\begin{equation}
\Pr(B=1\mid w)
=
\frac12+\frac{x}{2r\alpha_p},
\label{eq:equiv_reference}
\end{equation}
and reconstructs
$W=c+r\alpha_p(2B-1)$.

Two mechanisms are estimator-equivalent when,
after the corresponding input transformation,
they have the same binary response probabilities
and reconstruction levels for every admissible
input. Channel equivalence requires only the
same conditional distribution of transmitted
bits.

Table~\ref{tab:one_bit_equivalence} summarizes
the conditions under which established one-bit
mechanisms coincide with LDP-FL and identifies
related mechanisms that are not generally
estimator-equivalent.

\begin{table}[!b]
\centering
\caption{One-bit mechanisms equivalent to LDP-FL under specified configurations.}
\label{tab:one_bit_equivalence}
\renewcommand{\arraystretch}{1.17}

\begin{tabular*}{\textwidth}
{@{\extracolsep{\fill}}
p{0.25\textwidth}
p{0.70\textwidth}
@{}}

\toprule
\textbf{Mechanism}
& \textbf{One-bit configuration and equivalence conditions} \\
\midrule

\citet{duchi2018minimax}
& Scalar case $d=1$, with the input mapped
  to the common clipping interval $[c-r,c+r]$. \\

1BitMean~\citep{ding2017collecting}
& Counter range $[0,m]$, with $m=2r$
  and input mapping $u=x+r$. \\

PBM~\citep{chen2022poisson}
& One Bernoulli trial ($m=1$), $C=r$,
  and $\theta=\frac12\tanh(\epsilon_p/2)$.
  The original constraint $\theta\leq1/4$
  requires $\epsilon_p\leq\ln 3$. \\

RQM~\citep{youn2023randomized}
& Two output levels ($m=2$), $C=r$,
  and $\Delta=2r/(e^{\epsilon_p}-1)$,
  using exact local-privacy calibration. \\

MVU~\citep{chaudhuri2022privacy}
& One output bit, pure-$\epsilon_p$-LDP,
  hard unbiasedness, and exact optimization
  on a grid containing the clipping endpoints.
  The unique optimum yields the LDP-FL
  reconstruction levels. \\

$\beta$-StoSign~\citep{xiang2023distributed}
& Centered input $x=w-c$, clipping bound $r$,
  and $\beta=2r/(e^{\epsilon_p}-1)$.
  Reconstruction uses $c\pm(r+\beta)$,
  followed by arithmetic-mean aggregation. \\

z-SignFedAvg~\citep{tang2024zsignfedavg}
& Uniform-noise limit $z\to\infty$,
  centered input $x=w-c$, noise half-width
  $\sigma=r\alpha(\epsilon_p)$,
  and reconstruction $c\pm\sigma$. \\

\bottomrule
\end{tabular*}
\end{table}


\noindent\textbf{Fixed-level binary characterization.}
The following result recalls the scalar optimality property
used in Section~\ref{subsec:ldpfl} and makes the assumptions
needed for the equivalence mappings explicit; see also
\citep[Proposition~1]{salehi2024corbin}.

\begin{proposition}[Unbiased fixed-level one-bit quantization]
\label{prop:onebit_fixed_level}
Let $w\in[c-r,c+r]$, $r>0$, and $\epsilon_p>0$. Consider a
binary mechanism with $B\in\{0,1\}$ and fixed public
reconstruction levels $a_0=c-R_0$ and $a_1=c+R_1$,
where $a_0<a_1$. If $\mathbb{E}[a_B\mid w]=w$ for every
admissible $w$, then
\begin{equation}
\Pr(B=1\mid w)=\frac{w-c+R_0}{R_0+R_1}.
\label{eq:onebit_affine}
\end{equation}
This mechanism satisfies pure $\epsilon_p$-PLDP if and only if
$R_0,R_1\ge r\alpha(\epsilon_p)$. Its mean squared error is
\begin{equation}
\mathbb{E}[(a_B-w)^2\mid w]
=\bigl(R_0+w-c\bigr)\bigl(R_1-w+c\bigr).
\label{eq:onebit_general_mse}
\end{equation}
The unique pointwise MSE-minimizing choice is
$R_0=R_1=r\alpha(\epsilon_p)$, which yields the LDP-FL
response and reconstruction levels in
\eqref{eq:alpha}--\eqref{eq:ldpfl_output}.
\end{proposition}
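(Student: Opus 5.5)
The plan has three parts: force the response probability from unbiasedness, read off the privacy condition from the likelihood ratio, and minimize the resulting quadratic MSE. Write $p(w)=\Pr(B=1\mid w)$ and $x=w-c\in[-r,r]$. Unbiasedness reads $(1-p)a_0+p\,a_1=w$, that is, $-R_0+p(R_0+R_1)=x$. This is a single linear equation in $p$ with nonzero coefficient $R_0+R_1=a_1-a_0>0$, so it forces \eqref{eq:onebit_affine}.

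\textbf{Admissibility and PLDP.} For $p$ to be a probability on all of $[-r,r]$ we need $R_0\ge r$ and $R_1\ge r$; the PLDP condition will supersede this anyway. Because the output alphabet is binary, pure $\epsilon_p$-PLDP is equivalent to
\[
p(w)\le e^{\epsilon_p}p(w') \quad\text{and}\quad 1-p(w)\le e^{\epsilon_p}\bigl(1-p(w')\bigr)
\]
for all admissible $w,w'$. Since $p$ is affine and increasing in $x$, the worst ratios occur at the endpoints. The two conditions become
\[
\frac{R_0+r}{R_0-r}\le e^{\epsilon_p} \qquad\text{and}\qquad \frac{R_1+r}{R_1-r}\le e^{\epsilon_p}.
\]
The map $R\mapsto (R+r)/(R-r)$ is decreasing on $(r,\infty)$ and equals $e^{\epsilon_p}$ exactly at $R=r(e^{\epsilon_p}+1)/(e^{\epsilon_p}-1)=r\alpha(\epsilon_p)$. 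Hence each inequality is equivalent to $R_k\ge r\alpha(\epsilon_p)$. This also excludes the degenerate cases $R_k\le r$, where a zero probability makes the likelihood ratio infinite. Both directions of the ``if and only if'' follow because these endpoint ratios are exactly the supremum of the likelihood ratios.

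\textbf{MSE.} For an unbiased binary estimator, $\mathbb{E}[(a_B-w)^2]=(a_1-w)(w-a_0)$, the standard two-point variance identity. This can be checked by writing the MSE as $p(a_1-w)^2+(1-p)(w-a_0)^2$ and substituting $p=(w-a_0)/(a_1-a_0)$. With $a_1-w=R_1-x$ and $w-a_0=R_0+x$ this gives \eqref{eq:onebit_general_mse}.

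\textbf{Optimality.} For fixed $x\in[-r,r]$, both factors $R_0+x$ and $R_1-x$ are positive and strictly increasing in their respective $R_k$ on the feasible set. Therefore the product is strictly minimized, pointwise and simultaneously for every $x$, by taking both $R_k$ at their lower bound $r\alpha(\epsilon_p)$. Uniqueness follows from this strict monotonicity. Substituting $R_0=R_1=r\alpha$ into \eqref{eq:onebit_affine} recovers $q(w)$ in \eqref{eq:alpha} and the levels in \eqref{eq:ldpfl_output}.

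The only step requiring care is the PLDP characterization: justifying that the supremum of the likelihood ratios is attained at the interval endpoints, and handling the boundary cases where a probability vanishes. Everything else is direct algebra.
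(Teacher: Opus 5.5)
Your proposal is correct and follows the paper's proof essentially step for step. Unbiasedness forces the affine response, the binary-output PLDP constraints reduce to the endpoint ratios $(R_k+r)/(R_k-r)\le e^{\epsilon_p}$, i.e.\ $R_k\ge r\alpha(\epsilon_p)$, and strict monotonicity of the positive factors in $(R_0+x)(R_1-x)$ yields the unique minimizer; the only cosmetic difference is that you use the two-point identity $(a_1-w)(w-a_0)$ where the paper writes the same quantity as $(R_0+R_1)^2p(w)(1-p(w))$.
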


\begin{proof}
Writing $p(w)=\Pr(B=1\mid w)$, unbiasedness gives
$c-R_0+(R_0+R_1)p(w)=w$, establishing
\eqref{eq:onebit_affine}. Since $p(w)\in[0,1]$ over
the clipping interval, $R_0,R_1\ge r$.
The probability is affine and increasing, so the two
binary-output privacy constraints attain their extreme
ratios at the clipping endpoints:
\begin{equation}
\epsilon_M=\max\!\left\{
\log\frac{R_0+r}{R_0-r},\,
\log\frac{R_1+r}{R_1-r}
\right\}.
\label{eq:onebit_exact_privacy}
\end{equation}
For finite $\epsilon_p$, the denominators are positive.
The requirement $\epsilon_M\le\epsilon_p$ is equivalent to
$R_0,R_1\ge r(e^{\epsilon_p}+1)/(e^{\epsilon_p}-1)$.

As the estimator is unbiased, its MSE equals its variance,
$(R_0+R_1)^2p(w)(1-p(w))$, which simplifies to
\eqref{eq:onebit_general_mse}. For
$x=w-c\in[-r,r]$, this expression is
$(R_0+x)(R_1-x)$. Both factors are strictly
positive and strictly increasing in their respective
reconstruction distances throughout the feasible set.
Its unique minimum is therefore at
$R_0=R_1=r\alpha(\epsilon_p)$, giving the LDP-FL
channel and levels. 
\end{proof}


\noindent\textbf{Mechanism-specific equivalences.}
We now verify the rows of Table~\ref{tab:one_bit_equivalence}.
In each case let $x=w-c\in[-r,r]$,
$E=e^{\epsilon_p}$, and $\alpha_p=(E+1)/(E-1)$.
An equivalent scalar estimator must satisfy
\begin{equation}
\Pr(B=1\mid x)=\frac12+\frac{x}{2r\alpha_p},
\qquad a_B=c+r\alpha_p(2B-1).
\label{eq:onebit_matching_test}
\end{equation}
All matching statements below refer to these conditional
output distributions; aggregation equivalence additionally
assumes independent client randomization and identical
arithmetic-mean reconstruction.

\noindent\emph{1) Duchi et al.}
In the scalar $d=1$ case, first stochastically round $x$ to
$S\in\{0,1\}$ with $\Pr(S=1)=(x+r)/(2r)$, then apply
binary randomized response, retaining $S$ with probability
$E/(E+1)$. Thus
\begin{equation}
\Pr(B=1\mid x)
=\frac{1}{E+1}+\frac{E-1}{E+1}\frac{x+r}{2r}
=\frac12+\frac{x}{2r\alpha_p}.
\end{equation}
The scalar output levels $c\pm r\alpha_p$ establish
estimator equivalence~\citep{duchi2018minimax}.

\noindent\emph{2) 1BitMean.}
For the telemetry interval $[0,m]$, set $m=2r$ and
$u=x+r$. Its response probability is
\begin{equation}
\Pr(B=1\mid u)=\frac{1}{E+1}
+\frac{u}{m}\frac{E-1}{E+1}
=\frac12+\frac{x}{2r\alpha_p}.
\end{equation}
Shifting its decoded estimate by $c-r$ gives
$c-r+2r((E+1)B-1)/(E-1)=c+r\alpha_p(2B-1)$,
proving estimator equivalence~\citep{ding2017collecting}.

\noindent\emph{3) PBM.}
For a single Bernoulli trial ($m=1$), clipping bound $C=r$,
and $\theta=\tfrac12\tanh(\epsilon_p/2)=1/(2\alpha_p)$,
PBM produces
\begin{equation}
\Pr(B=1\mid x)=\frac12+\frac{\theta x}{r}
=\frac12+\frac{x}{2r\alpha_p}.
\end{equation}
The reconstructed value $c+(r/\theta)(B-1/2)$ equals
$c+r\alpha_p(2B-1)$. Under the original constraint
$\theta\le 1/4$, this calibration is admissible for
$\epsilon_p\le\log 3$. This identity concerns the one-trial
scalar randomizer observed locally; the original PBM also
studies secure aggregation~\citep{chen2022poisson}.

\noindent\emph{4) RQM.}
With two quantization levels ($m=2$), the range-extension
parameter $\Delta=2r/(E-1)$ gives
$X_{\max}=r+\Delta=r\alpha_p$. The upper-level
probability and decoded value satisfy
\begin{equation}
\begin{aligned}
\Pr(B=1\mid x)&=\frac{x+X_{\max}}{2X_{\max}}
=\frac12+\frac{x}{2r\alpha_p},\\
a_B&=c+X_{\max}(2B-1).
\end{aligned}
\end{equation}
Thus two-level RQM with this exact local-privacy calibration
is estimator-equivalent to LDP-FL~\citep{youn2023randomized}.

\noindent\emph{5) MVU.}
Consider the exactly solved one-bit MVU optimization with
pure $\epsilon_p$-LDP and hard unbiasedness constraints on
an input grid containing both clipping endpoints. Since
there are only two fixed output levels, unbiasedness forces
the affine probabilities in~\eqref{eq:onebit_affine} at every
grid point. The privacy constraints on the grid are tightest
at its endpoints, so Proposition~\ref{prop:onebit_fixed_level}
applies.

Every grid-point MSE is uniquely minimized at
$R_0=R_1=r\alpha_p$, which is also the unique optimizer
of their sum. Unbiased stochastic interpolation between grid
points preserves the affine response, establishing the scalar
LDP-FL estimator over the full interval
\citep{chaudhuri2022privacy}.

\noindent\emph{6) $\beta$-StoSign.}
For centered input $x$, clipping bound $r$, and
$\beta=2r/(E-1)$, the probability of a positive sign is
\begin{equation}
\Pr(S=+1\mid x)=\frac{r+\beta+x}{2(r+\beta)}
=\frac12+\frac{x}{2r\alpha_p},
\end{equation}
because $r+\beta=r\alpha_p$. With matched scalar decoding
$a_S=c+(r+\beta)S$ and arithmetic-mean aggregation,
this yields the LDP-FL estimator. The original sign-based
aggregation rule is a separate algorithmic choice
\citep{xiang2023distributed}.

\noindent\emph{7) Uniform-noise sign quantization.}
In the uniform-noise limit of z-SignFedAvg, let
$U\sim\mathrm{Uniform}[-1,1]$ be locally generated and set
$\sigma=r\alpha_p$. Since $|x|\le r<\sigma$,
\begin{equation}
\Pr(B=1\mid x)
=\Pr(x+\sigma U>0)
=\frac12+\frac{x}{2\sigma}
=\frac12+\frac{x}{2r\alpha_p}.
\end{equation}
Reconstruction as $a_B=c+\sigma(2B-1)$ gives precisely
the LDP-FL scalar estimator~\citep{tang2024zsignfedavg}.


\noindent\textbf{Implication for the independent baseline.}
For any fixed collection of clipped client values, the seven
specified scalar configurations generate the same conditional
distribution of reconstructed values as LDP-FL. If all clients
use independent private randomness and the same arithmetic-mean
aggregation, their aggregate estimators have the same
conditional distribution and MSE as~\eqref{eq:ldpfl_mse}.
The CPQ comparison therefore extends to these matched
independent one-bit scalar configurations. This conclusion
uses the stated scalar channel, decoding, clipping, and
aggregation assumptions.

\section{Proof of Theorem~\ref{thm:cpq}}
\label{app:proof_cpq}

\noindent\textbf{(i) PLDP.}
Since $1-V\sim\mathrm{Uniform}[0,1]$, both $Q_k$ satisfy
$P(Q_k=c+r\alpha)=q(w_k)$, matching the LDP-FL output distribution. The PLDP ratio for the upper
output is
\[
  \frac{q(w)}{q(w')} \leq \frac{q(c+r)}{q(c-r)}
  = \frac{\tfrac{1}{2}+\tfrac{1}{2\alpha}}{\tfrac{1}{2}-\tfrac{1}{2\alpha}}
  = \frac{\alpha+1}{\alpha-1} = e^{\epsilon_p}.
\]
By symmetry, the same bound holds for the lower (complementary) output:
\[
  \frac{1-q(w)}{1-q(w')} \leq \frac{1-q(c-r)}{1-q(c+r)}
  = \frac{\tfrac{1}{2}+\tfrac{1}{2\alpha}}{\tfrac{1}{2}-\tfrac{1}{2\alpha}}
  = \frac{\alpha+1}{\alpha-1} = e^{\epsilon_p}.
\]
Since the output alphabet is binary, bounding the ratio at both atoms bounds it for every
measurable output set $\mathcal{T}\subseteq\mathbb{R}$, establishing $\epsilon_p$-PLDP.

\noindent\textbf{(i) Unbiasedness.}
$\mathbb{E}[Q_k(w_k,V)] = c + r\alpha(2q(w_k)-1) = w_k$, so
$\mathbb{E}[\overline{W}_{g,j}]
= c_j + \frac{r_j\alpha}{n}\sum_i(2q(w_{i,j})-1)
= \frac{1}{n}\sum_i w_{i,j}$.

\noindent\textbf{(ii) Pairwise MSE optimality.}
Let $U_k=(Q_k-c)/(r\alpha)\in\{-1,+1\}$ and $x_k=(w_k-c)/(r\alpha)$.
Expanding the MSE of the pair-average estimator
$\tfrac{1}{2}(Q_1+Q_2)$ (hence the factor $\tfrac{r^2\alpha^2}{4}$),

\begin{align}   \label{eq:mse_expanded}
      m(Q_1,Q_2,w_1,w_2)
  =&\frac{r^2\alpha^2}{4}\bigl[
    (1-x_1^2)+(1-x_2^2)+2(\mathbb{E}[U_1U_2]-x_1x_2)\bigr].
\end{align}

The terms $(1-x_k^2)$ are fixed by the marginals, so minimizing
\eqref{eq:mse_expanded} is equivalent to minimising $\mathbb{E}[U_1U_2]$.

\noindent\textit{Fr\'echet--Hoeffding lower bound.}
For binary $U_k\in\{-1,+1\}$ with $P(U_k=+1)=q_k=(1+x_k)/2$,
$\mathbb{E}[U_1U_2]=1-2P(U_1\neq U_2)$.
Maximizing $P(U_1\neq U_2)$ over all couplings gives
\begin{align}
    P^*(U_1\neq U_2) &= \min(q_1,1-q_2)+\min(1-q_1,q_2)\nonumber = 1-|q_1+q_2-1|,
\end{align}

so the minimum cross-term is
$\mathbb{E}_{\min}[U_1U_2] = 2|q_1+q_2-1|-1 = |x_1+x_2|-1$.

\noindent\textit{CPQ achieves this bound.}
Under CPQ, $U_1=2\cdot\mathbf{1}_{[V<q_1]}-1$ and
$U_2=2\cdot\mathbf{1}_{[1-V<q_2]}-1$. Computing the joint probabilities:
\begin{align*}
  P(U_1=+1,U_2=+1) &= P(V<q_1,\;V>1-q_2) = \max(q_1+q_2-1,\,0),\\
  P(U_1=-1,U_2=-1) &= P(V\geq q_1,\;V\leq 1-q_2) = \max(1-q_1-q_2,\,0).
\end{align*}
Therefore $P(U_1\neq U_2) = 1-|q_1+q_2-1|$, so $\mathbb{E}[U_1U_2]=|x_1+x_2|-1$,
exactly matching the Fr\'echet lower bound. Substituting into
\eqref{eq:mse_expanded},
\begin{align*}
  m^* &= \frac{r^2\alpha^2}{4}
    \bigl[2-x_1^2-x_2^2 + 2(|x_1+x_2|-1) - 2x_1x_2\bigr] = \frac{r^2\alpha^2}{2}\!\left[|x_1+x_2|-\frac{(x_1+x_2)^2}{2}\right],
\end{align*}
which is \eqref{eq:cpq_mse_opt}.

\noindent\textit{Comparison with LDP-FL.}
For independent quantizers, $\mathbb{E}[U_1U_2]=x_1x_2$, so
\[
  m_{\mathrm{LDP}} - m^*
  = \frac{r^2\alpha^2}{2}\bigl(1+x_1x_2-|x_1+x_2|\bigr) \geq 0,
\]
where the inequality follows from $1+x_1x_2-|x_1+x_2|=(1-x_1)(1-x_2)\ge 0$
when $x_1+x_2\ge 0$ and $=(1+x_1)(1+x_2)\ge 0$ when $x_1+x_2<0$. Equality therefore holds
if and only if one marginal is deterministic ($x_1\in\{-1,+1\}$ or $x_2\in\{-1,+1\}$).
Since $|x_k|=|w_k-c|/(r\alpha)\le 1/\alpha<1$ for every $w_k\in[c-r,c+r]$, this never
occurs, so the improvement is strict for all clipped weights. This establishes the strict MSE advantage over independent quantization. Finally, writing $s=x_1+x_2$,
\eqref{eq:cpq_mse_opt} gives
$m^*=\frac{r^2\alpha^2}{2}\,|s|\bigl(1-\tfrac{|s|}{2}\bigr)$, which is zero if and only if
$s=0$, since $0<|s|\le 2$ makes both factors positive. Hence the paired noises cancel
exactly precisely when $w_1+w_2=2c$.
\hfill$\square$

\section{Proof of Theorem~\ref{thm:cpqlambda}}
\label{app:proof_cpqlambda}
Write $q_k=q(w_k)$, $p_2=1-q_2$,
$\alpha=\alpha(\epsilon_p)$, and
$\mu=\lambda_1\lambda_2$.
Define the endpoint probabilities
\[
  q_-=\frac{1}{e^{\epsilon_p}+1},
  \qquad
  q_+=\frac{e^{\epsilon_p}}{e^{\epsilon_p}+1}.
\]
Then $q(w)\in[q_-,q_+]$,
$q_-+q_+=1$, and $q_+/q_-=e^{\epsilon_p}$.

\noindent\textbf{(a) Marginal PLDP and unbiasedness.}
The mixture representation
in~\eqref{eq:cpqlambda_mixture}
follows by conditioning on the independent
mixing coins $(C_1,C_2)$.
Both mixture components have the same
marginals $\Pr(b_k=1)=q_k$. Hence each bit has exactly the LDP-FL output law, which satisfies $\epsilon_p$-PLDP~\citep{sun2020ldp}. The reconstruction expectation is therefore unchanged:
\[
 \mathbb{E}[\overline{W}_{g,j}]
 =c_j+\frac{r_j\alpha}{n}\sum_i(2q(w_{i,j})-1)
 =\frac{1}{n}\sum_iw_{i,j}.
\]

\noindent\textbf{(b) Pair-view privacy.}
The output alphabet is finite, so it is sufficient to bound the likelihood ratio at each atom. From the mixture representation
in~\eqref{eq:cpqlambda_mixture},
\begin{align}
 P_{11}(q_1,q_2)
 &=\mu\max(q_1+q_2-1,0)+(1-\mu)q_1q_2,
 \label{eq:cpqlambda_p11}\\
 P_{10}(q_1,q_2)
 &=\mu\min(q_1,1-q_2)+(1-\mu)q_1(1-q_2).
 \label{eq:cpqlambda_p10}
\end{align}
The remaining atoms follow from complementation. In particular, with the reflection $R(q_1,q_2)=(1-q_1,1-q_2)$,
\[
 P_{00}=P_{11}\circ R,\qquad P_{01}=P_{10}\circ R.
\]
Since $q_-+q_+=1$, $R$ maps $[q_-,q_+]^2$ onto itself. Therefore the worst ratios for $(0,0)$ and $(0,1)$ equal those for $(1,1)$ and $(1,0)$, respectively.

For $(1,1)$, $P_{11}$ is nondecreasing in $q_1$ and is strictly increasing because $1-\mu>0$. Thus the largest ratio over $q_1,q_1'$ is obtained at $q_1=q_+$ and $q_1'=q_-$. Since $q_-+q_2-1\le0$ and $q_++q_2-1=q_2-q_-\ge0$,
\begin{align*}
 \frac{P_{11}(q_+,q_2)}{P_{11}(q_-,q_2)}
 &=\frac{\mu(q_2-q_-)+(1-\mu)q_+q_2}
 {(1-\mu)q_-q_2}=e^{\epsilon_p}+\frac{\mu}{1-\mu}
 \left(\frac{1}{q_-}-\frac{1}{q_2}\right).
\end{align*}
The right-hand side is increasing in $q_2$, hence it is maximized at $q_2=q_+$. Using
\[
 \frac{1}{q_-}-\frac{1}{q_+}
 =(e^{\epsilon_p}+1)(1-e^{-\epsilon_p})
 =e^{\epsilon_p}-e^{-\epsilon_p},
\]
the maximum equals $e^{\epsilon_S}$ in~\eqref{eq:eps_server}.

For $(1,0)$, $P_{10}$ is also nondecreasing in $q_1$. Since $p_2\in[q_-,q_+]$, $\min(q_+,p_2)=p_2$ and $\min(q_-,p_2)=q_-$. Therefore
\[
 \frac{P_{10}(q_+,q_2)}{P_{10}(q_-,q_2)}
 =\frac{p_2[\mu+(1-\mu)q_+]}
 {q_-[\mu+(1-\mu)p_2]}.
\]
The map $p\mapsto p/[\mu+(1-\mu)p]$ is increasing, so the ratio is maximized at $p_2=q_+$, where it equals $q_+/q_-=e^{\epsilon_p}\le e^{\epsilon_S}$. Reflection handles the other two atoms. Hence every atom, and therefore every output set, satisfies~\eqref{eq:pairview_pldp}. The $(1,1)$ bound is attained at $(q_1,q_1',q_2)=(q_+,q_-,q_+)$, proving tightness.

\noindent\textbf{(b, continued) Privacy under side information.}
Condition on $Z_1=(V,C_2,R_2,w_2)$. The only remaining randomness in $b_1$ is $(C_1,R_1)$, which is independent of $Z_1$. For $q=q(w_1)$,
\begin{equation}
 g_V(q):=\Pr(b_1=1\mid w_1,Z_1)
 =\lambda_1\mathbf{1}_{\{V<q\}}+(1-\lambda_1)q.
 \label{eq:gV}
\end{equation}
The function is nondecreasing in $q$, so the largest ratio for the output $b_1=1$ is $g_V(q_+)/g_V(q_-)$. If $V\in[q_-,q_+)$, the indicator is zero at $q_-$ and one at $q_+$, giving
\begin{align*}
 \frac{g_V(q_+)}{g_V(q_-)}
 &=\frac{\lambda_1+(1-\lambda_1)q_+}{(1-\lambda_1)q_-}=\frac{e^{\epsilon_p}+\lambda_1}{1-\lambda_1}.
\end{align*}
When $V<q_-$ or $V\ge q_+$ the ratio is no larger. For the output $b_1=0$, the probability $1-g_V(q)$ is nonincreasing in $q$, and the corresponding endpoint ratio is maximized over the same interval $V\in[q_-,q_+)$, producing the identical value. Since $q_+-q_->0$, this event has positive probability, so the bound is tight. The follower result follows by replacing $V$ with $1-V$.

\noindent\textbf{(c) Pairwise MSE.}
For fixed inputs, squared pair-average error is a deterministic function of the output pair. Taking expectation under the mixture~\eqref{eq:cpqlambda_mixture} therefore gives
\[
 m_{\mu}=(1-\mu)m_{\mathrm{IND}}+\mu m^*.
\]
Using the independent cross-term $\mathbb{E}[U_1U_2]=x_1x_2$ in~\eqref{eq:mse_expanded} gives
\[
 m_{\mathrm{IND}}=\frac{r^2\alpha^2}{4}(2-x_1^2-x_2^2).
\]
The difference derived in Appendix~\ref{app:proof_cpq} is
\[
 m_{\mathrm{IND}}-m^*
 =\frac{r^2\alpha^2}{2}
 \left(1+x_1x_2-|x_1+x_2|\right).
\]
Substitution establishes~\eqref{eq:cpqlambda_mse}. As shown in the proof of Theorem~\ref{thm:cpq}, the bracket is strictly positive for all admissible clipped inputs because $|x_k|\le1/\alpha<1$. Hence every $\mu>0$ yields a strict improvement over independent quantization.

\noindent\textbf{(d) Population MSE.}
Let $S_\ell-s_\ell$ denote the pair-sum error for pair $\ell$. Pair errors are unbiased and independent across distinct pairs, so
\[
 \mathbb{E}[(\overline{W}_{g,j}-\bar w_j)^2]
 =\frac{1}{n^2}\sum_{\ell=1}^{n/2}
 \mathbb{E}[(S_\ell-s_\ell)^2].
\]
By part~(c), the pair-sum error is the mixture of the independent and CPQ pair-sum errors. The former satisfies
\[
 e_{\mathrm{IND}}
 =r_j^2\alpha^2(2-x_1^2-x_2^2)
 \le2r_j^2\alpha^2,
\]
while Corollary~\ref{cor:cpq_pop} gives $e_{\mathrm{CPQ}}\le2r_j^2\kappa$. Hence
\begin{align*}
 \mathbb{E}[(\overline{W}_{g,j}-\bar w_j)^2]
 &\le\frac{2r_j^2}{n^2}\sum_{\ell=1}^{n/2}
 \left[(1-\mu_\ell)\alpha^2+\mu_\ell\kappa\right]=\frac{r_j^2}{n}
 \left[(1-\bar\mu)\alpha^2+\bar\mu\kappa\right],
\end{align*}
which proves~\eqref{eq:cpqlambda_pop}. \hfill$\square$

\section{Proof of Theorem~\ref{thm:cpqlambda_vector}}
\label{app:cpqlambda_vector_proof}

\noindent\textbf{Joint output distribution.}
Consider one coordinate of a CPQ-$\lambda$ pair, with target
and partner bit-one probabilities $u,z\in[q_-,q_+]$,
respectively, where
\[
q_-=\frac{1}{1+e^{\epsilon_p}},
\qquad
q_+=1-q_-,
\qquad
\mu=\lambda^2.
\]
With probability $\mu$, both clients use complementary shared
randomness. Otherwise, their quantization outputs are
independent. The joint distribution is therefore
\begin{align}
P_{u,z}(1,1)
&=(1-\mu)uz+\mu(u+z-1)_+,\nonumber\\
P_{u,z}(1,0)
&=u-P_{u,z}(1,1),\nonumber\\
P_{u,z}(0,1)
&=z-P_{u,z}(1,1),\nonumber\\
P_{u,z}(0,0)
&=1-u-z+P_{u,z}(1,1),
\label{eq:vector_pair_distribution}
\end{align}
where $(x)_+=\max\{x,0\}$.

\noindent\textbf{Worst-case R\'enyi divergence.}
For a R\'enyi order $\nu>1$, define
\[
H_\nu(P,Q)
=\sum_x P(x)^\nu Q(x)^{1-\nu},
\]
so that
\[
D_\nu(P\|Q)
=\frac{\ln H_\nu(P,Q)}{\nu-1}.
\]
We maximize this divergence over alternative target
probabilities $u,v\in[q_-,q_+]$, while keeping the partner
probability $z$ fixed.

The partner bit follows $\mathrm{Bernoulli}(z)$ under both
target inputs. Conditional on the partner's bit, the target
bit is Bernoulli with success probabilities
\begin{equation}
\begin{aligned}
p_1(u)&=\frac{P_{u,z}(1,1)}{z},\\
p_0(u)&=\frac{P_{u,z}(1,0)}{1-z}.
\end{aligned}
\label{eq:vector_conditional_bits}
\end{equation}
Both probabilities are nondecreasing in $u$, as follows
from~\eqref{eq:vector_pair_distribution}.

For Bernoulli distributions with success probabilities
$p\leq q$, the R\'enyi power sum
\[
h_\nu(p,q)
=p^\nu q^{1-\nu}
+(1-p)^\nu(1-q)^{1-\nu}
\]
is nonincreasing in $p$ and nondecreasing in $q$.
Therefore, increasing the separation between the
alternative target probabilities cannot decrease the
corresponding divergence. Since the partner-bit
distribution remains unchanged, the maximum over
$u$ and $v$ is attained at
\[
(u,v)\in
\{(q_-,q_+),(q_+,q_-)\}.
\]

For either endpoint ordering, the joint probabilities
in~\eqref{eq:vector_pair_distribution} are affine
functions of $z\in[q_-,q_+]$. In particular,
\[
(q_-+z-1)_+=0,
\qquad
(q_++z-1)_+=z-q_-.
\]
The R\'enyi power sum $H_\nu(P,Q)$ is jointly convex
in its distribution arguments for $\nu>1$.
Consequently, its maximum over the resulting affine
interval occurs at $z\in\{q_-,q_+\}$.

Thus,
\begin{equation}
\begin{aligned}
&\sup_{u,v,z\in[q_-,q_+]}
D_\nu(P_{u,z}\|P_{v,z})=
\max_{\substack{
(u,v)\in\{(q_-,q_+),(q_+,q_-)\}\\
z\in\{q_-,q_+\}}}
D_\nu(P_{u,z}\|P_{v,z}),
\end{aligned}
\label{eq:vector_rdp_endpoints}
\end{equation}
which establishes the exact worst-case per-coordinate
divergence in~\eqref{eq:cpqlambda_rdp}.

\noindent\textbf{Vector-level composition.}
Fix the public round state and all other clients'
clipped inputs. Under independent ideal randomization
across coordinates, the joint observation distributions
factorize over the $m$ model coordinates.

Outputs of pairs not containing the target client have
identical distributions under the two alternative
target inputs and contribute zero R\'enyi divergence.
Therefore,
\begin{equation}
D_\nu\!\left(
\mathcal{M}_{\mathrm{obs}}(\mathbf{w}_i)
\middle\|
\mathcal{M}_{\mathrm{obs}}(\mathbf{w}'_i)
\right)
\leq m\,d_\nu.
\label{eq:vector_rdp_composition}
\end{equation}

The conversion from $(\nu,\rho)$-RDP to approximate
differential privacy gives the sufficient vector-level
privacy certificate
\[
\epsilon_v
=
\inf_{\nu>1}
\left[
m\,d_\nu+
\frac{\ln(1/\delta_v)}{\nu-1}
\right],
\]
which establishes
\eqref{eq:cpqlambda_vector_privacy}.

Because the pairing assignments are independent of the
protected inputs, hiding them is a post-processing of
the stronger pairing-revealed observation. The same
privacy guarantee therefore applies to the RSU observer
when the pairings remain hidden.
\hfill$\square$

\noindent\textbf{Calibration of comparison mechanisms.}
For the matched vector-level privacy comparison in
Table~\ref{tab:vector_privacy_comparison}, we use
the exact per-coordinate R\'enyi divergences of the
independent baseline mechanisms.

For LDP-FL with parameter-level privacy budget $s$,
\begin{equation}
d_\nu^{\mathrm{LDP}}(s)
=
\frac{1}{\nu-1}
\ln\frac{
e^{\nu s}+e^{(1-\nu)s}
}{1+e^s}.
\label{eq:vector_ldp_rdp}
\end{equation}

For coordinatewise Laplace noise with scale $\beta$,
let $t=2r/\beta$. The corresponding divergence is
\begin{equation}
d_\nu^{\mathrm{Lap}}(t)
=
\frac{1}{\nu-1}
\ln\left(
\frac{
\nu e^{(\nu-1)t}
+(\nu-1)e^{-\nu t}
}{2\nu-1}
\right).
\label{eq:vector_laplace_rdp}
\end{equation}

For independent Gaussian noise with standard deviation
$\sigma$,
\begin{equation}
d_\nu^{\mathrm{Gau}}
=\frac{\nu(2r)^2}{2\sigma^2}.
\label{eq:vector_gaussian_rdp}
\end{equation}

Each per-coordinate divergence is composed across
the $m$ coordinates and converted to an
$(\epsilon_v,\delta_v)$ guarantee using the same
RDP accounting procedure as in
\eqref{eq:cpqlambda_vector_privacy}.
The mechanisms are calibrated to the common target
$(\epsilon_v,\delta_v)=(1,10^{-5})$.

\noindent\textbf{Aggregation-MSE comparison.}
Define the binary reconstruction scale
\[
A(s)=r\frac{e^s+1}{e^s-1},
\]
and consider an even number $K$ of learning vehicles
forming independent pairs.

At centered clipped inputs $w_{i,j}=c_j$, each binary
output has probability $1/2$. For CPQ-$\lambda$,
the covariance between the reconstructed parameters
of a pair is $-\mu A(s)^2$. Consequently, the
per-coordinate FedAvg MSEs are
\begin{equation}
\begin{aligned}
M_{\mathrm{CPQ}\text{-}\lambda}
&=\frac{(1-\mu)A(s)^2}{K},\\
M_{\mathrm{LDP}}
&=\frac{A(s_0)^2}{K},\\
M_{\mathrm{Gau}}
&=\frac{\sigma^2}{K},\\
M_{\mathrm{Lap}}
&=\frac{2\beta^2}{K},
\end{aligned}
\label{eq:vector_matched_mse}
\end{equation}
where $s$ and $s_0$ are the respective
RDP-calibrated parameter-level privacy budgets
of CPQ-$\lambda$ and LDP-FL.

The MSE advantage can also be bounded for arbitrary
admissible clipped inputs. Writing
\[
q_-(s)=\frac{1}{1+e^s},
\]
the CPQ-$\lambda$ pair-average MSE satisfies
\[
M_{\mathrm{pair}}^{\mathrm{CPQ}\text{-}\lambda}
\leq
\frac{A(s)^2}{2}
\left(1-4\mu q_-^2(s)\right),
\]
while the independent LDP-FL pair-average MSE satisfies
\[
M_{\mathrm{pair}}^{\mathrm{LDP}}
\geq
\frac{A(s_0)^2-r^2}{2}.
\]
Since the errors of distinct pairs are independent,
these inequalities imply
\begin{equation}
\frac{M_{\mathrm{CPQ}\text{-}\lambda}}
{M_{\mathrm{LDP}}}
\leq
\frac{
A(s)^2\left(1-4\mu q_-^2(s)\right)
}{
A(s_0)^2-r^2
}.
\label{eq:vector_mse_ratio_bound}
\end{equation}

For the matched-privacy configuration in
Table~\ref{tab:vector_privacy_comparison},
the right-hand side is approximately $0.836$
at $\lambda=0.627$. Thus, the considered
CPQ-$\lambda$ configuration achieves lower
theoretical aggregation MSE than the independent
LDP-FL baseline throughout the admissible
clipping range.

The comparison applies a common RDP-based privacy
certification procedure. Tighter mechanism-specific
approximate-DP accounting may be available for
independent LDP-FL and Gaussian mechanisms and
could permit lower aggregation error at the same
privacy target. These results concern one model
upload under ideal independent randomization and
do not establish multi-round or trajectory-level
privacy.

\section{Finite-Precision Analysis}
\label{app:finite_precision}

To analyze finite-precision effects, consider a
$d$-bit integer $K$ uniformly distributed over
$\{0,\dots,2^d-1\}$. The leader uses $K_1=K$
and the follower uses $K_2=2^d-1-K$, providing
a discrete counterpart of $V$ and $1-V$. With $q_k=q(w_k)$ and
$k_k=\lfloor 2^d q_k\rfloor$, client $k\in\{1,2\}$ transmits
\begin{equation}
  b_k=
  \begin{cases}
    1, & K_k<k_k,\\
    0, & K_k>k_k,\\
    \mathrm{Bernoulli}\bigl(2^d q_k-k_k\bigr), & K_k=k_k,
  \end{cases}
  \label{eq:cpq_bit_finite}
\end{equation}
where the Bernoulli draw uses local randomness and is needed only when $K_k$ falls in
the cell containing $q_k$. As is standard for local randomized response, this local draw
is modeled as exact; only the shared randomness must be finite, since both clients must
reproduce it identically.

\begin{proposition}[Finite-precision CPQ]
\label{prop:cpq_finite}
For every $d\in\mathbb{N}$, the pair $(b_1,b_2)$ generated by~\eqref{eq:cpq_bit_finite}
satisfies:
\begin{enumerate}[label=(\alph*)]
  \item $P(b_k=1)=q(w_k)$ exactly for $k\in\{1,2\}$; hence part (i) of
    Theorem~\ref{thm:cpq} holds exactly.
  \item The pairwise MSE $m_d(w_1,w_2)$ satisfies
    $0\le m_d(w_1,w_2)-m^*(w_1,w_2)\le r^2\alpha^2(\epsilon_p)/2^{d-2}$.
\end{enumerate}
\end{proposition}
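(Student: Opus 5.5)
We obtain both parts from a direct computation of the marginals and from a coupling of the finite-precision scheme with the ideal continuous CPQ of Theorem~\ref{thm:cpq}. Throughout we use the fact that $q_k=q(w_k)\in[q_-,q_+]\subset(0,1)$, so $k_k=\lfloor 2^d q_k\rfloor\in\{0,\dots,2^d-1\}$. Hence the boundary cell $K_k=k_k$ is always a genuine value of the integer.

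\emph{Part (a).} Both $K_1=K$ and $K_2=2^d-1-K$ are uniform on $\{0,\dots,2^d-1\}$. Conditioning on the three cases in~\eqref{eq:cpq_bit_finite} gives
\[
P(b_k=1)=\frac{k_k}{2^d}+\frac{1}{2^d}\bigl(2^dq_k-k_k\bigr)=q_k .
\]
The output law of each client is therefore exactly the LDP-FL law. The PLDP and unbiasedness arguments of part~(i) of Theorem~\ref{thm:cpq} (Appendix~\ref{app:proof_cpq}) depend only on this marginal law, so they carry over verbatim.

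\emph{Part (b), lower bound.} By (a), the finite-precision pair is a coupling of the same two binary marginals with reconstruction levels $c\pm r\alpha$. The Fr\'echet--Hoeffding argument in Appendix~\ref{app:proof_cpq} shows that $m^*$ is the minimum of~\eqref{eq:mse_expanded} over all such couplings, so $m_d\ge m^*$ immediately.

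\emph{Part (b), upper bound.} From~\eqref{eq:mse_expanded} and $\mathbb{E}[U_1U_2]=1-2P(U_1\neq U_2)$, the marginal terms cancel and
\[
m_d-m^*=r^2\alpha^2\bigl(P^*(b_1\neq b_2)-P_d(b_1\neq b_2)\bigr).
\]
It therefore suffices to bound the difference of disagreement probabilities by $2^{2-d}$.

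We realize both schemes on one probability space. Draw $V\sim\mathrm{Uniform}[0,1]$ and set $K=\lfloor 2^dV\rfloor$. Then almost surely $2^d-1-K=\lfloor 2^d(1-V)\rfloor$, so with $V_1=V$ and $V_2=1-V$ we have $K_k=\lfloor 2^dV_k\rfloor$. Let the ideal CPQ bits be $\mathbf{1}_{\{V_k<q_k\}}$.

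Outside the cell event, the two schemes agree for each client:
\begin{itemize}[label={}]
\item If $K_k<k_k$, then $V_k<(K_k+1)/2^d\le k_k/2^d\le q_k$, and both bits equal one.
\item If $K_k>k_k$, then $V_k\ge (k_k+1)/2^d>q_k$, and both bits equal zero.
\end{itemize}
Thus the finite and ideal output pairs coincide except on $\{K_1=k_1\}\cup\{K_2=k_2\}$. This union has probability at most $2\cdot 2^{-d}$.

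Since the disagreement indicator changes by at most one on this event, $|P^*(b_1\neq b_2)-P_d(b_1\neq b_2)|\le 2^{1-d}$. This gives $m_d-m^*\le r^2\alpha^2/2^{d-1}$, which is within the stated bound $r^2\alpha^2/2^{d-2}$.

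The main point requiring care is the coupling step. One must check that the complementary integer $2^d-1-K$ is exactly the discretization of $1-V$; this holds except on the null set where $2^dV$ is an integer. One must also check that the local Bernoulli draws only affect the two boundary cells. Once these are verified, the rest is the bookkeeping above.
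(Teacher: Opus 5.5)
Your proof is correct. It has the same skeleton as the paper's: exact marginals for (a), the Fr\'echet--Hoeffding bound for $m_d\ge m^*$, and a coupling with the ideal complementary pair under which the bits can differ only on boundary cells. The differences are in how the upper bound is extracted.

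The paper lifts the integer to a continuous value, $V_k=(K_k+W_k)/2^d$, using auxiliary uniforms $W_k$ that also realize the local Bernoulli draw. The leader's finite bit then coincides exactly with the ideal leader's bit, so only the follower's cell $\{K_2=k_2\}$, of probability $2^{-d}$, matters. The paper then applies the crude pointwise bound $e^2\le 4r^2\alpha^2$ on the pair-average error, giving $4r^2\alpha^2\,2^{-d}$.

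You go the other way. You discretize the ideal $V$ via $K=\lfloor 2^dV\rfloor$ and keep the local draws independent, so you pay for both cells, with probability at most $2^{1-d}$. However, you convert the coupling through the exact identity $m_d-m^*=r^2\alpha^2\bigl(P^*(b_1\neq b_2)-P_d(b_1\neq b_2)\bigr)$. This uses the equal marginals from (a) and carries coefficient $r^2\alpha^2$ instead of $4r^2\alpha^2$.

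Your net bound, $r^2\alpha^2/2^{d-1}$, is a factor of two sharper than the statement. Combining your identity with the paper's one-sided coupling would sharpen it further to $r^2\alpha^2/2^{d}$.
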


\begin{proof}
\textit{(a)} Since $q_k<1$, $k_k\le 2^d-1$. Both $K_1$ and $K_2$ are uniform on
$\{0,\dots,2^d-1\}$, so
\begin{align*}
   P(b_k=1)&=P(K_k<k_k)+P(K_k=k_k)\,(2^dq_k-k_k)
=\frac{k_k}{2^d}+\frac{2^dq_k-k_k}{2^d}=q_k . 
\end{align*}

This output distribution preserves both unbiasedness and $\epsilon_p$-PLDP, so the arguments of part (i) of Theorem~\ref{thm:cpq} apply unchanged.

\textit{(b)} Let $W_1,W_2$ be i.i.d.\ $\mathrm{Uniform}[0,1)$, independent of $K$, and
set $V_k=(K_k+W_k)/2^d$. Conditional on $K_k$, the bit $\mathbf{1}\{V_k<q_k\}$ has the
same law as $b_k$: if $K_k<k_k$ then $V_k<(K_k+1)/2^d\le q_k$; if $K_k>k_k$ then
$V_k\ge (k_k+1)/2^d>q_k$; and if $K_k=k_k$ then
$P(V_k<q_k)=P(W_k<2^dq_k-k_k)=2^dq_k-k_k$. Hence we may take
$b_k=\mathbf{1}\{V_k<q_k\}$. Here $V_1$ is uniform on $[0,1)$, and the idealized follower
of Theorem~\ref{thm:cpq} uses $1-V_1=(K_2+1-W_1)/2^d$; denote its bit by $b_2^{\circ}$,
so that $(b_1,b_2^{\circ})$ is the idealized CPQ pair with MSE $m^*$. The bits $b_2$ and
$b_2^{\circ}$ are both determined by $K_2$ unless $K_2=k_2$, so they differ only on an
event of probability $2^{-d}$. The pair-average error satisfies
$|e|\le r\alpha+r\le 2r\alpha$, so $e^2\in[0,4r^2\alpha^2]$, and therefore
$m_d-m^*\le 4r^2\alpha^2\cdot 2^{-d}=r^2\alpha^2/2^{d-2}$. Finally, $m_d\ge m^*$ since,
by (a), the pair has the prescribed marginals, and the Fr\'echet--Hoeffding bound in
the proof of Theorem~\ref{thm:cpq} applies to every coupling with these marginals.
\end{proof}

\begin{proposition}[Finite-precision stronger-view privacy of CPQ-$\lambda$]
\label{prop:cpqlambda_finite}
Let the shared branch of CPQ-$\lambda$ use the $d$-bit rule in~\eqref{eq:cpq_bit_finite}, while the private branch uses local randomness. Then the marginal PLDP and unbiasedness result of Theorem~\ref{thm:cpqlambda}(a) holds exactly, and the side-information bound $\epsilon_C$ in~\eqref{eq:eps_collusion} remains valid without a finite-precision correction. For $0<\mu<1$, define $\epsilon_S^{(d)}$ as the pair-view privacy parameter of the finite-precision mechanism. If
\begin{equation}
  2^{-d}\le\frac{1}{4\mu(e^{\epsilon_p}+1)^2},
  \label{eq:finite_condition}
\end{equation}
then
\begin{equation}
 e^{\epsilon_S^{(d)}}
 \le e^{\epsilon_S}
 +\frac{4\mu e^{\epsilon_p}(e^{\epsilon_p}+1)^2}{1-\mu}\,2^{-d}.
 \label{eq:finite_eps_server}
\end{equation}
For $\mu=0$, the mechanism is the independent LDP-FL coupling and $\epsilon_S^{(d)}=\epsilon_p$ exactly.
\end{proposition}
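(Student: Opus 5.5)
Marginal PLDP and $\epsilon_C$ follow from the structure of the finite-precision thresholds; the pair-view bound requires a perturbation argument around the ideal $(1,1)$ ratio.

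\emph{Marginal PLDP.} Conditioning on the independent mixing coins $(C_1,C_2)$, each client's bit is produced either by the private branch, which has law $\mathrm{Bernoulli}(q_k)$, or by the $d$-bit shared rule~\eqref{eq:cpq_bit_finite}. By Proposition~\ref{prop:cpq_finite}(a), the shared rule also has $\Pr(b_k=1)=q_k$ exactly. So every marginal is the LDP-FL law, and Theorem~\ref{thm:cpqlambda}(a) carries over verbatim.

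\emph{Side-information bound $\epsilon_C$.} I condition on the finite analogue of $Z_1$, namely $(K,C_2,R_2,w_2)$ together with any follower-local Bernoulli draw. Following the representation in the proof of Proposition~\ref{prop:cpq_finite}(b), $b_1=\mathbf{1}\{V_1<q_1\}$ on the shared branch, where $V_1=(K+W_1)/2^d$ and $W_1$ is leader-local. Hence
\[
g_K(q)=\Pr(b_1=1\mid w_1,Z_1)=\lambda_1\,\phi_K(q)+(1-\lambda_1)q,
\]
with $\phi_K(q)=\min\{1,\max\{0,2^dq-K\}\}\in[0,1]$ nondecreasing in $q$. The ratio $g_K(q_+)/g_K(q_-)$ is therefore at most $(\lambda_1+(1-\lambda_1)q_+)/((1-\lambda_1)q_-)$, because $\phi_K(q_+)\le1$ and $\phi_K(q_-)\ge0$. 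The same endpoint argument applies to the output $b_1=0$. This gives $\epsilon_C$ with no correction.

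\emph{Pair-view bound.} The finite joint law is again a mixture $\mu P^{(d)}_{\mathrm{CPQ}}+(1-\mu)P_{\mathrm{IND}}$. By Proposition~\ref{prop:cpq_finite}(b), the finite coupling differs from the ideal complementary coupling only on an event of probability at most $2^{-d}$. Hence $|P^{(d)}_{\mathrm{CPQ}}(x)-P_{\mathrm{CPQ}}(x)|\le 2^{-d}$ for every atom $x$. I then redo the four-atom ratio analysis of Theorem~\ref{thm:cpqlambda}(b) with additive perturbations:
\[
\frac{\mu(P^{\mathrm{id}}_{\mathrm{num}}+2^{-d})+(1-\mu)I_{\mathrm{num}}}{\mu(P^{\mathrm{id}}_{\mathrm{den}}-2^{-d})_+ +(1-\mu)I_{\mathrm{den}}}.
\]
The denominators are bounded below by the independent part $(1-\mu)q_-^2$, since every $I(x)\ge q_-^2$.

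\emph{Main obstacle.} The step I expect to be hardest is controlling the worst atom uniformly over $(q_1,q_1',q_2)$, since after discretization monotonicity in $q_1$ may no longer hold exactly. I would not rely on monotonicity. Instead I would bound each ratio by an ideal ratio plus an error term:
\[
\frac{N+\mu2^{-d}}{D-\mu2^{-d}}\le \frac{N}{D}+\frac{\mu2^{-d}(1+N/D)}{D-\mu2^{-d}}.
\]
With $D\ge(1-\mu)q_-^2$ and $q_-=(e^{\epsilon_p}+1)^{-1}$, condition~\eqref{eq:finite_condition} ensures that $\mu2^{-d}\le q_-^2/4$. This keeps the denominator at least a constant fraction of $D$. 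Bounding $N/D$ by $e^{\epsilon_S}\le$ a multiple of $e^{\epsilon_p}$ should then yield the stated correction $4\mu e^{\epsilon_p}(e^{\epsilon_p}+1)^2 2^{-d}/(1-\mu)$; the constants may need adjustment to match exactly.

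\emph{The case $\mu=0$.} Here the shared branch is never jointly used, so the joint law is exactly $P_{\mathrm{IND}}$. The pair-view ratio then reduces to the marginal ratio, giving $\epsilon_S^{(d)}=\epsilon_p$.
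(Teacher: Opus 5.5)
Your marginal-PLDP, $\epsilon_C$, and $\mu=0$ arguments are correct and match the paper; your explicit $\phi_K$ is the paper's nondecreasing $h_d(q)\in[0,1]$. The pair-view step has a genuine gap. You apply one perturbation bound to every atom using only $D\ge(1-\mu)q_-^2$ and $N/D\le e^{\epsilon_S}$, and this cannot yield~\eqref{eq:finite_eps_server}. First, condition~\eqref{eq:finite_condition} gives $x=\mu2^{-d}\le q_-^2/4$, which is not a fraction of $(1-\mu)q_-^2$. Your lower bound on $D-x$ is therefore $q_-^2(3/4-\mu)$, which is nonpositive for $\mu\ge3/4$, so the claim that the denominator stays a constant fraction of $D$ is false. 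Second, even when that claim holds, the factor $1+N/D$ is not harmless. The quantity $e^{\epsilon_S}=e^{\epsilon_p}+\frac{\mu}{1-\mu}(e^{\epsilon_p}-e^{-\epsilon_p})$ is not a $\mu$-independent multiple of $e^{\epsilon_p}$. At the extremal comparison $(q_1,q_1',q_2)=(q_+,q_-,q_+)$ one has $N/D=e^{\epsilon_S}$ with $D=(1-\mu)q_-q_+$, so your error term $x(1+N/D)/(D-x)$ grows like $\mu2^{-d}/(1-\mu)^2$. No adjustment of constants turns this into the $(1-\mu)^{-1}$ correction in the statement.

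The missing idea is that concordant and discordant atoms behave differently under discretization. For $(1,1)$ and $(0,0)$ the denominator never decreases. The paper notes that at $q_1'=q_-$ the ideal shared-branch mass $(q_-+q_2-1)_+$ is zero, while the finite one is nonnegative. More generally, the finite pair has exact marginals (Proposition~\ref{prop:cpq_finite}(a)), and the complementary coupling minimizes the $(1,1)$ mass among all couplings with those marginals (the Fr\'echet--Hoeffding bound from the proof of Theorem~\ref{thm:cpq}). Hence $P^{(d)}_{\mathrm{CPQ}}(1,1)\ge P_{\mathrm{CPQ}}(1,1)$ for every comparison, which also removes your concern about monotonicity. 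Only the numerator moves, giving $e^{\epsilon_S}+x/((1-\mu)q_-^2)$ with no $N/D$ multiplier. For $(1,0)$ and $(0,1)$ the ideal ratio is only $e^{\epsilon_p}$. Moreover, the denominator contains the shared-branch term $\mu\min(q_1',1-q_2)\ge\mu q_-$, so $D\ge\mu q_-+(1-\mu)q_-^2\ge q_-^2$, not merely $(1-\mu)q_-^2$. This is exactly what condition~\eqref{eq:finite_condition} is calibrated for: it gives $x\le D/4$, hence $(N+x)/(D-x)\le e^{\epsilon_p}+\frac{4x(e^{\epsilon_p}+1)}{3q_-^2}\le e^{\epsilon_p}+2\mu2^{-d}(e^{\epsilon_p}+1)^3$. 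This is dominated by the stated correction because $e^{\epsilon_p}\le e^{\epsilon_S}$ and $(1-\mu)(e^{\epsilon_p}+1)\le 2e^{\epsilon_p}$.
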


\begin{proof}
Part~(a) of Proposition~\ref{prop:cpq_finite} gives the exact marginal probabilities, so marginal PLDP and unbiasedness are unchanged. Conditional on a revealed $d$-bit shared value, the shared-branch probability of $b_1=1$ is a nondecreasing function $h_d(q)\in[0,1]$. Hence
\[
 \Pr(b_1=1\mid q,Z)
 =\lambda_1 h_d(q)+(1-\lambda_1)q.
\]
The largest possible endpoint ratio is obtained by the extreme values $h_d(q_-)=0$ and $h_d(q_+)=1$, which gives exactly the upper bound in~\eqref{eq:eps_collusion}; the same argument applied to the complementary output handles $b_1=0$. Thus no additional correction is needed for $\epsilon_C$ (although exact attainment of the continuous bound need not occur on a very coarse grid).

For the pair view, use the coupling constructed in the proof of Proposition~\ref{prop:cpq_finite}. The finite-precision CPQ pair and the ideal continuous CPQ pair differ only when the follower lies in its boundary cell, an event of probability at most $\delta_d=2^{-d}$. Therefore, for every output atom $a\in\{0,1\}^2$,
\begin{equation}
 \left|P_{\mathrm{CPQ}}^{(d)}(a)-P_{\mathrm{CPQ}}(a)\right|\le\delta_d,
 \label{eq:finite_atom_tv}
\end{equation}
and the CPQ-$\lambda$ mixture differs from its continuous counterpart at any atom by at most $x=\mu\delta_d$.

Consider first the concordant atom $(1,1)$ and the extremal change $q_1:q_-\to q_+$. For $q_1=q_-$, the ideal CPQ contribution to this atom is zero; the finite CPQ contribution is nonnegative. Hence the finite denominator is at least the continuous denominator $D=(1-\mu)q_-q_2$, while the numerator increases by at most $x$. Using the continuous bound $N/D\le e^{\epsilon_S}$ gives
\[
 \frac{N_d}{D_d}
 \le e^{\epsilon_S}+\frac{x}{(1-\mu)q_-^2}.
\]
This additive term is no larger than the correction in~\eqref{eq:finite_eps_server}.

For the discordant atom $(1,0)$, let $N,D$ denote the continuous probabilities at $q_+$ and $q_-$, respectively. The continuous proof gives $N/D\le e^{\epsilon_p}$. Moreover
\[
 D=q_-[\mu+(1-\mu)p_2]\ge q_-^2.
\]
Condition~\eqref{eq:finite_condition} implies $x\le q_-^2/4\le D/4$, so
\begin{align*}
 \frac{N_d}{D_d}
 &\le\frac{N+x}{D-x}\le e^{\epsilon_p}
 +\frac{x(e^{\epsilon_p}+1)}{D-x}\le e^{\epsilon_p}
 +2\mu\delta_d(e^{\epsilon_p}+1)^3.
\end{align*}
Because $e^{\epsilon_p}>1$ and $0<\mu<1$,
\[
 2\mu(e^{\epsilon_p}+1)^3
 \le\frac{4\mu e^{\epsilon_p}(e^{\epsilon_p}+1)^2}{1-\mu},
\]
and $e^{\epsilon_p}\le e^{\epsilon_S}$. Reflection gives the same bounds for $(0,0)$ and $(0,1)$. Taking the largest atom-wise ratio proves~\eqref{eq:finite_eps_server}.
\end{proof}

\noindent\textit{PRG-generated randomness.} Proposition~\ref{prop:cpq_finite} assumes
$K$ is uniformly random. When $K$ is produced by a cryptographically secure PRG, each
transmitted bit is a polynomial-time function of the PRG output and local randomness,
so any efficient adversary that distinguishes the PRG-based outputs from those of the
truly random mechanism would also distinguish the PRG from uniform. The guarantees
therefore hold against computationally bounded adversaries, up to a negligible loss.
With $d=32$, the MSE gap in (b) is below $10^{-9}\,r^2\alpha^2(\epsilon_p)$. 

The experimental implementation uses ChaCha20-derived
53-bit floating-point values with direct threshold
comparisons. The boundary-randomized construction
analyzed above provides an idealized finite-precision
model with exact marginal guarantees; the direct
comparison used in the experiments is a numerically
close approximation.

\section{SUMO Network Topology Characteristics}
\label{app:simulation_details}
The simulated environment represents a custom 10-junction road network mapped out over a 723\,m $\times$ 200\,m coordinate plane (see Figure \ref{fig:sumo_map}). The topology consists of primary arterial roads connecting major junctions (J0 through J12). To simulate diverse urban driving conditions, the edges have distinct speed constraints. The majority of the 30 directional edges permit a maximum velocity of 15.00\,m/s ($\approx$ 54\,km/h). However, the central connecting route between junctions J3 and J5 (edges \texttt{E7} and \texttt{-E7}) enforces a stricter speed limit of 10.00\,m/s ($\approx$ 36\,km/h), acting as a systematic bottleneck designed to force the RL agents to balance spatial distance against congestion and speed constraints.

For the privacy--utility evaluation, the road network
contains $M=30$ directed edges. Accordingly, the
state representation defined in~\eqref{eq:state}
has dimension $3M+4=94$. The maximum number of
turning actions is $|\mathcal{A}_{\max}|=3$,
with unavailable turns excluded through
action masking.

\begin{wrapfigure}{r}{0.48\textwidth}
\centering
\includegraphics[width=\linewidth]{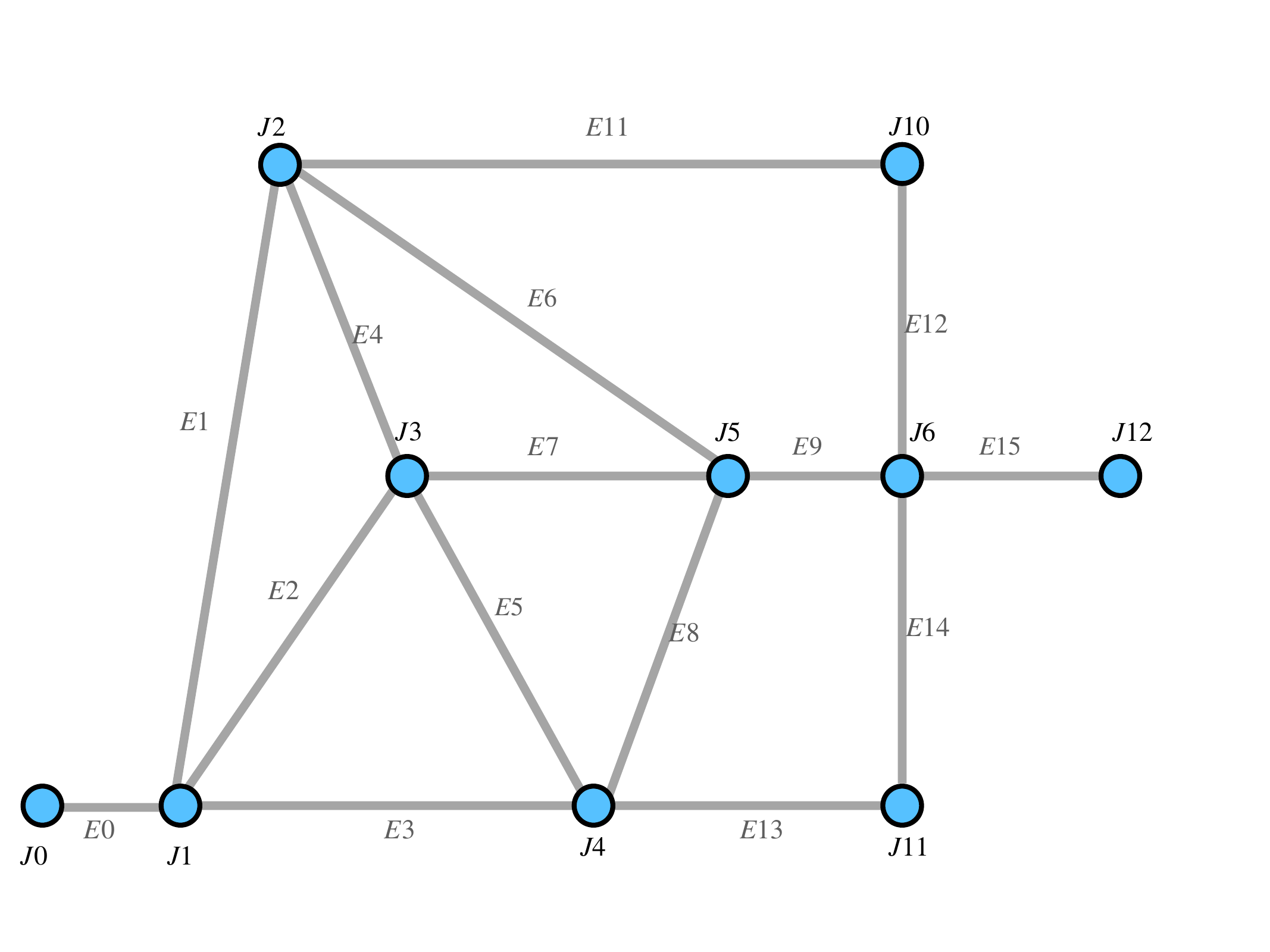}
\caption{Topology of the synthetic road network implemented
in the SUMO environment. The network comprises 10 junctions
(J0--J6 and J10--J12) and 30 directed edges
(15 bidirectional corridors) covering a total distance of
5.37\,km.}
\label{fig:sumo_map}
\end{wrapfigure}

\section{Experimental Configuration}
\label{app:exp_config}

This appendix consolidates the training and communication
parameters used in the experimental evaluation. The FLDQN
configuration specifies the learning architecture and training
schedule, while the SR-ARQ configuration documents the
fragmentation and retransmission settings used in ns-3.

\noindent\textbf{FLDQN training configuration.}
The FLDQN agents use the network architecture and training
procedure described in Section~\ref{subsec:fldqn_training}
of the main manuscript.
Table~\ref{tab:hyperparams} summarizes the corresponding
hyperparameters, including optimization, exploration,
episode termination, and federated training settings.

\noindent\textbf{SUMO traffic configuration.}
The privacy--utility experiment uses SUMO~1.26
and the road network described in
Appendix~\ref{app:simulation_details}.
Background traffic is generated using
\texttt{randomTrips.py}, with an average
departure interval of 1.75~s. A warm-up period
of 200~s precedes the departure of the
learning vehicles.

The $K=50$ learning vehicles are distributed
equally across starting edges
\texttt{E0}, \texttt{E1}, \texttt{E3},
\texttt{E11}, and \texttt{E8}, with a common
destination at edge \texttt{E15}.
Their departures are sampled uniformly
within a 100~s window following the warm-up
period and regenerated for each training
episode.

Individual trips are subject to a maximum
duration of $T_{\mathrm{trip}}=600$~s,
measured from the scheduled departure.
Vehicles that exceed this limit are
classified as not arrived. Such time-limit
terminations are stored as non-terminal
transitions in the local replay buffers.

For evaluation, three fixed traffic
scenarios are used, with background
traffic seeds 1000, 1001, and 1002,
and corresponding departure seeds
5000, 5001, and 5002. The same evaluation
scenarios are used for all privacy
mechanisms, privacy budgets, and training
seeds.

\noindent\textbf{Privacy clipping configuration.}
For privacy-preserving model exchange, each
locally trained parameter is clipped around
the corresponding parameter of the current
global model using a fixed radius
$r=0.02$ across all parameter tensors.
The resulting coordinate-wise interval is
shared by clients trained from the same
global model and is used for privacy
calibration and parameter reconstruction.

\noindent\textbf{Agent--SUMO interaction.}
The Python training process interacts with a
shared SUMO simulation through TraCI. Each
learning vehicle maintains an individual DQN
policy and experience buffer while observing
traffic conditions within the common environment.
At routing decision points, the vehicle obtains
the traffic and positional information required
to construct the state in~\eqref{eq:state}.

The base reward is determined from the elapsed
SUMO simulation time between consecutive
edge-entry events, as defined in~\eqref{eq:reward},
with repeated-edge penalties applied according
to~\eqref{eq:loop_penalty}. Individual trip termination follows the conditions
described in Section~\ref{subsec:routing_formulation}.

\noindent\textbf{ns-3 channel and radio configuration.}
The V2I simulation uses ns-3 version~3.46,
a 5.9\,GHz carrier frequency, and a 10\,MHz channel.
The RSU is located at junction J3,
$(300,100,0)$\,m, of the SUMO road network.
Both data and control transmissions use
\texttt{OfdmRate6MbpsBW10MHz} through
\texttt{ConstantRateWifiManager}, with
transmission power set to 20\,dBm.
Propagation is modeled using Log-Distance
path loss with exponent $n=2.7$ and
reference loss $L_0=46.68$\,dB,
followed by Nakagami-$m$ fading and
constant-speed propagation delay.
The MAC uses \texttt{AdhocWifiMac}
without QoS extensions. The receiver noise figure is 7\,dB,
the CCA energy-detection threshold is
$-85$\,dBm, the minimum preamble-detection
RSSI is $-82$\,dBm, and the receiver
sensitivity is $-101$\,dBm.

\begin{wraptable}{r}{0.50\textwidth}
\centering
\caption{FLDQN configuration for the privacy--utility experiment.}
\label{tab:hyperparams}
\footnotesize
\setlength{\tabcolsep}{2pt}
\renewcommand{\arraystretch}{1.10}
\begin{tabular}{@{}p{0.43\linewidth}
                    p{0.23\linewidth}
                    p{0.26\linewidth}@{}}
\toprule
\textbf{Parameter} & \textbf{Symbol} & \textbf{Value} \\
\midrule
State dimension
    & $3M+4$ & 94 \\
Action space size
    & $|\mathcal{A}_{\max}|$ & 3 \\
Network architecture
    & --- & 94--150--100--3 \\
Optimizer
    & --- & Adam \\
Learning rate
    & $\eta$ & $10^{-3}$ \\
Discount factor
    & $\gamma$ & 0.99 \\
Replay capacity
    & $|\mathcal{B}_i|$ & 20\,000 \\
Mini-batch size
    & $B$ & 32 \\
Training start
    & $N_{\mathrm{start}}$ & 200 transitions \\
Target update period
    & $C_{\mathrm{target}}$ & 1\,000 steps \\
Huber loss threshold
    & $\delta_{\mathrm{H}}$ & 1.0 \\
Exploration start
    & $\epsilon_{\mathrm{start}}$ & 1.0 \\
Exploration minimum
    & $\epsilon_{\mathrm{end}}$ & 0.05 \\
Exploration decay
    & --- & Exp., 80 rounds \\
Second-visit penalty
    & $\kappa_2$ & 10 \\
Subsequent-visit penalty
    & $\kappa_{3+}$ & 50 \\
Local episodes per round
    & $E$ & 10 \\
Participating vehicles
    & $K$ & 50 \\
Federated rounds
    & $T$ & 100 \\
\bottomrule
\end{tabular}
\end{wraptable}

\noindent\textbf{Network address resolution.}
The ns-3 implementation prepopulates ARP caches
for vehicle--RSU communication to prevent
address-resolution state from introducing
prolonged delivery failures when vehicles
temporarily leave radio coverage.

\noindent\textbf{Beacon-based contact configuration.}
The RSU transmits discovery beacons of size
$L_{\mathrm{b}}=100$\,B at intervals of
$T_{\mathrm{b}}=0.1$\,s over the same simulated
IEEE~802.11p channel used for model transfers.
Contact eligibility is evaluated over a sliding
observation window of
$T_{\mathrm{obs}}=0.5$\,s, requiring at least
$N_{\mathrm{beacon}}=3$ successful beacon
receptions. This corresponds to the nominal
three-of-five beacon reception criterion used
in the experiments.

\noindent\textbf{RSU transfer scheduling configuration.}
The RSU evaluates the transfer queue at intervals
of $\Delta t_{\mathrm{sched}}=0.1$\,s of
simulation time. At each scheduling step, the
controller processes completed transfers,
updates vehicle contact eligibility using the
beacon-reception criterion, and starts the next
queued transfer when no other transfer is active.

Waiting vehicles that lose contact are removed
from the queue. Following an unsuccessful
transfer, the corresponding model object remains
pending and may be scheduled for another attempt
when the vehicle is eligible again. Completed
attempts are placed at the end of the queue
when additional transfer work remains.
No new attempts are initiated after the end
of the corresponding communication drive.

\noindent\textbf{Communication-drive configuration.}
Each downlink or uplink communication phase is
executed during a non-learning SUMO drive.
Following a 200\,s traffic warm-up, learning
vehicles depart at randomized times within a
100\,s departure window and follow their current
routing policies greedily. Individual trips
terminate upon arrival, route failure, or a
maximum trip duration of 600\,s.

The RSU schedules transfers during the available
vehicle contacts, with the end of the drive
serving as the deadline for outstanding requests.
Each federated round includes separate downlink
and uplink drives, followed by global-model
evaluation on three fixed scenarios without
additional V2I model communication.

\noindent\textbf{SUMO-ns-3 control interface.}
The Python co-simulation driver maintains one
persistent TCP connection to a control endpoint
in the ns-3 process. Supported operations include
updating vehicle positions and velocities,
moving inactive vehicles outside communication
range, querying beacon-reception status,
initiating uplink and downlink transfers,
advancing the ns-3 simulation clock, and
resetting the simulation state.

Transfer initiation is asynchronous. When the
driver advances ns-3 to a specified simulation
time, the interface reports transfers completed
during the elapsed interval, including their
delivery outcomes, simulated completion times,
and received bytes. Successfully delivered
objects are then passed to the corresponding
receiving endpoint for signature and
communication-context verification. Model
traffic itself is carried through the simulated
UDP/IEEE~802.11p data plane rather than the
TCP control connection.

\noindent\textbf{SR-ARQ transport configuration.}
The ns-3 communication model employs a custom Selective
Repeat ARQ layer for packet fragmentation, retransmission,
and reassembly. Table~\ref{tab:sr_params} summarizes its
transport parameters, including chunk sizes, window limits,
retransmission settings, and UDP port assignments.

Each SR-ARQ chunk additionally carries a 44-byte binary
header containing message identification, sequencing,
transmission timing, and integrity metadata. The header
layout is specified as follows:
\begin{equation}
  \text{Header} = \underbrace{\texttt{u8}}_{\rm type}
  \underbrace{\texttt{u8}}_{\rm rsv}
  \underbrace{\texttt{u16}}_{\rm flags}
  \underbrace{\texttt{u64}}_{\rm msg\_id}
  \underbrace{\texttt{u32}}_{\rm round}
  \underbrace{\texttt{u16}}_{\rm veh}
  \underbrace{\texttt{u16}}_{\rm idx} 
  \underbrace{\texttt{u16}}_{\rm total}
  \underbrace{\texttt{u16}}_{\rm wnd}
  \underbrace{\texttt{u64}}_{\rm tx\_ns}
  \underbrace{\texttt{u32}}_{\rm tot\_len}
  \underbrace{\texttt{u32}}_{\rm chk\_len}
  \underbrace{\texttt{u32}}_{\rm crc32}
  \;\Big|\;
  \underbrace{\phantom{xxxx}}_{\text{payload}}
  \label{eq:chunk_hdr}
\end{equation}

The \texttt{flags} field identifies the transmission
direction, while \texttt{idx} and \texttt{total} specify
the chunk index and the total number of chunks in the
message. The \texttt{crc32} field contains the CRC-32
checksum used to validate the chunk payload.

\begin{wraptable}{r}{0.54\textwidth}
\centering
\caption{SR-ARQ transport parameters used in ns-3.}
\label{tab:sr_params}
\footnotesize
\setlength{\tabcolsep}{3pt}
\renewcommand{\arraystretch}{1.10}
\begin{tabular}{p{0.45\linewidth}p{0.20\linewidth}p{0.25\linewidth}}
\toprule
\textbf{Parameter} & \textbf{Symbol} & \textbf{Value} \\
\midrule
Chunk payload size      & $L_{\rm chunk}$     & 1200\,B \\
Chunk header size       &                     & 44\,B \\
Initial window          & $W_0$               & 32 chunks \\
Maximum window          & $W_{\max}$          & 128 chunks \\
Retransmission timeout  & $\tau_{\rm rto}$    & 120\,ms \\
Per-chunk retry cap     & $R_{\max}$          & 8 \\
Inter-chunk pacing      & $\Delta_{\rm pace}$ & 200\,\textmu s \\
Window increase (ACK)   &                     & $+1$ (additive) \\
Uplink UDP port         &                     & 40000 \\
Downlink UDP base port  &                     & 41000\,$+$\,veh\,index \\
ACK UDP port            &                     & 40001 \\
\bottomrule
\end{tabular}
\end{wraptable}

\noindent\textbf{Software and execution environment.}
The experimental framework integrates SUMO, ns-3,
and the Python-based learning process. SUMO
communicates with the training process through
TraCI, while ns-3 operates as a separate process
and advances in lockstep with the SUMO simulation
clock. Local DQN training uses TensorFlow/Keras,
whereas privacy processing and model serialization
use NumPy.

Federated aggregation is coordinated within the
Python training process using Flower's strategy
classes, without its client--server gRPC transport.
Post-quantum signature operations and ML-KEM-512
key establishment use liboqs through its Python
\texttt{oqs} interface. The CPQ group setup
additionally employs ChaCha20, ChaCha20-Poly1305,
HKDF-SHA256, and HMAC-SHA256 through the
\texttt{cryptography} package.
Table~\ref{tab:software_stack} summarizes the
software components and their versions.

\noindent\textbf{Execution hardware.}
The controlled cryptographic microbenchmarks
were performed on an x86\_64 Linux workstation
equipped with an Intel Core i7-8700 CPU at
3.20\,GHz, 12 logical CPUs, and 15.42\,GiB
of RAM. The software environment used
Linux 5.4 and Python~3.10.20.
Signature timing measurements were CPU-based.

\noindent\textbf{PQC key persistence.}
Signing key pairs are generated through the Python
\texttt{oqs} interface to liboqs during system
initialization. The resulting keys are stored as
\texttt{\char`\{identity\char`\}\_\char`\{scheme\char`\}.pk/.sk} files
and reloaded when available to maintain consistency
across restarts within a training run.

\begin{table}[!t]
\centering
\caption{Software components and reported versions.}
\label{tab:software_stack}
\footnotesize
\setlength{\tabcolsep}{2pt}
\renewcommand{\arraystretch}{1.16}

\begin{tabular}{@{}
>{\raggedright\arraybackslash}p{0.29\columnwidth}
>{\raggedright\arraybackslash}p{0.40\columnwidth}
>{\raggedright\arraybackslash}p{0.25\columnwidth}
@{}}
\toprule
\textbf{Component} & \textbf{Tool} & \textbf{Version} \\
\midrule

Traffic simulator
  & SUMO & 1.26 \\

Network simulator
  & ns-3 & 3.46 \\

DQN training
  & TensorFlow (Keras) & 2.x \\

Privacy processing
  & NumPy & 1.x \\

FL coordination
  & Flower (\texttt{flwr}) & 1.x \\

PQC signatures and ML-KEM
  & liboqs / liboqs-python & 0.15.0 / 0.14.1 \\

Classical signature benchmark
  & \texttt{cryptography}/OpenSSL & 46.0.7 \\

Payload serialization
  & NumPy (\texttt{savez\_compressed}) & 1.x \\

CRC integrity
  & zlib & system \\

Python integration
  & Python & 3.10.20 \\

\bottomrule
\end{tabular}
\end{table}

\section{Controlled Signature Microbenchmark}
\label{app:signature_microbenchmark}

This section presents a controlled microbenchmark of the digital
signature schemes considered in this study. Table~\ref{tab:pqc_comparison}
compares public-key sizes, signature sizes, and the computational costs
of key generation, signing, and verification across four post-quantum
configurations and two classical reference schemes. These measurements
isolate cryptographic operations from model serialization and network
transmission, providing a reference for interpreting the integrated
V2X communication results in Section~\ref{subsec:exp_pqc}.

\noindent\textbf{Measurement protocol.}
The microbenchmark was conducted on the evaluation workstation
described in Appendix~\ref{app:exp_config}. Post-quantum signature
operations were evaluated using liboqs~0.15.0 through
liboqs-python~0.14.1. The classical RSA and ECDSA reference
schemes were evaluated using the \texttt{cryptography}~46.0.7
package with its OpenSSL backend.

For each signature scheme, five warm-up trials were executed
and excluded from measurement, followed by $N=200$ recorded
trials. Each recorded trial generated a fresh key pair, signed
a 4096-byte message, and verified the resulting signature.
Key generation, signing, and verification were timed
independently using a monotonic high-resolution clock.

\begin{table}[t]
\centering
\footnotesize
\caption{Controlled signature microbenchmark on the evaluation
workstation. Timing values are mean~$\pm$~standard deviation in
milliseconds over $N=200$ measured trials using 4096-byte messages.}
\label{tab:pqc_comparison}
\begin{tabular*}{0.95\textwidth}{@{\extracolsep{\fill}} lcccccc}
\toprule
\textbf{Algorithm}
& \textbf{Basis}
& \textbf{PK (B)}
& \textbf{Sig. (B)}
& \textbf{Keygen (ms)}
& \textbf{Sign (ms)}
& \textbf{Verify (ms)} \\
\midrule
ML-DSA-44
& MLWE/MSIS
& 1312
& 2420
& $0.037 \pm 0.009$
& $0.105 \pm 0.063$
& $0.045 \pm 0.009$ \\

Falcon-512
& NTRU/SIS
& 897
& 655.1
& $5.771 \pm 1.285$
& $0.210 \pm 0.027$
& $0.059 \pm 0.012$ \\

SLH-DSA-SHA2-128f
& Hash
& 32
& 17088
& $0.989 \pm 0.086$
& $23.078 \pm 1.377$
& $1.412 \pm 0.110$ \\

SLH-DSA-SHA2-128s
& Hash
& 32
& 7856
& $61.707 \pm 2.624$
& $469.949 \pm 4.877$
& $0.491 \pm 0.030$ \\
\midrule

RSA-2048-PSS-SHA256
& IFP
& 256
& 256
& $38.741 \pm 24.430$
& $0.799 \pm 0.025$
& $0.048 \pm 0.003$ \\

ECDSA-P256-SHA256
& ECDLP
& 65
& 71.0
& $0.022 \pm 0.004$
& $0.053 \pm 0.006$
& $0.084 \pm 0.008$ \\
\bottomrule
\end{tabular*}
\end{table}

Because Falcon-512 and ECDSA P-256 produce variable-length
signatures, Table~\ref{tab:pqc_comparison} reports their mean
signature sizes across the measured trials. For Falcon-512,
signature sizes ranged from 648 to 661 bytes, with a mean of
655.1 bytes. ECDSA P-256 signatures ranged from 69 to 72 bytes,
with a mean of 71.0 bytes. The evaluated Falcon-512
implementation supports a maximum signature size of 752 bytes,
which is distinct from the sizes observed during the 200 trials.

The microbenchmark includes both SLH-DSA-SHA2-128f and SLH-DSA-SHA2-128s to characterize the trade-offs between hash-based signature parameter sets. The primary mobility-coupled vehicular evaluation
employs ML-DSA-44 for authenticated model exchange,
while the additional signature-sensitivity experiment
evaluates ML-DSA-44, Falcon-512, and
SLH-DSA-SHA2-128f under mobility-coupled V2I
communication. RSA-2048-PSS-SHA256
and ECDSA-P256-SHA256 are included as conventional reference
baselines for computational comparison. These classical
signature schemes are not quantum-resistant and are not
used in the integrated vehicular experiments.

For the notation in Table~\ref{tab:pqc_comparison}, PK refers
to the raw public-key representation used for comparison.
The underlying mathematical bases are identified using IFP
for the Integer Factorization Problem, ECDLP for the Elliptic
Curve Discrete Logarithm Problem, MLWE for Module Learning
With Errors, and MSIS for Module Short Integer Solution.

\section{Signature-Scheme Sensitivity Under
Mobility-Coupled V2I Communication}
\label{app:e3_signature_network}

This appendix provides an additional analysis of
the signature-sensitivity experiment presented in
Section~\ref{subsec:exp_pqc}. We examine how the
different authenticated message sizes influence
uplink transfer reliability as vehicle-to-RSU
distance increases. The comparison includes
ML-DSA-44, Falcon-512, and SLH-DSA-SHA2-128f,
each evaluated with full-precision and one-bit
CPQ model updates under the same IEEE~802.11p
communication framework.

\noindent\textbf{Distance-dependent uplink reliability.}
Figure~\ref{fig:e3_distance} reports the
uplink transfer-attempt success rates across
vehicle-to-RSU distance intervals. With
full-precision updates, all three signature
schemes exhibit a similar decline in transfer
reliability as the vehicle moves farther
from the RSU. At distances of $125$--$150$\,m,
the attempt success rates are $3\%$, $4\%$,
and $3\%$ for ML-DSA-44, Falcon-512,
and SLH-DSA-SHA2-128f, respectively.

In contrast, CPQ substantially improves
distance-dependent communication reliability
by reducing the authenticated uplink size.
At $125$--$150$\,m, CPQ achieves $100\%$
transfer-attempt success with ML-DSA-44
and Falcon-512, compared with $25\%$
for SLH-DSA-SHA2-128f. At $150$--$175$\,m,
the corresponding success rates are
$77\%$, $94\%$, and $0\%$.
These results illustrate how signature
size becomes increasingly influential
when the model-update payload is highly
compressed and wireless connectivity
is constrained by vehicle mobility.

The overall uplink delivery rate remains
distinct from the per-attempt success rate,
since the communication protocol can
retransmit unsuccessful transfers. For CPQ,
the overall uplink delivery rates are
$99.8\%$ with ML-DSA-44, $99.8\%$
with Falcon-512, and $93.1\%$ with
SLH-DSA-SHA2-128f. Thus, retransmissions
support eventual delivery, but larger
authenticated messages impose additional
wireless communication costs.

\begin{figure}[!t]
\centering
\includegraphics[width=0.85\textwidth]
{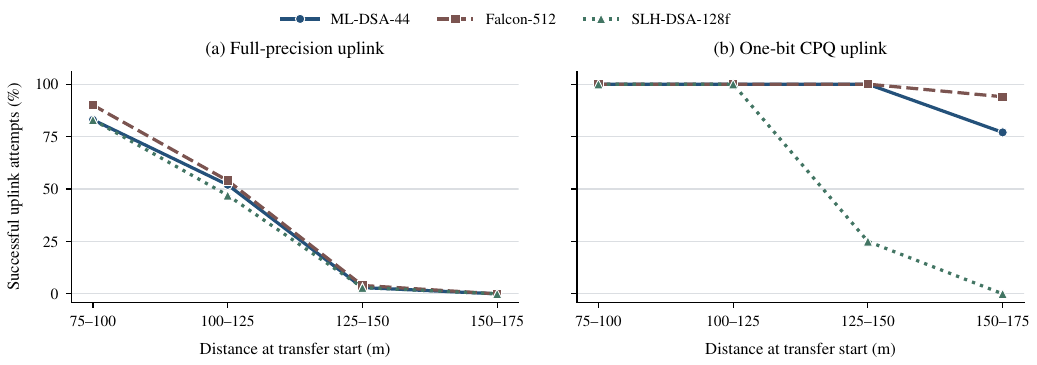}
\caption{Distance-dependent uplink transfer-attempt
success under IEEE~802.11p communication for
three post-quantum signature schemes:
(a) full-precision updates and (b) one-bit
CPQ updates. Each point represents the
percentage of successful uplink transfer
attempts within the corresponding
vehicle-to-RSU distance interval at
transfer initiation ($\epsilon_p=1$).}
\label{fig:e3_distance}
\end{figure}

\section{Detailed Privacy--Utility Experimental Results}
\label{app:detailed_privacy_results}

This appendix provides additional analyses of the
privacy--utility results presented in
Section~\ref{subsec:exp_privacy_utility}.
We examine seed-paired differences between
CPQ and LDP-FL, the effect of CPQ-$\lambda$
correlation mixing, run-to-run variability,
and secondary traffic-delay measures.
The primary comparisons
consider privacy budgets
$\epsilon_p\in\{0.3,0.5,1,2,5\}$,
while the CPQ-$\lambda$ analysis is conducted
at a fixed $\epsilon_p=1$.

\subsection{Paired Comparison with Independent Quantization}

To isolate the empirical effect of correlated quantization,
Fig.~\ref{fig:cpq_paired_advantage} compares CPQ with LDP-FL
at identical privacy budgets using matched random seeds.
Both mechanisms employ the same one-bit representation and
clipping configuration, differing in the correlation of
their quantization randomness. For each seed, performance
is summarized over rounds 91--100 before computing the
paired difference between the two mechanisms.

The travel-time benefit is defined as the LDP-FL travel
time minus the corresponding CPQ travel time, while the
arrival-rate benefit is defined as the CPQ arrival rate
minus that of LDP-FL. Positive values therefore favor
CPQ in both metrics. The figure reports the individual
seed-paired differences, their mean, and the corresponding
95\% Student-$t$ confidence intervals across five seeds.

CPQ exhibits a positive mean travel-time benefit at
all five evaluated privacy budgets, ranging from
8.2 to 61.7~s. However, the uncertainty varies
considerably across configurations. The nominal 95\%
confidence interval excludes zero at $\epsilon_p=1$,
where the mean paired benefit is
$43.1 \pm 26.5$~s, but includes zero at the other
evaluated privacy budgets. For arrival rate, the
mean paired differences are generally smaller,
and all corresponding confidence intervals include zero.

These observations support a favorable average
routing-efficiency trend for CPQ relative to
independent quantization, particularly at tighter
privacy budgets. Nevertheless, the variability
across seeds indicates that the observed benefit
is not uniformly established across all settings.

\begin{figure}[t]
\centering
\includegraphics[width=0.85\textwidth]
{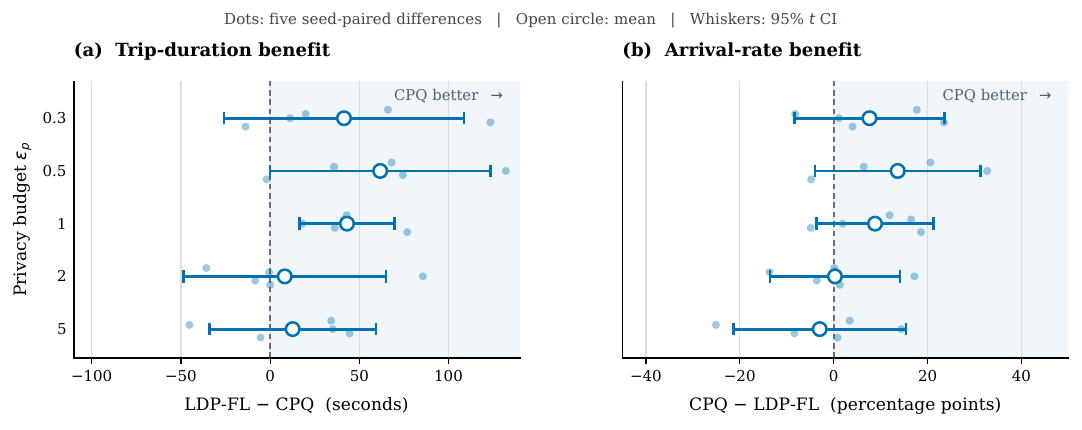}
\caption{Seed-paired routing-performance differences
between CPQ and LDP-FL.}
\label{fig:cpq_paired_advantage}
\end{figure}

\subsection{Learning-Vehicle Population Sensitivity}
\label{app:population_sensitivity}

We further examine the effect of the learning-vehicle population under ideal communication, comparing CPQ and LDP-FL at $\epsilon_p=1$ with $K\in\{10,25,50\}$. Each configuration uses 100 federated rounds, 10 local training episodes per round, and five matched seeds. Results are averaged over rounds 91--100 before computing the seed-paired differences. The $K=50$ results are taken from the primary privacy--utility evaluation. For $K=25$, CPQ forms 12 pairs per coordinate, with the remaining vehicle using independent quantization.

Table~\ref{tab:population_sensitivity} reports travel-time gains, defined as LDP-FL minus CPQ, and arrival-rate gains, defined as CPQ minus LDP-FL. Positive values favor CPQ in both metrics. Results are reported as paired means with 95\% Student-$t$ confidence half-widths, while the final column indicates the number of seeds in which CPQ achieves lower travel time.

\begin{wraptable}{r}{0.50\textwidth}
\vspace{-10pt}
\centering
\caption{Population sensitivity at $\epsilon_p=1$.}
\label{tab:population_sensitivity}
\setlength{\tabcolsep}{3pt}
\renewcommand{\arraystretch}{1.10}
\begin{tabular*}{\linewidth}{@{\extracolsep{\fill}}cccc@{}}
\toprule
$K$ &
\shortstack{Travel-time gain\\(s) $\uparrow$} &
\shortstack{Arrival-rate gain\\(pp) $\uparrow$} &
\shortstack{Faster\\seeds} \\
\midrule
$10$ & $42.0\pm142.0$ & $8.6\pm31.1$ & $3/5$ \\
$25$ & $61.8\pm43.5$ & $10.8\pm10.0$ & $5/5$ \\
$50$ & $43.1\pm26.5$ & $8.8\pm12.4$ & $5/5$ \\
\bottomrule
\end{tabular*}
\end{wraptable}

CPQ achieves positive mean travel-time and arrival-rate gains across all three populations. The travel-time confidence intervals exclude zero at $K=25$ and $K=50$, whereas the $K=10$ results show substantially greater variability. The gains do not increase monotonically with $K$. Since all learning vehicles share the SUMO environment, changing the population also changes traffic demand, rather than isolating the effect of aggregation population size.

\subsection{Correlation Mixing and Seed-Paired Analysis}
\label{app:cpqlambda_paired}

The CPQ-$\lambda$ ablation in
Table~\ref{tab:cpqlambda_exp} includes empirical
aggregation-MSE measurements and seed-paired
routing comparisons. For each round, aggregation
MSE is measured across all model coordinates
relative to the average of the clients' clipped
models, excluding clipping error. The measurements
are averaged over 100 rounds within each run
and then across ten seeds. The observed
MSE decreases monotonically with $\lambda$,
consistent with the theoretical correlation-mixture
relationship.

To examine routing variability, Fig.~\ref{fig:cpqlambda_paired}
reports seed-paired travel-time differences
relative to independent LDP-FL and fully correlated
CPQ. For each of ten matched seeds,
travel time is first averaged over rounds 91--100
before computing the paired difference. The figure
shows individual differences and their mean with
95\% Student-$t$ confidence intervals. Negative
values indicate lower travel time than the
reference mechanism.

Relative to LDP-FL, the intermediate mixing configurations
exhibit mean travel-time differences ranging from $-41.3$
to $-30.7$~s, with only the confidence interval for
$\lambda=0.75$ excluding zero.
Fully correlated CPQ yields a paired difference
of $-45.8\pm38.0$~s. Relative to fully correlated
CPQ, all intermediate configurations have
confidence intervals containing zero. These
results indicate favorable average routing
performance with partial correlation but do
not establish a consistent ranking among the
mixing configurations.

\begin{wrapfigure}{r}{0.45\textwidth}
\centering
\vspace{-20pt}
\includegraphics[width=\linewidth]
{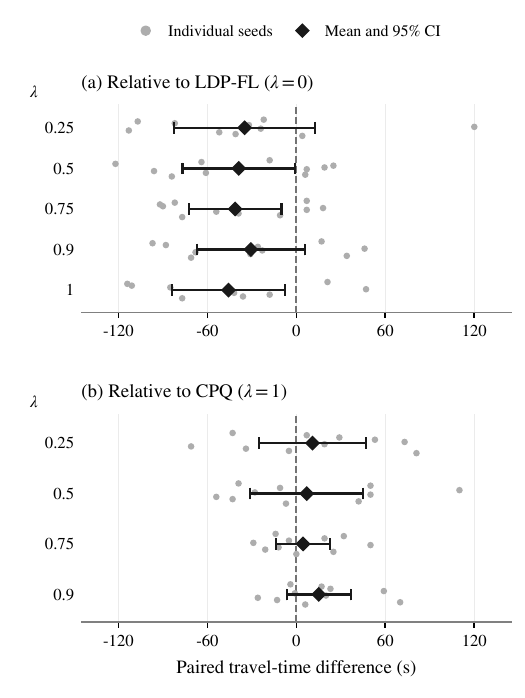}
\caption{Seed-paired travel-time differences for CPQ-$\lambda$.}
\label{fig:cpqlambda_paired}
\end{wrapfigure}

\subsection{Run-to-Run Variability}

Figure~\ref{fig:e1_seed_variability} provides a
seed-level view of the late-round routing performance
reported in Table~\ref{tab:privacy_utility}.
Each individual marker represents the mean evaluation
result over rounds 91--100 for one independent seed.
The larger markers and error bars indicate the
across-seed mean and sample standard deviation,
respectively. The no-privacy FLDQN reference is
shown separately using a dashed line and a shaded
standard-deviation band.

\begin{figure}[t]
\centering
\includegraphics[width=0.85\textwidth]
{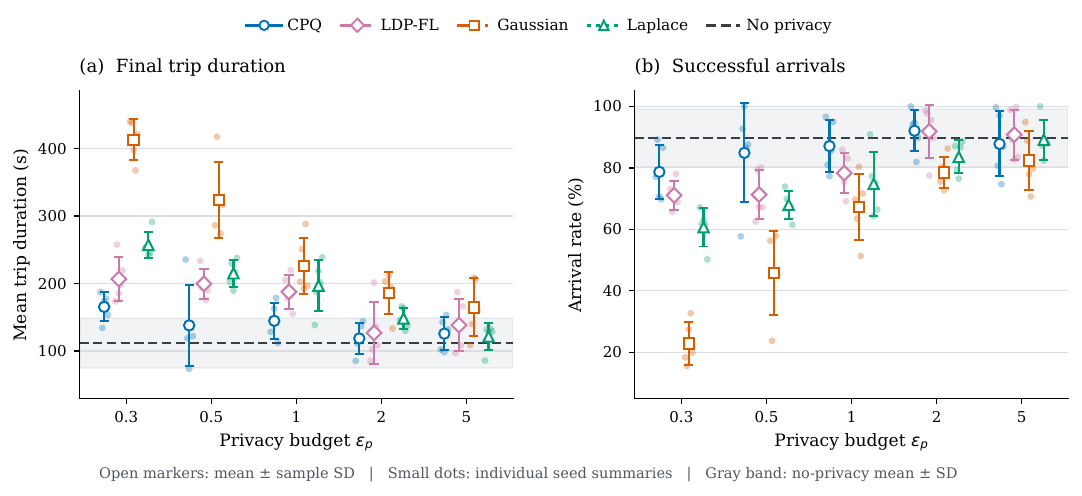}
\caption{Seed-level variability of late-round
travel time and arrival rate across privacy budgets.}
\label{fig:e1_seed_variability}
\end{figure}

\subsection{Secondary Routing Delay Measures}

To complement the travel-time and arrival-rate results,
Fig.~\ref{fig:e1_secondary_delays} reports two additional
SUMO metrics: time loss and waiting time. Time loss
represents the difference between experienced travel time
and free-flow travel time along the driven route, whereas
waiting time measures the accumulated duration during
which vehicle speed falls below $0.1$~m/s. Both metrics
are summarized over rounds 91--100 and averaged across
five independent runs.

At $\epsilon_p=0.3$, CPQ achieves a mean time loss
of $51.4$~s and waiting time of $20.2$~s, compared
with $86.7$~s and $42.2$~s for LDP-FL, respectively.
The Laplace and Gaussian mechanisms exhibit higher
values for both metrics under the same privacy budget.

As the privacy budget increases, the differences
among the mechanisms generally become smaller,
although their relative ordering is not uniform.
At $\epsilon_p=5$, Laplace achieves slightly lower
time loss and waiting time than CPQ. These results
provide additional context for the routing-efficiency
differences observed in the main evaluation.

\begin{figure}[t]
\centering
\includegraphics[width=0.85\textwidth]
{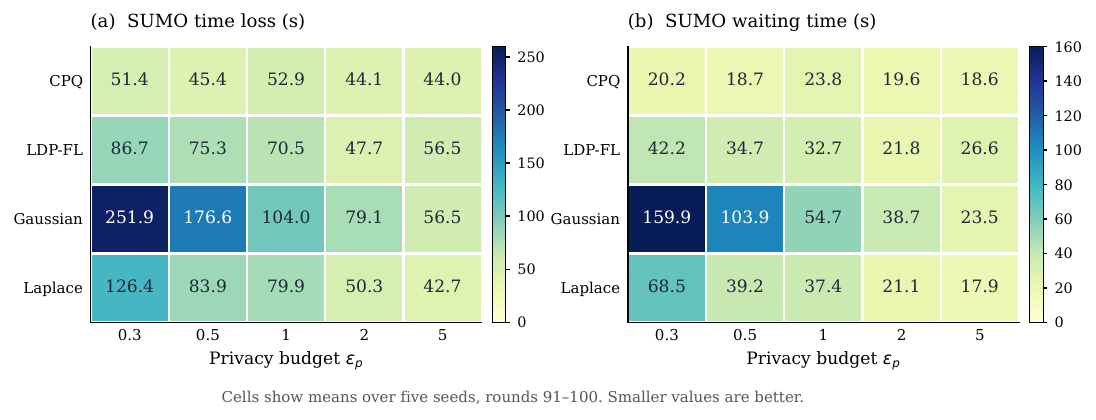}
\caption{Secondary routing-delay measures across
privacy mechanisms and privacy budgets.}
\label{fig:e1_secondary_delays}
\end{figure}

\section{Server-Hidden Pairing for CPQ and CPQ-$\lambda$}
\label{app:hidden-pairing}

This appendix details the authenticated group establishment and server-hidden pairing procedures underlying CPQ and CPQ-$\lambda$. An agreed vehicle group establishes a shared secret before federated training and subsequently derives coordinate- and round-dependent pair assignments and complementary pseudorandom values without exchanging pairing information during training. We first describe group establishment over the simulated V2I channel, followed by the local pairing construction and its security assumptions.

\noindent\textbf{Setup and trust assumptions.}
Before federated training, the vehicles establish
an agreed ordered participant list $\mathcal R$
and a group epoch identifier $e$. The first
vehicle in $\mathcal R$ serves as the designated
group coordinator $D$. Each vehicle possesses
registered post-quantum signing and ML-KEM-512
public keys, established through an authenticated
bootstrap channel.

The coordinator distributes a group secret $K_G$
to the other vehicles using authenticated,
ML-KEM-protected messages relayed by the RSU.
The RSU acts as a store-and-forward intermediary
and does not require access to the group secret
or the vehicles' KEM private keys. Each
recipient verifies the sender, intended
recipient, group epoch, and ordered participant
list before accepting the setup information.
Training begins only after the required group
confirmations have completed successfully.

The server-hidden pairing model assumes an
honest-but-curious RSU without access to $K_G$
or the information required to derive it.
Participating vehicles are assumed not to
disclose the group secret to the RSU;
compromised or colluding group members fall
outside this secrecy assumption. Group
membership remains fixed within an epoch,
and a changed participant group requires
a new group setup.

\noindent\textbf{Authenticated group establishment.}
The coordinator $D$ executes a three-stage
protocol using signed messages delivered
through the RSU:

\begin{enumerate}
\item \textbf{KEY.}
For each other participant $u\in\mathcal R$,
the coordinator encapsulates a shared secret
using the recipient's ML-KEM-512 public key.
A key derived from the encapsulated secret
through HKDF-SHA256 protects $K_G$ using
ChaCha20-Poly1305. The authenticated context
binds the recipient, epoch, and hash of the
ordered participant list.

\item \textbf{CONFIRM.}
After verifying and recovering $K_G$, each
recipient returns a signed key-confirmation
message containing an HMAC-SHA256 tag derived
from $K_{\rm check}$ and bound to its identity,
the epoch, and the participant-list hash.

\item \textbf{SETUP\_OK.}
Once all required confirmations have been
validated, the coordinator sends a signed
completion message to the group. Any failed
authentication, decryption, context validation,
or incomplete confirmation prevents successful
completion of the group setup.
\end{enumerate}

Because the vehicles may contact the RSU at
different times, setup messages are stored
and forwarded as recipients become available.
They traverse the same beacon-controlled,
RSU-scheduled, SR-ARQ V2I communication system
used for model exchange, during dedicated
setup drives preceding federated training.
No additional pairing messages are transmitted
during subsequent training rounds.

\begin{wrapfigure}{r}{0.50\textwidth}
\centering
\includegraphics[width=\linewidth]
{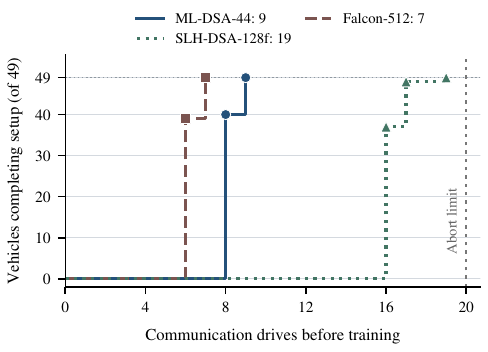}
\caption{Authenticated CPQ group-establishment progress
across post-quantum signature schemes. All 49 recipients
verified \texttt{SETUP\_OK} after 7 (Falcon-512),
9 (ML-DSA-44), or 19 (SLH-DSA-SHA2-128f)
communication drives. The vertical line marks
the 20-drive setup limit.}
\label{fig:e3_setup}
\end{wrapfigure}

\noindent\textbf{Group-establishment overhead.}
In the primary ML-DSA-44 mobility-coupled
evaluation, authenticated group establishment
completed in $3$--$9$ dedicated setup drives
across five independent CPQ runs, generating
$714.6$--$717.2$\,KB of total wire traffic
per run. The protocol required $246$--$247$
transfer attempts, including retransmissions
when necessary.

The signature-sensitivity experiment further
characterizes the communication cost of CPQ
initialization under three post-quantum
authentication schemes. All three configurations
successfully completed group establishment
for the same $49$ recipient vehicles.
Falcon-512 completed setup in $7$ drives
with $246$ successful transfers and
$280.4$\,KB of total wire traffic.
ML-DSA-44 completed setup in $9$ drives
with $246$ successful transfers and
$714.6$\,KB of traffic. In comparison,
SLH-DSA-SHA2-128f required $19$ setup
drives and $426$ transfer attempts to
complete the same $246$ successful message
deliveries, generating $7{,}462.2$\,KB
of wire traffic.

Figure~\ref{fig:e3_setup} illustrates
the progression of group-establishment
completion across the dedicated setup
drives. The shorter Falcon-512 signatures
permit more compact authenticated setup
messages, while the larger SLH-DSA
signatures increase wireless transmission
and retransmission costs. Consequently,
signature selection influences both the
communication overhead of model exchange
and the time required to establish CPQ's
shared randomness under intermittent
V2I connectivity.

Group establishment is performed once
per participant-group epoch, before
federated training. Its communication
cost is therefore distinct from the
per-round uplink and downlink traffic
reported in the main evaluation.

\begin{algorithm}[t]
\caption{Authenticated setup and server-hidden
CPQ shared-randomness generation}
\label{alg:hidden-setup}
\begin{algorithmic}[1]

\State \textbf{Group setup (once per epoch)}
\State Agree on ordered group $\mathcal R$
and epoch $e$; let $D$ be the first vehicle.
\State $h_{\mathcal R}\gets
\operatorname{SHA256}(\operatorname{Encode}(\mathcal R))$
\State $D$ obtains $K_G$ and distributes
signed, ML-KEM-protected \texttt{KEY} messages
through the RSU.
\State Recipients authenticate and recover $K_G$.
\State Derive $K_{\rm pair}$, $K_{\rm coin}$,
and $K_{\rm check}$ using HKDF-SHA256 with
purpose labels, $e$, and $h_{\mathcal R}$.
\State Recipients return signed
\texttt{CONFIRM} messages using $K_{\rm check}$.
\State $D$ verifies all confirmations and
distributes signed \texttt{SETUP\_OK};
abort on incomplete or invalid setup.

\State
\State \textbf{Local procedure}
\State \textsc{HiddenPairValue}
$(K_G,\mathcal R,e,i,j,t)$:
\State Use the epoch keys $K_{\rm pair}$
and $K_{\rm coin}$ derived during setup.
\For{each $u\in\mathcal R$}
  \State $s_u\gets
  \operatorname{ChaCha20Stream}
  (K_{\rm pair};t,u)[j]$
\EndFor
\State Sort $\mathcal R$ by increasing $s_u$,
breaking ties by original list position.
\State Let $k$ be the zero-based position of
vehicle $i$ in the sorted list.
\If{$|\mathcal R|$ is odd and
$k=|\mathcal R|-1$}
  \State \textbf{return} a private uniform value.
\EndIf
\State $\ell\gets\lfloor k/2\rfloor$
\State $V\gets
\operatorname{Float53}\!\left(
\operatorname{ChaCha20Stream}
(K_{\rm coin};t,\ell)[j]\right)$
\If{$k\bmod 2=0$}
  \State \textbf{return} $V$
\Else
  \State \textbf{return} $1-V$
\EndIf

\end{algorithmic}
\end{algorithm}

\noindent\textbf{Key and randomness derivation.}
The group secret $K_G$ is expanded using
HKDF-SHA256 into three purpose-separated,
32-byte keys: $K_{\rm pair}$ for pair
assignment, $K_{\rm coin}$ for shared-value
generation, and $K_{\rm check}$ for group
confirmation. The derivation incorporates
the group epoch $e$ and the SHA-256 hash
of the consistently encoded ordered
participant list $\mathcal R$.

For each round $t$, ChaCha20 streams indexed
by vehicle identity generate the sorting
values for all model coordinates, while
separate streams indexed by pair position
generate the corresponding shared values.
At each coordinate, the vehicles sort the
entire agreed group by the derived keys,
using original list positions to break ties.
Adjacent sorted vehicles form pairs and
use complementary values $V$ and $1-V$.
No pairing assignments or shared random
values are transmitted during training.

\noindent\textbf{Quantization properties and partial participation.}
Under the ideal model of independent,
continuous uniform shared values across
distinct pairs, the construction realizes
the complementary coupling analyzed in
Theorem~\ref{thm:cpq}. The population MSE
bound in Corollary~\ref{cor:cpq_pop}
additionally assumes that the aggregate
contains complete pairs.

In the contact-dependent implementation,
pair assignments are determined over
$\mathcal R$, regardless of which vehicles
successfully deliver their updates. If
only one member of a pair is included,
the pairwise error cancellation is not
realized. Consequently, the complete-pair
population bound does not directly apply
to arbitrary accepted-client sets.
The finite-precision implementation
requires separate consideration, as
discussed in Appendix~\ref{app:finite_precision}.


\section{Vehicle-Origin Inference Attack: Design and Evaluation}
\label{app:origin_attack}

\noindent\textbf{Experimental configuration.}
To complement the empirical privacy evaluation in
Section~\ref{subsec:exp_origin_attack}, we investigate
vehicle-origin inference using the mobility-coupled
SUMO and ns-3 framework. The road network contains
30 directed edges, with 50 learning vehicles distributed
equally across 10 possible starting edges:
E0, E1, E3, E11, E8, -E1, -E11, -E8, -E13, and E14.
Each vehicle retains its assigned origin throughout
training, while all vehicles share the publicly known
destination E15.

The DQN architecture consists of 94 state inputs,
two hidden layers with 150 and 100 neurons, and
three output actions, resulting in 29,653 trainable
parameters. Training proceeds for 100 federated rounds
with 10 local episodes per round. The learning rate
is $10^{-3}$, the mini-batch size is 32, and the
exploration rate decreases from 1.0 to 0.05 over
80 rounds. Three independent random seeds are
evaluated for each method.

The communication environment employs IEEE~802.11p
at 6 Mb/s over a 10 MHz channel, with 20 dBm
transmission power, log-distance path loss
(exponent 2.7), and Nakagami fading. The RSU is
located at junction J3. Vehicles discover the RSU
through beacon transmissions and upload their
updates only during available communication
opportunities. The origins are selected such that
their shortest routes pass the RSU at comparable
distances of 120--143 m, reducing differences
in nominal RSU proximity across origin classes.
Background traffic is generated at an average
interval of 1.75 s following a 200 s warm-up,
while learning-vehicle departures are randomized
within a 100 s window in each episode.

Model uploads are authenticated using ML-DSA-44,
and the CPQ group secret is established using
ML-KEM. The evaluation compares full-precision
model exchange without privacy against CPQ and
independent LDP-FL quantization at
$\epsilon_p=0.3$. Both private mechanisms clip
local model updates relative to the global model
using $r=0.02$ and encode each parameter with
one bit. The server retains only successfully
received and authenticated uploads.

\noindent\textbf{Adversary and observation model.}
We consider an honest-but-curious RSU that
follows the communication and aggregation
protocol while attempting to infer each vehicle's
private starting edge from received model updates.
The attacker knows the road network, learning
architecture, privacy mechanism, privacy budget,
clipping radius, and global model at each round.
It also observes individual successfully received
uploads and can associate observations belonging
to the same pseudonymous vehicle across rounds.

The attacker does not observe private vehicle
trajectories or possess the CPQ group secret.
Vehicle identifiers carry no origin information,
and vehicle order is randomized before evaluating
each attack. The attack objective is to identify
one of 10 possible origins for each target vehicle,
corresponding to a random-guessing accuracy of
$10\%$. The attacker uses only the updates
actually available at the RSU and does not
assume complete federated participation.

\noindent\textbf{Attack feature construction.}
The attacker extracts information from received
updates by evaluating their influence on the
learned Q-function. Specifically, 60 fixed probe
states are constructed from the public road
network, covering each of the 30 directed edges
under two traffic conditions representing empty
and busy traffic. These states do not depend on
the private trajectories of target vehicles.

For each received update, the attacker evaluates
the difference between the uploaded model's
Q-values and those of the corresponding global
model at every probe state:

\begin{equation}
\Delta Q_i^{(t)}(s,a)
=
Q_i^{(t)}(s,a)-Q_g^{(t)}(s,a),
\label{eq:origin_attack_features}
\end{equation}

where $Q_i^{(t)}$ and $Q_g^{(t)}$ denote the
Q-functions associated with the received client
model and the global model at round $t$,
respectively. With 60 probe states and three
actions, each update produces a 180-dimensional
feature vector.

This representation captures the changes in
routing decisions induced by local training
while providing comparable features across
training runs. Alternative representations,
including selected first-layer weights and
principal-component features of complete
updates, are also evaluated.

\noindent\textbf{Temporal evidence accumulation.}
Because each vehicle retains the same origin
throughout training, the attacker combines
evidence across multiple federated rounds.
We evaluate two inference strategies.

In the first strategy, an L2-regularized logistic
regression classifier estimates the probability
of each origin from the features of an individual
received update. Log-probabilities are accumulated
over the observed rounds, and the origin with the
largest cumulative score is selected:

\begin{equation}
\hat{o}_i^{(k)}
=
\arg\max_{o\in\mathcal{O}}
\sum_{t\in\mathcal{T}_i(k)}
\log P(o\mid \mathbf{x}_i^{(t)}),
\label{eq:origin_attack_prediction}
\end{equation}

where $\mathcal{O}$ denotes the set of candidate
origins, $\mathbf{x}_i^{(t)}$ is the feature
vector extracted from vehicle $i$ at round $t$,
and $\mathcal{T}_i(k)$ contains the rounds
up to $k$ in which the RSU successfully
received an update from that vehicle.

In the second strategy, the received updates
of each vehicle are averaged across the
observation window, and a classifier trained
on averaged updates predicts its origin.
Both strategies account for missing uploads
by using only successfully received observations.
Vehicles without any received updates in the
observation window are assigned random guesses.

Attack accuracy is evaluated after
$k\in\{1,2,5,10,20,50,100\}$ observed rounds.

\noindent\textbf{Attack 1: Known-origin vehicles.}
In the first scenario, the RSU knows the
starting edge of one vehicle from each
origin class, corresponding to 10 known
vehicles among the 50 participants.
These vehicles are ordinary honest clients,
and their cryptographic secrets are not
available to the attacker.

The attacker trains on the labeled updates
from these vehicles and evaluates origin
inference on the remaining 40 vehicles.
It uses the Q-function probe features and
the per-round probability accumulation
strategy in~\eqref{eq:origin_attack_prediction}.
The classifier employs L2-regularized
logistic regression with $C=0.1$,
fixed using separate pilot experiments.
Five random selections of the known-origin
vehicles are evaluated for each simulation run.

\noindent\textbf{Attack 2: Shadow-run inference.}
In the second scenario, the RSU has no
labeled vehicles in the target deployment.
Instead, it generates independent simulations
of the same road network and learning
configuration, using their known origin
labels to train an inference classifier.

We employ leave-one-seed-out evaluation.
For each method, two simulation seeds are
used for attacker training, while the third
seed is held out for evaluation. This process
is repeated with each seed serving as the
target once. Feature representation,
classifier regularization, and temporal
aggregation strategy are selected using
only training-seed data. The target seed
is never used for classifier training or
model selection.

The attacker then predicts the origins
of all 50 vehicles in the held-out
deployment, providing an evaluation
without labeled vehicles in the target run.

\noindent\textbf{Evaluation and validation.}
Attack effectiveness is measured as the
percentage of target vehicles whose
origins are correctly identified after
observing $k$ federated rounds. The
evaluation includes three independent
simulation seeds for each privacy method.

In addition to reporting seed-level
variability, 95\% confidence intervals
are estimated through a cluster bootstrap
that resamples simulation seeds and vehicles.
Shuffled-label controls are also evaluated,
using five random permutations of origin
labels to verify that the attack does not
exploit artificial label associations.

To verify consistency between the attack
observations and federated training, the
server records its global model and all
successfully received individual uploads.
Reconstructing and aggregating these
logged updates reproduces the server's
next global model with a maximum difference
below $10^{-6}$ across all nine runs.
The shuffled-label controls achieve
approximately $8$--$11\%$ accuracy,
consistent with random guessing.

The mean percentages of vehicle-round
updates successfully received by the
RSU are $47\%$ for the no-privacy
baseline, $61\%$ for CPQ, and $62\%$
for LDP-FL. Thus, the attack results
reflect the actual observations available
under mobility-dependent communication.

\noindent\textbf{Origin inference results.}
Table~\ref{tab:origin_attack_results}
summarizes the inference accuracy of
the two attacks at three representative
observation windows. Values are reported
as mean $\pm$ sample standard deviation
across three seeds, consistent with the
reporting convention used in the main paper.
The statistics are computed from the
reported rounded seed-level accuracies.

\begin{wraptable}{r}{0.54\textwidth}
\centering
\caption{Vehicle-origin inference accuracy (mean $\pm$ SD).}
\label{tab:origin_attack_results}
\setlength{\tabcolsep}{3pt}
\renewcommand{\arraystretch}{1.10}
\begin{tabular*}{\linewidth}{@{\extracolsep{\fill}}lccc@{}}
\toprule
\textbf{Rounds}
& \textbf{No privacy}
& \textbf{CPQ}
& \textbf{LDP-FL} \\
\midrule
\multicolumn{4}{c}{\textbf{A. Known-origin attack}} \\
\cmidrule(lr){1-4}
10  & $33.7 \pm 4.5$ & $11.3 \pm 0.6$ & $9.3 \pm 2.5$ \\
50  & $63.7 \pm 4.5$ & $11.7 \pm 1.5$ & $10.0 \pm 1.0$ \\
100 & $68.7 \pm 6.8$ & $13.0 \pm 2.6$ & $12.0 \pm 3.5$ \\
\midrule
\multicolumn{4}{c}{\textbf{B. Shadow-run attack}} \\
\cmidrule(lr){1-4}
10  & $46.0 \pm 7.2$ & $13.3 \pm 4.2$ & $11.3 \pm 2.3$ \\
50  & $76.7 \pm 6.1$ & $14.7 \pm 4.2$ & $10.7 \pm 5.0$ \\
100 & $91.3 \pm 1.2$ & $30.0 \pm 10.6$ & $22.0 \pm 4.0$ \\
\bottomrule
\end{tabular*}
\end{wraptable}

The known-origin attack reaches
$68.7\%$ mean accuracy after 100
rounds without privacy, compared
with $13.0\%$ for CPQ and $12.0\%$
for LDP-FL. Both private mechanisms
substantially reduce inference
accuracy across the evaluated
observation windows.

Under the shadow-run attack,
which requires no known-origin
vehicles in the target deployment,
the no-privacy baseline reaches
$91.3\%$ mean accuracy after
100 rounds. In comparison,
CPQ and LDP-FL achieve
$30.0\%$ and $22.0\%$,
respectively. The stronger
late-round inference results for
the private mechanisms arise
from the strategy that averages
received updates across rounds.

These findings show that the
unprotected federated updates
contain information that can be
exploited to infer private
vehicle origins. Both CPQ and
LDP-FL substantially reduce
the effectiveness of the evaluated
attacks, providing application-level
empirical evidence that complements
the theoretical privacy analysis.
Together with the routing and
communication evaluations, the
results further characterize the
privacy--utility--communication
trade-offs of private quantization
in vehicular federated learning.

\textbf{Ethical Considerations:} The origin-inference attacks were conducted entirely within a simulated vehicular environment using synthetically generated mobility and federated learning data. No real driver information, personal mobility records, or operational vehicular infrastructure were involved. The attacks were implemented solely to evaluate potential privacy vulnerabilities and assess the effectiveness of the proposed privacy-preserving mechanism.

\end{document}